\let\oldwedge\wedge
\renewcommand\wedge{\mathlarger{\mathlarger{\oldwedge}}}
\let\oldvee\vee
\renewcommand\vee{\mathlarger{\mathlarger{\oldvee}}}
\title{\LARGE\bf Nonlinear Meta-learning Can Guarantee Faster Rates\vspace{1em}}
\author{ 
Dimitri Meunier\thanks{Equal Contribution.} $^,$\thanks{Gatsby Computational Neuroscience Unit, University College London, London.} \\ {\footnotesize\em dimitri.meunier.21@ucl.ac.uk} \and  Zhu Li\footnotemark[1] $^{,}$\thanks{Department of Mathematics, Imperial College London, London.} \\ {\footnotesize\em michael.lzy2013@gmail.com} \\ \and Arthur Gretton\footnotemark[2] \\ {\footnotesize\em arthur.gretton@gmail.com} \\ \and Samory Kpotufe\thanks{Department of Statistics, Columbia University, New York.}\\ {\footnotesize\em samory@columbia.edu} \\ $ $ \\  
}
\date{}
\begin{document}
\maketitle
\begin{abstract}
\noindent Many recent theoretical works on \emph{meta-learning} aim to achieve guarantees in leveraging similar representational structures from related tasks towards simplifying a target task. The main aim of theoretical guarantees on the subject is to establish the extent to which convergence rates---in learning a common representation---\emph{may scale with the number $N$ of tasks} (as well as the number of samples per task). 
First steps in this setting demonstrate this property when both the shared representation amongst tasks, and task-specific regression functions, are linear. This linear setting readily reveals the benefits of aggregating tasks, e.g., via averaging arguments. In practice, however, the representation is often highly nonlinear,
introducing nontrivial biases in each task that cannot easily be averaged out as in the linear case. In the present work, we derive theoretical guarantees for meta-learning with nonlinear representations. In particular, assuming the shared nonlinearity maps to an infinite dimensional reproducing kernel Hilbert space, we show that additional biases can be mitigated with careful regularization that leverages the smoothness of task-specific regression functions, yielding improved rates that scale with the number of tasks as desired.   
\end{abstract}

\section{Introduction}
Meta-learning refers colloquially to the problem of inferring a deeper internal structure---beyond a specific task at hand, e.g., a regression task---that may be leveraged towards speeding up other similar tasks. This arises for instance in practice with neural networks where, in pre-training, multiple apparently dissimilar tasks may be aggregated to learn a \emph{representation} that enables \emph{faster} training on unseen target tasks (i.e., requiring relatively fewer target data). 

Notwithstanding the popularity of meta-learning in practice, the theoretical understanding and proper formalism for this setting is still in its early stages. We consider a common approach in the context of regression, which posits an unknown target-task function of the form $f(x) = g(\Gamma(x))$ and $N$ unknown related task-functions of the form $f_i(x) = g_i(\Gamma(x)), i \in [N]$, i.e., all sharing a common but unknown \emph{representation} $\Gamma(x)$; it is assumed that all {\it link functions} $g$ and $\{g_i\}_{i=1}^N$ are {\it simpler} --- for instance linear or at least lower-dimensional --- than the corresponding regression functions $f$ and $\{f_i\}_{i=1}^N$. As all these objects are a priori unknown, recent research has aimed to establish how the target regression problem may benefit from the $N$ related tasks. In particular, if $\Gamma(x)$ may be approximated by some $\hat \Gamma(x)$ at a rate that scales with $N$ (and the number $n$ of samples per task), then presumably, the target regression function $f$ may be subsequently learned as $\hat g(\hat \Gamma(x))$ at a faster rate commensurate with the {\it simplicity} of $g$. 

Recent theoretical results \citep{kong2020meta,du2021fewshot,tripuraneni2021provable,tian2023learning,niu2024collaborative} have provided significant new insights in this area by considering an idealized linear setting where $x\in \R^d$, $g$ and $\{g_i\}_{i=1}^N$ are linear functions in $\R^s (s \ll d$), and $\Gamma(x)$ denotes a linear projection to $\R^s$. These results show that $\Gamma$ can be learned at a rate of $\tilde O(\sqrt{ds/nN})$---under suitable subspace-distance measures, and where $\tilde O$ omits log terms ---which then allows for the target task to be learned at a rate of $\tilde O(\sqrt{s/n})\ll \tilde O(\sqrt{d/n})$. Here, it is emphasized that the representation learning rate of $\tilde O(\sqrt{ds/nN})$ scales with the number of tasks $N$ rather than just with $n$, establishing the benefit of related tasks in improving the target rate.

In practice, however, the representation $\Gamma$ is in general a nonlinear transformation of $x$, as when \emph{reproducing kernel Hilbert space} (RKHS) or neural net representations are used. While the importance of the nonlinear setting is well understood, fewer works have so far addressed this more challenging scenario \citep{maurer2016benefit,du2021fewshot}.  

In the present work, we consider the case where $\Gamma$ maps $x$, \emph{nonlinearly}, into an RKHS $\cal H$, possibly of infinite dimension; more precisely, $\Gamma$ {\it projects} the feature maps $K(x, \cdot)$ into an $s$-dimensional subspace $\cH_s$ of $\cH$. The link functions $g$ and $\{g_i\}_{i=1}^N$ are assumed to be {\it simple} in the sense that they are linear in $\Gamma$, hence we also have that $f$ and $\{f_i\}_{i=1}^N$ belong to $\cal H$. In other words, if we knew $\Gamma$ (or $\cH_s = \cH_s(\Gamma)$), the target problem would reduce to linear regression in $\R^s,$ and therefore would admit ($L_2$) convergence rates of the form $\tilde O(\sqrt{s/n})$, significantly faster than usual nonparametric rates for regression over infinite dimensional $\cal H$ (see discussion after Theorem \ref{th:inference_bound} and Corollary~\ref{cor:learning_rate}). As in the case of linear $\Gamma$ discussed above, this improved rate will turn out to require estimating $\Gamma$ at a fast rate scaling in both $N$ and $n$. 

When moving from linear to nonlinear, nonparametric $\Gamma$, a significant new challenge arises due to the bias inherent in the learning procedure. For a high-level intuition, note that a main appeal of meta-learning is that the aggregate of $N$ tasks should help reduce {\it variance} over using a single task, by carefully combining task-specific statistics computed on each of the $N$ samples; {\it crucially, such statistics ought to introduce little bias, since bias cannot be averaged out}. Task-specific biases are harder to avoid in nonparametric settings, however, if we wish to avoid overfitting task-specific statistics. This is in contrast to the case of linear projections in $\mathbb{R}^d$, where we have unbiased statistics with no overfitting (one may think e.g., of OLS). 

Fortunately, as we show in this work, nonlinear meta-learning remains possible with rate guarantees improving in both $N$ and $n$. Our approach relies on the following initial fact: if the links $\{g_i\}_{i=1}^N$ are linear in $\cH$, it easily follows that the individual regression functions $\{f_i\}_{i=1}^N$ all live in the span $\cH_s\subset \cH$ of the shared representation $\Gamma$ (see setup Section \ref{setup}). Thus, under a {\it richness assumption} where $\{f_i\}_{i=1}^N$ span $\cH_s$ \citep[extending usual assumptions in the linear case, e.g. of][]{du2021fewshot}, we may estimate $\cH_s$ by estimating the span of regularized estimates $\hat f_i$ of $f_i$. In order to guarantee fast rates that scale with $N$ and $n$, we need to \emph{under-regularize}, i.e., overfit task-specific estimates $\{\hat f_i\}_{i=1}^N$ to suitably decrease bias, at the cost of increased task-specific (hence overall) variance. Such under-regularization necessarily implies suboptimal regression in each task, but improves estimation of the representation defined by $\Gamma$. 

We demonstrate that these trade-offs may be satisfied, depending on the {\it smoothness} level of regression functions $\{f_i\}_{i=1}^N$, as captured by complementary regularity conditions on $\{f_i\}_{i=1}^N$ and the interaction between the kernel and data distributions $\{\mu_i\}_{i=1}^N$ defined on $\cX \times \R$ (see Section \ref{sec:regularity}), where we view $\cX$ and $\R$ as the input and output spaces, respectively. In the process, some interesting subtleties emerge: meta-learning benefits from {\it regularity beyond usual saturation points} that were established in traditional RKHS regression (please refer to Remark~\ref{rem:beyondregression}). This further illustrates how the meta-learning goal of estimating $\Gamma$ inherently differs from regression, even when relying on regression estimates. This is discussed in further detail in Section \ref{ta}.

Fast rates scaling in $N$ and $n$ for estimating $\cH_s = \cH_s(\Gamma)$ from $\operatorname{span}\{\hat f_i\}$ are established in Theorem \ref{th:pre_training}. This requires, among other tools, {a basic variation on Wedin’s $\sin-\Theta$ Theorem \cite{wedin1972perturbation} for infinite dimensional operators} (Proposition~\ref{prop:davis_kahan_meta}). As a consequence, we show that by operating in $\hat \cH_s$ (the estimation of $\cH_s$) for the target regression problem, we can achieve \emph{parametric} target $L_2$ rates of $\tilde O(\sqrt{s/n})$ (see Corollary~\ref{cor:learning_rate}), which are much faster than the usual nonparametric rates for $f\in \cH$. This last step requires us to establish closeness of projections onto the estimated $\hat \cH_s$ vs $\cH_s$. Moreover, when the feature map $K(x,\cdot)$ is finite dimensional, our results (see Example~\ref{ex:finite}) recover the learning rates obtained in earlier studies (e.g.~\cite{du2021fewshot,tripuraneni2021provable}), where $\Gamma$ is a linear projection.

Finally, although much of the analysis and involved operations pertain to infinite dimensional $\cH$ space, the entire approach can be instantiated in input data space via suitable representation theorems (see Section~\ref{sec:algo}). {This realization supports our theoretical findings with complementary experiments on simulated data, as detailed in Section~\ref{sec:exp}.}

\subsection*{Related Work}
Meta-learning is an umbrella term for a rich variety of learning settings, where we are provided with a set of distributions pertaining to relevant training tasks, and obtain a functional to speed learning on a target task. In this work, we focus on the case where this functional defines {\it a representation $\Gamma$ of the data}, and where the target regression function is of the form $f(x) = g(\Gamma(x))$. 
We begin this section with the closest work to our setting (namely linear and nonlinear projections $\Gamma$), then briefly touch on alternative meta-learning definitions for completeness (although these will be outside the scope of the present study).

We start with works in the {\em linear setting}, which study generalization error where $\Gamma$ is a learned linear projection  $\mathbb{R}^d \rightarrow \mathbb{R}^s$, obtained from $N$ training tasks \citep{kong2020meta,du2021fewshot,tripuraneni2021provable,thekumparampil2021statistically,konobeev2021distribution,tian2023learning,yukselfirst,niu2024collaborative}. \cite{tripuraneni2021provable} study low-dimensional linear representation learning under the assumption of isotropic inputs for all tasks, and obtain the learning rate of $\tilde O(\sqrt{ds^2/nN}+\sqrt{s/n})$ on the target task.  \cite{du2021fewshot} achieve a similar rate while relaxing the isotropic assumption with a different algorithm. In the linear representation case, they obtain an $\tilde O(\sqrt{ds/nN}+\sqrt{s/n})$ rate. \cite{kong2020meta} study a somewhat different scenario, where the number of samples per task may differ (and is smaller than the dimension $d$ of the data); the aim is to determine how many tasks must be undertaken in order to achieve consistency. 
The work of \cite{kong2020meta} is most closely related to our work, as our procedure, after linearization in $\cH$, is quite similar to their procedure in $\R^d$, notably in its reliance on outer-products of regression estimates. However, many technical issues arise in the infinite dimensional setting considered here, both on the algorithmic and analytical fronts. These are detailed in Remark \ref{rem:complinear} of Section \ref{sec:meta_learning}. {\cite{thekumparampil2021statistically} consider an alternate gradient descent algorithm, where they jointly minimize the within task loss and the aggregate loss across all tasks. Under the assumption that the data is Gaussian with the same variance across all tasks, they obtain the learning rate of $\tilde O(\sqrt{ds/nN}+\sqrt{s/n})$.} \cite{konobeev2021distribution} consider a distribution dependent analysis of meta-learning in the setting of fixed design finite dimensional linear regression, with Gaussian noise and a Gaussian parameter distribution. In the case where the covariance matrix of the parameter is assumed to be known, the authors provide  matching upper and lower bounds, which demonstrates a precise characterization of the benefit of meta-learning. While there is no theoretical analysis in the case where the covariance matrix is unknown, the authors provide a detailed description of how the EM algorithm can be employed to solve the meta-learning problem. \citet{tian2023learning} consider a generalization where tasks share similar but not identical linear representations and account for outlier tasks. \cite{niu2024collaborative,yukselfirst} also study the linear representation setting and provide refined theoretical analysis on learning the common representation.

We next consider the case where the representation $\Gamma$ is nonlinear. \cite{maurer2016benefit}  evaluate the performance of a method for learning a nonlinear representation $\Gamma\in \mathcal{F}$ which is  $s$-dimensional, addressing in particular the case of a projection onto a subspace of a reproducing kernel Hilbert space. They focus on a {\em learning to learn} (LTL) scenario, where  excess risk is evaluated   {\em in expectation over a distribution of tasks} \citep[Section 2.2][]{maurer2016benefit}: we emphasize that this is a fundamentally different objective to the performance on a specific novel test task, as in our setting. The loss they propose to minimize \citep[Eq.~1][]{maurer2016benefit} is an average over $N$ training tasks, where each task involves a different linear weighting of the common subspace projection (the work does not propose an algorithm, but concerns itself solely with the statistical analysis). Theorem 5 in \cite{maurer2016benefit} shows that for an RKHS subspace projection, one can achieve an LTL excess risk for Lipschitz losses (in expectation over the task distribution) that decreases as $\tilde O(s/\sqrt{N} + \sqrt{s/n})$. This requires $N\geq n$ in order to approach the parametric rate. \citet[note 2, p. 8]{maurer2016benefit} demonstrate that the factor $1/\sqrt{N}$ is an unavoidable consequence of the LTL setting.

\cite{du2021fewshot} consider the case of nonlinear representation learning, using the same training loss as \citet[Eq.~1]{maurer2016benefit}, but with performance evaluation on a single test task, as in our setting. Again defining $\Gamma\in \mathcal{F}$, they obtain a learning rate of $\tilde O(\mathcal{G}(\mathcal{F})/\sqrt{nN}+\sqrt{s/n})$ for the excess risk \citep[Theorem 5.1]{du2021fewshot}, where $\mathcal{G}(\cdot)$ measures the Gaussian width of  $\mathcal{F}$ (a data-dependent complexity measure, and consequently a function of $n, N$; see e.g., \cite{Maurer2014suprema}, for further details). The instantiation of $\mathcal{G}(\mathcal{F})$  for specific instances of $\mathcal{F}$   was not pursued further in this work, however \cite{Maurer2014suprema} shows that the Gaussian width is of order $\sqrt {nN}$ in $n$ and $N$, in the case where $\mathcal{F}$ is a projection onto a subspace of an RKHS with Lipschitz kernel.

The problem of learning a ``meaningful'' low-dimensional representation $\Gamma$ has also been addressed in the field of sufficient dimension reduction. \cite{fukumizu2009kernel,li2009dimension,yin2008successive} give different criteria for obtaining such $\Gamma$ and establishing consistency, however they do not address the risk analysis of downstream learning algorithms that employ  $\Gamma$.
\cite{li2011principal} introduce the so-called  principal support vector machine approach for learning both linear and nonlinear $\Gamma$. The idea is to learn a set of support vector regression functions, each mapping to different ``features'' of the output $Y$ (e.g., restrictions to intervals, nonlinear transforms). The estimator $\hat{\Gamma}$ of $\Gamma$ is then constructed from the principal components of these solutions. In the linear setting, the authors provide the $\sqrt{n}$-consistency of $\hat{\Gamma}$. \cite{wu2007regularized} provide a kernelization of sliced inverse regression, which yields a subspace $\Gamma$ in an RKHS (the so-called effective dimension reduction space). Consistency of the projection by $\hat{\Gamma}$ of an RKHS feature map $\phi (x)$ is established; and an $O(n^{-1/4})$ convergence rate is obtained, under the assumption that all $\Gamma$ components can be expressed in terms of a finite number of covariance operator eigenfunctions. The learning risk of downstream estimators using $\hat{\Gamma}$ remains to be established, however.

Outside of the regression setting, meta-learning has been studied for classification:  \cite{galanti2022generalization} investigate the generalization error in this setting, with the representation $\Gamma$ being a fully connected ReLU neural net of depth $Q$, common to all tasks. {\cite{aliakbarpour2024metalearning} study the sample complexity per task when the task-specific classifiers are halfspaces in $\R^{s}$ and the samples per task are extremely low.} Finally, there are analyses for other meta-learning schemes such as domain adaption \cite{ben2006analysis,mansour2009domain}, domain generalization \cite{blanchard2021domain} and covariate shift \cite{ma2023optimally}, as well as alternative gradient-based approaches to refine algorithms on novel test domains, e.g.,  \cite{denevi2019learning,finn2017model,finn2019online,khodak2019adaptive, meunier2021meta}. 

\section{Background \& Notations}
\label{bn}

{\bf Function Spaces \& Basic Operators.} Let $\mu$ be a probability measure on $\cX \times \R$, $\mu_{\cX}$ denotes the marginal distribution of $\mu$ on $\cX$, and $\mu(\cdot|x)$ the conditional distribution on $\R$ given $x \in \cX$. Let $K: \cX \times \cX \rightarrow \mathbb{R}$ be a symmetric and positive definite kernel function and $\mathcal{H}$ be a vector space of $\cX \rightarrow \mathbb{R}$ functions, endowed with a Hilbert space structure via an inner product $\langle\cdot, \cdot\rangle_{\mathcal{H}}$. $K$ is a reproducing kernel of $\mathcal{H}$ if and only if:~1.~$\forall x \in \cX, \phi(x)\doteq K(\cdot,x) \in \mathcal{H};~2.~\forall x \in \cX$ and $\forall f \in \mathcal{H}, f(x)=\left\langle f, \phi(x) \right\rangle_{\mathcal{H}}$. A space $\mathcal{H}$ which possesses a reproducing kernel is called a reproducing kernel Hilbert space (RKHS) \citep{berlinet2011reproducing}. $L_2(\cX,\mu_{\cX})$, abbreviated $L_2(\mu)$, denotes the Hilbert space of square-integrable functions with respect to (w.r.t.) $\mu_{\cX}$.\footnote{To simplify notations, when we  integrate over $\mu_{\cX}$ a function defined on $\cX$, we use $\E_{\mu}$ instead of $\E_{\mu_{\cX}}$.}

$\|A\|$ and $\|A\|_{HS}$ denote respectively the operator and  Hilbert-Schmidt norm of a linear operator $A$ on $\cH$. For $f,g \in \cH$, $g \otimes f\doteq\langle f, \cdot\rangle_{\cH} g$ is the generalization of the Euclidean outer product. The {covariance operator} is defined as $\Sigma \doteq \E_{X \sim \mu}[K(X,\cdot) \otimes K(X,\cdot)]$. 

We require some standard technical assumptions on the previously defined RKHS and kernel: 1. $\mathcal{H}$ is separable; this is satisfied if $\cX$ is a Polish space and $K$ is continuous \citep[Lemma 4.33][]{steinwart2008support}; 2. $\phi(x)$ is measurable for all $x \in \cX$; 3. $\sup_{x,x' \in \cX}K(x,x') \doteq \kappa^2 < \infty$. Note that those assumptions are not restrictive in practice, as well-known kernels such as the Gaussian, Laplacian and Mat{\'e}rn kernels satisfy all of the above assumptions on $\mathbb{R}^d$ \citep{sriperumbudur2011universality}.

{\bf Matrix Notation of Basic Operators.} For a set of vectors $\{u_1,\ldots, u_n\} \in \cH$, $U\doteq[u_1, \ldots, u_n]$ denotes the operator with the vectors as ``columns'', formally $U: \R^n \to \cH, \al \mapsto \sum_{i=1}^n u_i\al_i$. Its adjoint is $U^*: \cH \to \Rn, u \mapsto (\langle u_i, u \rangle_{\cH})_{i=1}^n$. 

{\bf Kernel Ridge Regression \& Regularization.} Given a data set $D=\left\{\left(x_i, y_i\right)\right\}_{i=1}^n$ independently sampled from $\mu$, kernel ridge regression aims to estimate the {\it regression function} $f_\mu=\E_{\mu}\left[Y \mid X \right]$, with the following kernel-based regularized least-squares procedure
\begin{equation} \label{eq:krr}
\hat f_{\lambda}=\underset{f \in \cH}{\operatorname{argmin}}\left\{\frac{1}{n} \sum_{i=1}^n\left(y_i-f\left(x_i\right)\right)^2 + \lambda\|f\|_{\cH}^2\right\},
\end{equation}
with $\lambda>0$ the regularization parameter. $\cR_{\mu}(f) \doteq \E_{\mu}\left[(Y - f(X))^2\right]$ is the squared expected risk and the excess risk is given by $\cE_{\mu}(f) \doteq \sqrt{\cR_{\mu}(f) - \cR_{\mu}(f_\mu)} = \E_{\mu}\left[(f(X) - f_\mu(X))^2\right]^{1/2}.$ We also introduce the population version of $\hat f_{\la}$ as
\begin{equation} \label{eq:krr_pop}
f_{\lambda}=\underset{f \in \cH}{\operatorname{argmin}}\left\{\E_{\mu}\left[(Y - f(X))^2\right] + \lambda\|f\|_{\cH}^2\right\}.
\end{equation}
The normed difference $\hat f_{\lambda}-f_{\lambda}$ is referred to as the estimation error and is a central object for the study of kernel ridge regression (see e.g., \cite{fischer2020sobolev}).

{\bf Further Notations.} For $n,m \in \mathbb{N}^*, n \leq m, [n] \doteq \{1, \ldots, n\}, [n,m] \doteq \{n, \ldots, m\}$. For two real numbers $a$ and $b$, we denote $a \vee b =\max\{a,b\}$ and $a \wedge b = \min\{a,b\}$.

\section{Nonlinear Meta-learning} \label{sec:meta_learning}

\subsection{Population Set-up}\label{setup}
We consider a setting with $N$ source distributions $\{\mu_i\}_{i \in [N]}$ defined on $\cX \times \R$, with corresponding regression functions of the form $f_i(x) = g_i (\Gamma (x))$. We are interested in minimizing the excess risk for a target distribution $\mu_T$, with regression function $f_T(x) = g_T(\Gamma(x))$. In the mostly common linear case, it is assumed that $\Gamma$ {\it projects} into a subspace of $\R^d = \cX$. However, in this manuscript, we assume that $\Gamma$ is a projection of nonlinear feature maps in an infinite dimensional space. 

\begin{assumption} \label{asst:meta}
We let $\Gamma: \cX \mapsto \cH$ be a map from $x\in \cX$ to a subspace $\cH_s$ of dimension $s \geq 1$ of an RKHS $\cH$ as follows: given a projection operator $P$ onto $\cH_s$, $\Gamma (x) \doteq P K(x, \cdot)$. Furthermore, all link functions $g_T$, $\{g_i\}_{i=1}^N$ are assumed linear $\cH \mapsto \R$, i.e., $\exists w_T, w_i \in \cH_s$ s.t. 
$g_T(\Gamma(x)) = \langle w_T, \Gamma(x) \rangle_{\cH}$, and $g_i(\Gamma(x)) = \langle w_i, \Gamma(x) \rangle_{\cH}$. 
\end{assumption}

\begin{rem}
Given an orthonormal basis (ONB) $V = [v_1, \ldots, v_s]$ of $\cH_s$, we may rewrite 
$g_T(\Gamma(x)) = \alpha_T^\top V^*K(x, \cdot)$, i.e., for $\alpha_T\in \R^s$, for an $s$-dimensional (nonlinear) representation $V^*\Gamma(x)= V^* K(x, \cdot)$ of $x$. The same is true for $\{g_i\}_{i=1}^N$ with respective $\{\alpha_i\}_{i=1}^N$. The representations are non-unique, although their corresponding regression functions and $\cH_s$ are unique (see Remark \ref{rem:RKHS} below).
\end{rem}

\begin{rem} \label{rem:h_s}
Since $P$ is self-adjoint, we have $f_T(x) \doteq \langle Pw_T, K(x, \cdot) \rangle_{\cH}$, hence by the reproducing property, $f_T = Pw_T \in \cH_s$. Similarly, we have that all $\{f_i\}_{i=1}^N$ are in $\cH_s$. 
\end{rem}

Remark~\ref{rem:h_s} indicates that $\operatorname{span}\left(\{f_i\}_{i \in [N]}\right) \subseteq \cH_s$. We therefore need the following {\it richness condition}, similar to previous works on meta-learning in the linear representation case \citep{du2021fewshot}, without which we cannot hope to learn $\cH_s$.

\begin{assumption}[Source Richness]\label{asst:rich}
 We have that $\operatorname{span}\left(\{f_i\}_{i \in [N]}\right) = \cH_s$. 
\end{assumption}

\begin{rem}\label{rem:RKHS} For any projection $P$ onto some complete subspace $\cH_s$, $\langle \cdot, PK(x, \cdot)\rangle_{\cH}$ evaluates every function in $\cH_s$ at $x$, and in fact is well-known as the {\it kernel} of the sub-RKHS defined by $\cH_s$. The same fact implies uniqueness of $\cH_s$ and in particular that it equals $\overline{\operatorname{span}} \{\Gamma(x) \doteq P K(x, \cdot)\}$. 
\end{rem} 

\subsection{Learning Set-up}\label{sec:sub_sec3.2}
In this section we present the high level ideas of our meta-learning strategy with nonlinear representation. The first step is to learn a subspace approximation $\hat \cH_s \approx \cH_s$ from source tasks. This process aims to find a suitable representation that facilitates the learning of the target task. We refer to this step as \textbf{pre-training}. The second step involves directly learning the target task within the subspace $\hat \cH_s$. We refer to this step as \textbf{inference}.

\textbf{Source Tasks - pre-training.} Our approach to approximate $\cH_s$ is inspired by \cite{kong2020meta}, who focused on finite-dimensional linear meta-learning. We extend this strategy to encompass (potentially infinite dimensional) nonlinear meta-learning. Under the source richness assumption (Assumption \ref{asst:rich}), $\cH_s$ is equal to the range of the rank-$s$ operator (see Proposition~\ref{prop:op_C} in Appendix) 
\begin{equation} \label{eq:task_cov}
    C_N \doteq \frac{1}{N} \sum_{i=1}^N f_i \otimes f_i, \qquad  \operatorname{ran} C_N = \mathcal{H}_s.
\end{equation}
Therefore, we estimate $\cH_s$ via the range of
\begin{equation} \label{eq:task_cov_approx}
    \hat C_{N,n,\lambda} \doteq \frac{1}{N} \sum_{i=1}^N \hat f'_{i,\lambda} \otimes \hat f_{i,\lambda}
\end{equation}
where $\hat f'_{i,\la}, \hat f_{i,\la}$ are  i.i.d copies of a ridge regression estimator for source task $i \in [N]$. Here, we use a data-splitting strategy to obtain the following
\begin{equation*}
    \mathbb{E}[\hat C_{N,n,\lambda}] = \frac{1}{N} \sum_{i=1}^N  \mathbb{E}[\hat f_{i,\lambda}'] \otimes   \mathbb{E}[\hat f_{i,\lambda}].
\end{equation*}
This property plays a crucial role in deriving approximation rates for $\cH_s$. Data-splitting is similarly employed in \cite{kong2020meta}. Avoiding data-splitting remains an open problem even in the finite-dimensional linear representation setting.

Each source task is learned from a dataset $\mathcal{D}_i = \{(x_{i,j}, y_{i,j})_{j=1}^{2n}\}, i \in [N]$ of i.i.d observations sampled from $\mu_{i}$,  via regularized kernel regression as in Eq.~\eqref{eq:krr},
\begin{equation} \label{eq:def_source_estim}
    \hat f_{i,\lambda} = \underset{f \in \cH}{\operatorname{argmin}} \sum_{j=1}^n \left(y_{i,j} - f(x_{i,j}) \right)^2 + n\lambda\|f\|^2_{\mathcal{H}}, \qquad \hat f'_{i,\lambda} = \underset{f \in \cH}{\operatorname{argmin}}\sum_{j=n+1}^{2n} \left(y_{i,j} - f(x_{i,j}) \right)^2 + n\lambda\|f\|^2_{\mathcal{H}}
\end{equation}
For task $i \in [N]$, let $K_i, L_i \in \R^{n \times n}$ be the Gram matrices such that $(K_i)_{j,l} = K(x_{i,j}, x_{i,l})$, $(j,l) \in [n]$ and $(L_i)_{j,l} = K(x_{i,j}, x_{i,l})$, $(j,l) \in [n+1:2n]$. Then for all $x \in \cX,$
\begin{equation} 
    \hat{f}_{i,\la}(x) = Y_i^{\top}\left(K_i + n \la I_n\right)^{-1}k_{i,x}, \qquad  \hat{f}_{i,\la}'(x) = (Y_i')^{\top}\left(L_i + n \la I_n\right)^{-1}\ell_{i,x},\label{eq:split_estimator}
\end{equation}
where $k_{i,x} = (K(x_{i,1}, x), \ldots, K(x_{i,n}, x))^{\top} \in \Rn$, $\ell_{i,x} = (K(x_{i,n+1}, x), \ldots, K(x_{i,2n}, x))^{\top} \in \Rn$, \\ $Y_i = (y_{i,1}, \ldots, y_{i,n})^{\top} \in \Rn$ and $Y_i' = (y_{i,n+1}, \ldots, y_{i,2n})^{\top} \in \Rn$.

After obtaining $\hat C_{N,n,\lambda}$, we cannot directly compare $\operatorname{ran}C_N$ to $\operatorname{ran}\hat C_{N,n,\lambda}$, since the latter is not guaranteed to be of rank $s$. We therefore consider the singular value decomposition of $\hat C_{N,n,\lambda}$:
$$
\hat C_{N,n,\lambda} = \sum_{i=1}^{N} \hat \gamma_i \hat u_i \otimes \hat v_i = \hat U \hat D \hat V^*,
$$
where $\hat \ga_1 \geq \dots \geq \hat \ga_N \geq 0$ are the singular values and stored in the diagonal matrix $\hat D \in \R^{N \times N}$. The right and left singular vectors are stored as $\hat V = [\hat v_1, \ldots, \hat v_N]$ and $\hat U = [\hat u_1, \ldots, \hat u_N]$, respectively. We use the right singular vectors to construct the approximation of $\cH_s$ as follows (note that a similar approach can be applied to the left singular vectors),
$$
\hat \cH_s \doteq \operatorname{span}\{\hat v_1, \ldots, \hat v_s\}.
$$
We define the orthogonal projection onto $\hat \cH_s$ as $\hat P$. 
\begin{rem}
In nonparametric regression, as employed in this approach, regularization becomes necessary. This leads to biased estimators since $\E[\hat f_{i,\la}] \ne f_i$. For subspace approximation, it is crucial to effectively control this bias since it cannot be averaged out.
\end{rem}

\textbf{Target task - inference.} We are given a target task dataset $\mathcal{D}_{T} = \{(x_{T,j}, y_{T,j})_{j=1}^{n_{T}}\} \in (\cX \times \R)^{n_T}$ sampled from $\mu_T$ in order to approximate $f_T$. As mentioned in Remark~\ref{rem:RKHS}, $\hat \cH_s = \hat P(\cH) \subseteq \cH$ forms a RKHS on $\cX$ having the same inner product as $\cH$ and with reproducing kernel $\hat K(x,y) = \langle \hat P \phi(x), \phi(y) \rangle_{\cH}, (x,y) \in \cX^2$. Consequently, we can estimate $f_T$ via regularized kernel regression within $\hat \cH_s$, as shown in Eq.~\eqref{eq:krr}. For $\la_*>0$,
\begin{align} \label{eq:krr_inference}
    \hat{f}_{T,\lambda_*} 
    &\doteq \argmin_{f \in \hat{\mathcal{H}}_{s}} 
    \sum_{j=1}^{n_T} \left(f(x_{T,j}) - y_{T,j}\right)^2 
    + n_T \lambda_* \|f\|_{\mathcal{H}}^2.
\end{align}
Since $\hat \cH_s$ is $s-$dimensional, it can be treated as a standard regularized regression in $\R^s$ (see Section~\ref{sec:algo}). The following remark highlights the main technical difficulties over the linear case. 

\begin{rem}[Differences from Linear Case] \label{rem:complinear} We point out that, while the algorithm used in our meta-learning approach draws inspiration from \cite{kong2020meta}, there are significant differences due to the complexities of the nonlinear setting, as opposed to the linear one, as outlined below. 

--- First, from the algorithmic perspective, proper regularization is crucial in an infinite dimensional space to prevent overfitting. \cite{kong2020meta} did not employ a regularization scheme, but instead relied on OLS regression, which does not directly extend to infinite dimension where some form of regularization is needed to control a learner's capacity. A second algorithmic difference arises in the instantiation of the procedure in input space $\mathbb{R}^d$: while our procedure appears similar to \cite{kong2020meta}'s when described in the RKHS $\cH$ i.e., after embedding, its instantiating in $\mathbb{R}^d$ is nontrivial, as it involves translating operations in $\cH$---e.g., projections onto subspaces of $\cH$---into operations in $\R^d$. Section~\ref{sec:algo} below addresses such technicality in depth.

--- Second, many crucial difficulties arise in the analysis of the infinite dimensional setting, which are not present in the finite-dimensional case. Importantly, in infinite dimensional space, the analysis effectively concerns two separate spaces: the RKHS $\cH$ which encodes the nonlinear representation, and the $L_2$ regression space. Thus a main technical difficulty is to relate rates of convergence in $\cH$ (where all operations are taking place) to rates in $L_2$, in particular via the \emph{covariance operator} which links the two norms $\| \cdot \|_{\cH}$ and $\|\cdot \|_{L_2}$; this is relatively easy in finite dimension by simply assuming an identity covariance (or bounds on its eigenvalues) as done in \cite{kong2020meta,du2021fewshot,tripuraneni2021provable}, but such assumptions do not extend to infinite dimension where concepts such as "identity covariance" are not defined. Namely, an infinite dimensional covariance operator must be compact, which implies that its eigenvalues decay to zero. Our analysis reveals that the speed of that decay (encoded in Assumptions~\ref{asst:evd} and~\ref{asst:emb}) determines the rate at which we can learn. Furthermore, unlike in \cite{kong2020meta,du2021fewshot,tripuraneni2021provable}, where there was no need to regularize the task-specific regressors, much of our analysis focuses on understanding the bias-variance trade-offs induced by the choice of regularizers. This is nontrivial but is crucial for guaranteeing gains in our nonlinear case, as explained in the paper's introduction. Thus, in the present infinite dimensional setting, as we will see, such crucial trade-offs will depend on specific measures of smoothness---of the RKHS $\cH$ and the regression functions therein---as introduced in the main results Section \ref{sec:mainresults} (see 
Assumptions \ref{asst:evd}, \ref{asst:emb}, \ref{asst:src}). 
\end{rem}

\subsection{Instantiation in Data Space} \label{sec:algo}
In this section, we describe in detail the steps outlined in Section~\ref{sec:sub_sec3.2} to offer a comprehensive understanding of the computational process. In particular, we focus on the computation of the right singular vectors of $\hat C_{N,n,\la}$, which plays a crucial role in constructing $\hat \cH_s$. Additionally, we provide insights into the projection of new data points onto $\hat \cH_s$, which is essential during the inference stage. We emphasize that such instantiations were not provided for kernel classes in the nonlinear settings addressed by \cite{maurer2016benefit,du2021fewshot}; given the nonconvexity of the loss (Eq.~(1) in both papers), this task is nontrivial. 

\textbf{Singular Value Decomposition of $\hat C_{N,n,\la}$.} We start by explaining how we can compute the SVD of $\hat C_{N,n,\la}$ in closed form from data. Let $\{\hat v_i\}_{i=1}^s$ and $\{\hat u_i\}_{i=1}^s$ be the right and left singular vectors corresponding to the largest $s$ singular values, and denote $\hat V_s = [\hat v_1, \ldots, \hat v_s]$ and $\hat U_s = [\hat u_1, \ldots, \hat u_s]$. The next proposition shows that $(\hat U_s, \hat V_s)$ can be obtained through the solution of a generalized eigenvalue problem associated to the matrices $J,Q \in \R^{N \times N}$ where for $(i,j) \in [N]^2$
$$
\begin{aligned}
    J_{i,j} &= \langle \hat f_i, \hat f_j \rangle_{\cH} =  nY_i^{\top}\left(K_i + n \la I_n\right)^{-1}K_{ij} \left(K_j + n \la I_n\right)^{-1}Y_j, \\
    Q_{i,j} &= \langle \hat f_i', \hat f_j' \rangle_{\cH} =  n(Y_i')^{\top}\left(L_i + n \la I_n\right)^{-1}L_{ij} \left(L_j + n \la I_n\right)^{-1}Y_j',
\end{aligned}
$$
\begin{prop}\label{prop:gen_eigen}
    Consider the generalized eigenvalue problem which consists of finding generalized eigenvectors $(\alpha^{\top}, \beta^{\top})^{\top} \in \R^{2N}$ and generalized eigenvalues $\gamma \in \R$ such that
    \begin{gather*}
    \begin{bmatrix} 0 & QJ \\ JQ & 0 \end{bmatrix}
    \begin{bmatrix} \al \\ 
    \beta\end{bmatrix}
    =
    \gamma
    \begin{bmatrix} Q & 0 \\ 0 & J \end{bmatrix}
    \begin{bmatrix} \al \\ 
    \beta\end{bmatrix}
    \end{gather*}
    Define  $A \doteq [\hat f_1', \ldots, \hat f_N']$ and $B \doteq [\hat f_1, \ldots, \hat f_N]$ and let $\{(\hat \alpha^{\top}_i, \hat \beta^{\top}_i)^{\top}\}_{i=1}^s$ be the generalized eigenvectors associated to the $s-$largest generalized eigenvalues of the above problem and re-normalized such that $\alpha^{\top}_i Q\alpha_i =  \beta_i^{\top} J \beta_i =1, i \in [s]$. The following two families of vectors $\{\hat{u}_i\}_{i=1}^s$ and $\{\hat{v}_i\}_{i=1}^s$ are orthonormal systems, and correspond to top-$s$ left and right singular vectors of $\hat C_{N,n,\la}$:
    $$
        \hat u_i = A\hat \al_i = \sum_{j=1}^N (\al_i)_j\hat{f}_j', \quad \hat v_i = B\hat \be_i =  \sum_{j=1}^N (\hat\be_i)_j\hat{f}_j, \quad i \in [s].
    $$
    In other words, we can define the projection onto the subspace $\hat{\cH}_s$ via $\{\hat{v}_i\}_{i=1}^s$:
    $$
    \hat \cH_s \doteq \operatorname{span}\{\hat v_1, \ldots, \hat v_s\} = \operatorname{span}\{B \hat \be_1, \ldots, B\hat \be_s\}.
    $$
\end{prop}

\textbf{Projection onto $\hat \cH_s$ and inference.} Next, we explain how we can project a new point onto $\hat \cH_s$ and perform inference on such representations. The projection onto $\hat \cH_s$ satisfies $\hat P = \hat V_s \hat V_s^*$. A new point $x \in \cX$ can be projected into $\hat \cH_s$ as $\hat P\phi(x)$ and identified to $\R^s$ via 
\begin{equation} \label{eq:embedding}
    \tilde{x} = \hat V_s^*\phi(x) = (\langle \hat v_1, \phi(x) \rangle_{\cH}, \ldots , \langle \hat v_s, \phi(x) \rangle_{\cH})^{\top} = (\hat v_1(x), \ldots, \hat v_s(x))^{\top} \in \R^s.
\end{equation}
By Proposition~\ref{prop:gen_eigen}, $\tilde{x}$ can be computed as 
$$
\tilde{x}_i = \hat v_i(x) = \langle \hat v_i, \phi(x) \rangle_{\cH} = \langle B \hat \be_i, \phi(x) \rangle_{\cH} = \hat \be_i^{\top}B^*\phi(x), \quad i \in [s],
$$
where  $B^*\phi(x) \doteq (\hat f_1(x), \ldots, \hat f_N(x))^{\top} \in \R^N$. Recall that after pre-training, at inference, we receive a target task dataset $\mathcal{D}_{T} = \{(x_{T,j}, y_{T,j})\}_{j=1}^{n_{T}} $. We denote by $\tilde x_{T,j} \in \R^s$ the embedding of the covariate $x_{T,j}$ into $\hat \cH_s$ according to Eq.~\eqref{eq:embedding}, and by $X_T \doteq [\tilde x_{T,1}, \ldots, \tilde x_{T,n_T}] \in \R^{s \times n_T}$ the data matrix that collects the embedded points as columns, $K_T \doteq X_T^{\top}X_T \in \R^{n_T \times n_T}$ is the associated Gram matrix and $n_T^{-1}X_TX_T^{\top} \in \R^{s \times s}$ the associated empirical covariance. 
\begin{prop}\label{prop:compute_inference}
$\hat{f}_{T,\lambda_*} = \hat V_s\beta_{T,\lambda_*}$, where 
$$
        \hat \beta_{T,\lambda_*} \doteq \argmin_{\beta \in \R^s} \sum_{j=1}^{n_T} \left(\beta^{\top}\tilde x_{T,j}-y_{T,j}\right)^2 + n_T\lambda_* \|\beta\|_{2}^2 = X_T(K_T + n_T \la_* I_{n_T})^{-1}Y_T, 
$$
and $Y_T \doteq (y_{T,1}, \ldots, y_{T,n_T})^{\top} \in \R^{n_T}$. For all $x \in \cX$, $\hat{f}_{T,\lambda_*}(x) = \beta_{T,\lambda_*}^{\top}\tilde{x}$.
\end{prop}

\section{Main Results}\label{ta}

\subsection{Regularity Assumptions}\label{sec:regularity}
Our first two assumptions are related to the eigensystem of the covariance operator. For $i \in [N] \cup \{T\}$, the covariance operator for task $i$, $\Sigma_i \doteq \E_{\mu_i}[\phi(X) \otimes \phi(X)]$, is positive semi-definite and trace-class, and thereby admits an eigenvalue decomposition with eigenvalues $\la_{i,1} \geq \la_{i,2} \geq \ldots \geq 0$ and eigenvectors $\{\sqrt{\lambda_{i,j}}e_{i,j}\}_{j \geq 1}$ \citep[Lemma 2.12][]{steinwart2012mercer}.

\begin{assumption}\label{asst:evd} For $i \in [N]$, the eigenvalues of the covariance operator $\Sigma_i$ from the $(K, \mu_i)$ pair satisfy a polynomial decay of order $1/p$, i.e., for some constant $c >0$ and $0<p\leq 1$, and for all $j \geq 1$, $\la_{i,j} \leq c j^{-1/p}$. When the covariance operator has finite rank, we have $p=0$.
\end{assumption}
The assumption on the decay rate of the eigenvalues is typical in the risk analysis for kernel ridge regression \citep[see e.g.,][]{fischer2020sobolev,caponnetto2007optimal}.

\begin{assumption}\label{asst:emb}
There exist $\alpha \in [p, 1]$ and $k_{\alpha,\infty} > 0$, such that, for any task $i \in [N]$ and $\mu_{i}-$almost all $x \in \cX$,
\(\sum_{j \geq 1} \la_{i,j}^{\al}e_{i,j}^2(x)  \leq k_{\alpha,\infty}^2.\)
\end{assumption}
This assumption is known as an {\it embedding property} (into $L_\infty$, see \cite{fischer2020sobolev}), and is a regularity condition on the pair $(K, \mu_i)$. In particular, let 
$T_{K,i} \doteq \sum_j \lambda_{i, j} \ e_{i, j} \otimes_{L_{2}(\mu_i)} e_{i, j}$ denote the {\it integral operator} $L_{2}(\mu_i) \mapsto L_{2}(\mu_i)$ induced by $K$, then the assumption characterizes the smallest $\alpha$ such that the range of $T^{{\alpha}/{2}}_{K,i}$ may be continuously embedded into $L_\infty(\mu_i)$. As it is well-known for continuous kernels, $\operatorname{ran}T^{1/2}_{K,i} \equiv \cH$, thus the assumption holds for $\alpha =1$ whenever $K$ is bounded. Note that the {\it interpolation spaces} $\operatorname{ran} {T^{{\alpha}/{2}}_{K,i}}$ only get larger as $\alpha \to 0$, eventually coinciding with the closure of $\operatorname{span}\{e_{i, j}\}_{j \geq 1}$ in $L_{2}(\mu_i)$. Additionally, it can be shown that Assumption~\ref{asst:emb} implies Assumption~\ref{asst:evd} with $p=\alpha$ \citep[Lemma 10][]{fischer2020sobolev}.

As alluded to in the introduction, $\alpha$ has no direct benefit for regression in our {\it well-specified} setting with $f_i\in \cH$, but is beneficial in meta-learning (see Corollary~\ref{cor:learning_rate} and Remark~\ref{rem:beyondregression} thereafter). 

\begin{assumption}\label{asst:src} There exist $r \in [0,1]$ and $R \geq 0$, such that for $i \in [N]$, the regression function $f_i$ associated with $\mu_i$ is an element of $\cH$ and satisfies $\|\Sigma^{-r}_i f_i\|_{\cH} \doteq R < \infty.$ 
\end{assumption}
This assumption, imposing smoothness on each source task regression function, is standard in the statistical analysis of regularized least-squares algorithms \citep{caponnetto2007optimal}.

\begin{rem} 
Assumptions \ref{asst:evd}, \ref{asst:emb}, and \ref{asst:src} only concern the source tasks towards nonlinear meta-learning. We will see in Section \ref{sec:mainresults} that they are complementary in ensuring enough \emph{smoothness} of the source regression functions to allow for sufficient \emph{under-regularization} to take advantage of the aggregation of $N$ source tasks. 
Thus, the main assumption on the target task is simply that it shares the same nonlinear representation as the source tasks. 
\end{rem}
Finally, to control the noise we assume the following.

\begin{assumption}\label{asst:mom} There exists a constant $Y_{\infty}\geq0$ such that for all $Y \sim \mu_i, i \in [N] \cup \{T\}$: $|Y| < Y_{\infty}$.
\end{assumption}

\subsection{Main Theorems}\label{sec:mainresults}

\begin{theorem}\label{th:inference_bound}
Under Assumptions \ref{asst:meta}, \ref{asst:rich} and \ref{asst:mom} with $s \geq 1$, for $\tau \geq 2.6$, $ 0< \la_* \leq 1$ and
\begin{equation*}
n_T \geq 6\kappa^2\la_*^{-1}\left(\tau+ \log(s)\right),
\end{equation*}
with probability not less than $1-3e^{-\tau}$ and conditionally on $\{\mathcal{D}_i\}_{i=1}^N$,
\begin{align*}
    \cE_{\mu_T}(\hat{f}_{T,\lambda_*}) 
    &\leq c_0\left\{
        \sqrt{\frac{\tau s}{n_T}} 
        + \frac{\tau}{n_T \sqrt{\lambda_*}}  
        + \sqrt{\lambda_*} 
        + \left\| \hat{P}_{\perp} P \right\|
    \right\},
\end{align*}
where $\hat P_{\perp} \doteq I_{\cH} - \hat P$ and $c_0$ is a constant that depends only on $Y_{\infty}, \|f_T\|_{\cH}$, and $\kappa$. Hence, treating $\tau$ as a constant, if we take $\la_*=12\kappa^2(\log(s) \vee \tau)n_T^{-1}$, conditionally on $\{\mathcal{D}_i\}_{i=1}^N$, for $n_T \geq 12\kappa^2(\log(s) \vee \tau)$, we get that $\cE_{\mu_T}(\hat{f}_{T,\lambda_*})$ is of the order 
\begin{equation*}
    \sqrt{\frac{s}{n_T}} + \left\|\hat{P}_{\perp} P\right\|.
\end{equation*}
\end{theorem}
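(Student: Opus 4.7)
The plan is to condition on the source data $\{\mathcal{D}_i\}_{i=1}^N$ so that the estimated projection $\hat P$ and its range $\hat \cH_s$ can be treated as fixed, and then to analyse $\hat f_{T,\la_*}$ as standard kernel ridge regression on the $s$-dimensional RKHS $\hat \cH_s$ (with kernel $\hat K(x,y)=\langle \hat P\phi(x),\phi(y)\rangle_{\cH}$). Writing $A \doteq \hat P \Sigma_T \hat P$ and $f^{\hat \cH_s}_{T,\la_*} \doteq (A+\la_*)^{-1}\hat P \Sigma_T f_T$ for the population ridge minimizer inside $\hat \cH_s$, I decompose
\begin{equation*}
\cE_{\mu_T}(\hat f_{T,\la_*}) \leq \underbrace{\bigl\|\hat f_{T,\la_*} - f^{\hat \cH_s}_{T,\la_*}\bigr\|_{L_2(\mu_T)}}_{\text{(I) estimation}} + \underbrace{\bigl\|f^{\hat \cH_s}_{T,\la_*} - f_T\bigr\|_{L_2(\mu_T)}}_{\text{(II) approximation}}.
\end{equation*}
For (I), since $\dim \hat \cH_s=s$, an operator Bernstein inequality restricted to the $s$-dimensional subspace gives concentration of $\hat P \hat \Sigma_T \hat P$ around $A$ at rate $\sqrt{(\tau+\log s)/n_T}$; the hypothesis $n_T \geq 6\kappa^2 \la_*^{-1}(\tau+\log s)$ is precisely what forces $(A+\la_*)$ and its empirical counterpart to be equivalent up to a universal constant with probability $1-e^{-\tau}$. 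Under this event, the usual well-specified KRR decomposition (as in Caponnetto--De Vito, Fischer--Steinwart, specialized to a finite-dimensional subspace) together with a Bernstein bound for the empirical first moment $n_T^{-1}\sum_j y_{T,j}\phi(x_{T,j})$ produces
\begin{equation*}
\bigl\|\hat f_{T,\la_*} - f^{\hat \cH_s}_{T,\la_*}\bigr\|_{L_2(\mu_T)} \lesssim Y_\infty\sqrt{\frac{\tau\,\mathcal{N}(\la_*)}{n_T}} + \frac{\kappa Y_\infty \tau}{n_T\sqrt{\la_*}},
\end{equation*}
with effective dimension $\mathcal{N}(\la_*) \doteq \operatorname{tr}\bigl(A(A+\la_*)^{-1}\bigr)\leq s$, delivering the $\sqrt{\tau s/n_T}$ and $\tau/(n_T\sqrt{\la_*})$ terms.

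For (II), the identity $f_T = \hat P f_T + \hat P_\perp f_T$ together with $(A+\la_*)(A+\la_*)^{-1}\hat P f_T = \hat P f_T$ gives the subspace decomposition
\begin{equation*}
f^{\hat \cH_s}_{T,\la_*} - \hat P f_T = (A+\la_*)^{-1}\hat P\Sigma_T \hat P_\perp f_T - \la_*(A+\la_*)^{-1}\hat P f_T.
\end{equation*}
Moving to $L_2(\mu_T)$ through $\|h\|_{L_2(\mu_T)} = \|A^{1/2}h\|_{\cH}$ for $h\in \hat \cH_s$ and using the elementary inequality $\la\sqrt{x}/(x+\la)\leq \sqrt{\la}/2$, the second summand is bounded by $(\sqrt{\la_*}/2)\|f_T\|_{\cH}$. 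For the first, factor $\hat P\Sigma_T = B\,\Sigma_T^{1/2}$ with $B \doteq \hat P\Sigma_T^{1/2}$ and use the SVD $B=U\Lambda V^*$ (so $A = BB^* = U\Lambda^2 U^*$): this yields $\|A^{1/2}(A+\la_*)^{-1}B\|_{\mathrm{op}} = \sup_i \lambda_i^2/(\lambda_i^2+\la_*) \leq 1$, hence the first summand is at most $\|\Sigma_T^{1/2}\hat P_\perp f_T\|_{\cH} = \|\hat P_\perp f_T\|_{L_2(\mu_T)} \leq \kappa\|\hat P_\perp P\|\|f_T\|_{\cH}$ (using $f_T = Pf_T$). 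Adding the orthogonal remainder $\|\hat P f_T - f_T\|_{L_2(\mu_T)} = \|\hat P_\perp f_T\|_{L_2(\mu_T)}$ closes (II). Combining (I) and (II) and absorbing $Y_\infty, \|f_T\|_{\cH}, \kappa$ into $c_0$ produces the displayed inequality, and the tuned rate follows by substituting $\la_* \asymp (\log s \vee \tau)/n_T$ (the two middle terms then both match $\sqrt{(\log s\vee\tau)/n_T}$, which is dominated by $\sqrt{s/n_T}$).

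The main obstacle I anticipate is the cross-term $(A+\la_*)^{-1}\hat P\Sigma_T \hat P_\perp f_T$ in (II): a naive operator-norm bound produces a spurious $1/\sqrt{\la_*}$ factor, which would ruin the whole strategy of under-regularizing in the meta-learning stage. Avoiding this requires the factorization $\hat P\Sigma_T = B\Sigma_T^{1/2}$ together with the identity $A=BB^*$, which lets the singular values of $A$ absorb the $\la_*$ in $(A+\la_*)^{-1}$ and give an operator-norm bound of order $1$ independent of $\la_*$, so that the misspecification bias scales purely with $\|\hat P_\perp P\|$ --- the quantity controlled from the source data via the pre-training theorem.
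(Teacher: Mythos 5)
Your overall architecture is sound and the final bound is reachable this way, but it is a genuinely different organization from the paper's, and it has one unaddressed gap. Structurally, the paper never introduces the population ridge minimizer $f^{\hat\cH_s}_{T,\la_*}$: it compares $\hat f_{T,\la_*}$ directly to $\hat P f_T$ using the \emph{empirical} operators, splitting into a noise term $\Sigma_{\hat P}^{1/2}\hat\Sigma_{\hat P,\la_*}^{-1}\tfrac{1}{n_T}\Phi_{\hat P}(Y_T-\Phi_T^*f_T)$ (residuals centered at the true $f_T$, hence exactly conditionally mean-zero) and a bias term $\Sigma_{\hat P}^{1/2}(\hat\Sigma_{\hat P,\la_*}^{-1}\hat P\hat\Sigma_T-\hat P)f_T$, the latter handled by the algebraic identity of Lemma~\ref{lma:b1_bis} and the key fact $\|\hat\Sigma_{\hat P,\la_*}^{-1/2}\hat P\hat\Sigma_{T,\la_*}^{1/2}\|\in\{0,1\}$ (Lemma~\ref{ref:lma_b2}). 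Your treatment of the cross-term via $B\doteq\hat P\Sigma_T^{1/2}$, $A=BB^*$ and $\|A^{1/2}(A+\la_*)^{-1}B\|\le 1$ is precisely the population analogue of that lemma, and your identification of where the condition $n_T\gtrsim\kappa^2\la_*^{-1}(\tau+\log s)$ enters is correct. Your part (II) is a clean, fully deterministic version of the paper's Term B and is fine as written.

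The gap is in part (I). The regression problem restricted to $\hat\cH_s$ is \emph{misspecified} whenever $\hat P_{\perp}f_T\neq 0$, which is the generic case, so "the usual well-specified KRR decomposition" does not deliver the bound $Y_\infty\sqrt{\tau\mathcal{N}(\la_*)/n_T}+\kappa Y_\infty\tau/(n_T\sqrt{\la_*})$ as stated. Writing the estimation error as $\hat\Sigma_{\hat P,\la_*}^{-1}\bigl[\tfrac1{n_T}\sum_j\hat P\phi(x_{T,j})(y_{T,j}-f^{\hat\cH_s}_{T,\la_*}(x_{T,j}))-\E[\hat P\phi(X)(Y-f^{\hat\cH_s}_{T,\la_*}(X))]\bigr]$ shows it is still a centered sum, but the summands' second moments and sup-norm now involve $\|f_T-f^{\hat\cH_s}_{T,\la_*}\|_{L_2(\mu_T)}$ and $\|f_T-f^{\hat\cH_s}_{T,\la_*}\|_{\infty}$, and the latter is not uniformly bounded: $\|f^{\hat\cH_s}_{T,\la_*}\|_{\cH}$ can grow like $\kappa\|\hat P_{\perp}P\|\,\|f_T\|_{\cH}/\sqrt{\la_*}$ because $\|(A+\la_*)^{-1}B\|\le\tfrac12\la_*^{-1/2}$. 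The resulting extra contributions (of order $\sqrt{\tau/n_T}\cdot\|\hat P_{\perp}P\|/\sqrt{\la_*}$ and $\tau\|\hat P_{\perp}P\|/(n_T\la_*)$, up to constants) do end up dominated by the $\sqrt{\la_*}+\|\hat P_{\perp}P\|$ terms for the prescribed $\la_*\asymp(\log s\vee\tau)/n_T$, so the theorem survives, but this absorption is a nontrivial step you must carry out explicitly rather than cite. The paper avoids the issue entirely by centering the noise at $f_T$ instead of at the best-in-class predictor, which is why its Term A admits a clean Bernstein bound with variance proxy $Y_\infty^2 s$ and no misspecification leakage.
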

Theorem~\ref{th:inference_bound} reveals that the excess risk for the target task consists of two components: $\sqrt{s/n_T}$ due to the inference stage, and $\|\hat{P}_{\perp}P\|$ in the pre-training stage. In the upcoming Theorem~\ref{th:pre_training}, we will see that the pre-training error $\|\hat{P}_{\perp}P\|$ decays with $n$ and $N$. In other words, if either $N$ (number of tasks) or $n$ (number of data within each task) is sufficiently large, we can guarantee that the excess risk decays at the parametric rate $O(\sqrt{s/n_T})$, an optimal rate achieved only by performing linear regression in a space of dimension $s$. $\|\hat{P}_{\perp}P\|$ is the sin-$\Theta$ distance between $\cH_s$ and $\hat \cH_s$ \cite{stewart1990matrix}. {We can relate this distance to the difference between $C_N$ and $\hat C_{N,n,\la}$ using classic perturbation theory for singular vectors. Proposition~\ref{prop:davis_kahan_meta} is a basic generalization of Wedin’s $\sin-\Theta$ Theorem \cite{wedin1972perturbation}.}

\begin{prop}[Wedin’s $\sin-\Theta$ Theorem]
\label{prop:davis_kahan_meta} Given $C_N$ and $\hat C_{N,n,\la}$ defined in Eqs.~\eqref{eq:task_cov} and \eqref{eq:task_cov_approx}, with $\gamma_s$ smallest nonzero eigenvalues of $C_N$. We have, 
\begin{equation}\label{eq:davis_kahan}
{\|\hat P_{\perp} P\| \leq 2\gamma_s^{-1}\|\hat{C}_{N,n,\la}-C_N\|.}
\end{equation}
\end{prop}
We refer to Section~\ref{sec:wedin} in the Appendix for the proof. Note that the operator norm $\|\hat{C}_{N,n,\la}-C_N\|$ is dominated by the Hilbert-Schmidt norm $\|\hat{C}_{N,n,\la}-C_N\|_{HS}$. The following theorem provides high probability bounds on this quantity.

\begin{theorem} \label{th:pre_training}
Let Assumptions \ref{asst:evd}, \ref{asst:emb}, \ref{asst:src} and \ref{asst:mom} hold with parameters $0 < p \leq \alpha \leq 1$ and $r \in [0,1]$. Let $\tau \geq \log(2)$, $N \geq \tau$ and $0< \la \leq 1 \wedge \min_{i \in [N]}\left\|\Sigma_i\right\|$. Define the following terms:
\begin{align*}
    A_{\la} &\doteq c\log(Nn)\left(1 + p\log (\la^{-1})\right) \lambda^{-\alpha}   \\
    B_{\la} &\doteq c\log(Nn) \left(1 + p\log (\la^{-1})\right)\lambda^{-(1+p)},
\end{align*}
where $c$ only depends on $k_{\alpha,\infty}, D, \kappa$. We require $n \geq A_{\la}$ if $r\in (0,1/2]$ or $n \geq B_{\la}$ if $r\in (1/2,1]$. Under both scenarios, with probability greater than $1-2e^{-\tau}-o((nN)^{-10})$ over the randomness in the source tasks we have
\begin{equation} 
\|\hat{C}_{N,n,\la}-C_N\|_{HS}  \leq C_1\left(\frac{\log (nN)\sqrt{\tau}}{\sqrt{nN} \lambda^{\frac{1}{2}+\frac{p}{2}}}\sqrt{1+ \frac{1}{n \lambda^{\alpha-p}}} +  \lambda^{r}\right).\label{eq:sin_risk_sharp}
\end{equation}
where $C_1$ only depends on $Y_{\infty}$, $R$, $\kappa$, $p$ and $k_{\alpha,\infty}$. 
\end{theorem}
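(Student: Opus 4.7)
The plan is to introduce the population regularized regressors $f_{i,\la}$ from Eq.~\eqref{eq:krr_pop} as a pivot and decompose
\[
\hat C_{N,n,\la} - C_N
= \underbrace{\frac{1}{N}\sum_{i=1}^N \bigl(f_{i,\la}\otimes f_{i,\la} - f_i\otimes f_i\bigr)}_{\text{population bias}}
+ \underbrace{\frac{1}{N}\sum_{i=1}^N \bigl(\hat f'_{i,\la}\otimes \hat f_{i,\la} - f_{i,\la}\otimes f_{i,\la}\bigr)}_{\text{empirical fluctuation}}.
\]
The first piece is fully deterministic and should produce the $\la^r$ summand of \eqref{eq:sin_risk_sharp}; the second should produce the $(nN\la^{1+p})^{-1/2}$-type summand by exploiting both the independence across source tasks (for the $1/\sqrt{N}$ gain) and, within each task, the independence of the split copies $\hat f_{i,\la}\perp\hat f'_{i,\la}$.

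For the bias, I would use the spectral identity $f_{i,\la}-f_i=-\la(\Sigma_i+\la I)^{-1}f_i$ together with the source condition $\|\Sigma_i^{-r}f_i\|_{\cH}\le R$ of Assumption~\ref{asst:src} to obtain $\|f_{i,\la}-f_i\|_{\cH}\le C_r R\,\la^r$ for $r\in[0,1]$, via the standard bound on the scalar function $\la\mu^r/(\la+\mu)$ on $\mu\ge 0$. Combined with $\|f_i\|_{\cH}\le R\kappa^{2r}$ (which follows from $\|\Sigma_i\|\le\kappa^2$) and the identity $\|u\otimes v\|_{HS}=\|u\|_{\cH}\|v\|_{\cH}$ applied to $f_{i,\la}\otimes f_{i,\la}-f_i\otimes f_i=(f_{i,\la}-f_i)\otimes f_{i,\la}+f_i\otimes(f_{i,\la}-f_i)$, this yields the $\la^r$ contribution uniformly in $N$.

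For the empirical fluctuation, I would set $E_i\doteq\hat f_{i,\la}-f_{i,\la}$ and $E'_i\doteq\hat f'_{i,\la}-f_{i,\la}$ and expand
\[
\hat f'_{i,\la}\otimes \hat f_{i,\la}-f_{i,\la}\otimes f_{i,\la} = f_{i,\la}\otimes E_i + E'_i\otimes f_{i,\la} + E'_i\otimes E_i.
\]
Standard single-task KRR analysis under Assumptions~\ref{asst:evd}--\ref{asst:emb}, in the style of Fischer--Steinwart~\citep{fischer2020sobolev} and controlling $(\Sigma_i+\la I)^{-1/2}(\hat\Sigma_i-\Sigma_i)(\Sigma_i+\la I)^{-1/2}$ via the embedding bound $\|(\Sigma_i+\la I)^{-1/2}\phi(x)\|_{\cH}\le k_{\alpha,\infty}\la^{-\alpha/2}$, should yield a high-probability bound of the shape $\|E_i\|_{\cH}\le C(n\la^{1+p})^{-1/2}\sqrt{1+(n\la^{\alpha-p})^{-1}}$; the sample-size thresholds $A_\la$ (for $r\le 1/2$) and $B_\la$ (for $r>1/2$) are precisely the regimes where this concentration is sharp. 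The two ``linear'' pieces $f_{i,\la}\otimes E_i$ and $E'_i\otimes f_{i,\la}$ can then be aggregated across $i\in[N]$ using a vector Bernstein inequality in Hilbert--Schmidt norm, with $\|f_{i,\la}\|_{\cH}=O(1)$ and per-task second moment of the above order; the resulting $1/\sqrt{N}$-averaging yields the advertised rate.

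The principal obstacle I anticipate is the quadratic cross-term $\frac{1}{N}\sum_i E'_i\otimes E_i$: it is the product of two noisy objects, and each factor carries a nonzero conditional mean since $\E[\hat f_{i,\la}]\neq f_{i,\la}$ in general for ridge regression. The data-splitting is crucial here: because $E'_i\perp E_i$, the expectation factors as $\E[E'_i\otimes E_i]=\E[E'_i]\otimes\E[E_i]$, a product of two deterministic operators each comparable in size to the bias $\la^r$, which can be absorbed into the $\la^r$ summand; the mean-zero residual is then controlled by a second-moment calculation that again exploits the double independence together with the sharp single-task $\cH$-norm tail bounds on $\|E_i\|_{\cH}$. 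The case split between $r\le 1/2$ and $r>1/2$ in the sample-size hypothesis reflects that at higher smoothness one needs a stronger concentration regime ($n\ge B_\la$) in order to retain the $\la^r$ bias rate once all $\alpha$-dependent corrections arising from the concentration of $\hat\Sigma_i$ to $\Sigma_i$ have been accounted for.
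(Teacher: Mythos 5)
Your overall architecture is close to the paper's, but the pivot you choose creates a genuine gap in the fluctuation term. The paper centers around $\bar C_{N,n,\la}=\frac{1}{N}\sum_i\E[\hat f_{i,\la}]\otimes\E[\hat f_{i,\la}]$, so that (thanks to data splitting) the variance term $\frac{1}{N}\sum_i\xi_i$ with $\xi_i=\hat f'_{i,\la}\otimes\hat f_{i,\la}-\E[\hat f'_{i,\la}]\otimes\E[\hat f_{i,\la}]$ is \emph{exactly} mean zero. You instead center around $f_{i,\la}\otimes f_{i,\la}$, and then your linear pieces $f_{i,\la}\otimes E_i$ and $E_i'\otimes f_{i,\la}$ have nonzero means $f_{i,\la}\otimes(\E[\hat f_{i,\la}]-f_{i,\la})$, etc. These are deterministic biases that do \emph{not} average down in $N$, so the "vector Bernstein with $1/\sqrt{N}$ gain" step as written would silently drop a term of size $\|\E[\hat f_{i,\la}]-f_{i,\la}\|_{\cH}$. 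You flag this issue for the quadratic cross-term but not for the linear ones, where it is equally present. Bounding $\|\E[\hat f_{i,\la}]-f_{i,\la}\|_{\cH}$ (equivalently $\la\|(\E[\hat\Sigma_{i,\la}^{-1}]-\Sigma_{i,\la}^{-1})f_i\|_{\cH}$) at rate $\la^r$ is in fact the crux of the whole theorem: it is \emph{not} a consequence of the deterministic approximation bound $\|f_{i,\la}-f_i\|_{\cH}\le R\la^r$, but requires controlling $\E\|\Sigma_{i,\la}\hat\Sigma_{i,\la}^{-1}\|$ via a Neumann-series argument on a high-probability event plus an almost-sure fallback of order $\la^{-1}$ (Propositions~\ref{prop:bias_exp} and~\ref{prop:bias_exp_r_0_5}). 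This is also where the $r\le 1/2$ versus $r>1/2$ dichotomy and the thresholds $A_\la$, $B_\la$ actually originate — not, as you suggest, from the sharpness of the concentration of $\|E_i\|_{\cH}$ (the estimation-error bound of Theorem~\ref{prop:variance_regression} requires only $n\ge A_\la$ for every $r$; the stronger requirement $n\ge B_\la$ is needed only to push the \emph{bias} rate past the $\la^{1/2}$ saturation point).

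A second, more routine gap: your per-task control of $\|E_i\|_{\cH}$ is only a high-probability bound (the almost-sure bound is the useless $\la^{-1/2}$ scale), so you cannot directly feed it into a Hilbert-space Bernstein/Hoeffding inequality as a boundedness or moment condition. The paper resolves this by conditioning on the intersection of the per-task good events, recentering (since conditioning destroys the zero mean, contributing the $e^{-\delta}\la^{-1}$ correction terms), and choosing $\delta=12\log(nN)$ so that the failure probability and the recentering error are both $o((nN)^{-10})$. Your writeup would need the same truncation-and-recentering device to make the aggregation step rigorous.
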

We highlight two key aspects of Theorem~\ref{th:pre_training}. First, the bound is comprised of two terms that come from a bias-variance decomposition (refer to Section~\ref{sec:ana_out} for details): \[\|\hat{C}_{N,n,\la}-C_N\|_{HS} \leq \underbrace{\|\hat{C}_{N,n,\la}-\mathbb{E}( \hat{C}_{N,n,\la})\|_{HS}}_{\text{Variance}} + \underbrace{\|\mathbb{E}( \hat{C}_{N,n,\la})-C_N\|_{HS}}_{\text{Bias}}.\]  The first and second terms in Eq.~\eqref{eq:sin_risk_sharp} correspond to bounds on the variance and on the bias respectively. Secondly, while we obtain the same upper bound in Eq.~\eqref{eq:sin_risk_sharp} for the  two distinct scenarios $r \in (0,1/2]$ and $r \in (1/2,1]$, the requirement on the number of training samples per task is different. In particular, $B_{\lambda} \geq A_{\lambda}$, since $\lambda  \leq 1$ and $p+1 \geq  \alpha$. This means that we can benefit from further smoothness $r > 1/2$, but at the cost of a higher number of samples per source task. Our analysis in Theorem~\ref{theo:bias_pf} implies that the difference comes from bounding the bias term. We specifically shows that uniformly bounding the bias from each task when $r \in (1/2, 1]$ (require $n \geq B_{\lambda}$) is strictly harder than when $r \in (0, 1/2]$ (require $n \geq A_{\lambda}$). As such, our results reveal the inherent difficulty of nonlinear meta-learning: analyzing the bias is more involved than analyzing the  variance, a fact which cannot be seen in the linear representation case.

\begin{rem}[Further Smoothness and the Well-specified Regime] \label{rem:rkhs_norm}
While in usual analyses, consistency in $L_2$ norm is assured for $r=0$ (implying that the regression function is in $\cH$), we require further smoothness on source regression functions (i.e., $r>0$) to guarantee consistency in our setting. The requirement for additional smoothness stems from the fact that the result depends on convergence of regression estimates in \emph{the stronger RKHS norm} rather than in $L_2$ norm, as the above $\|\cdot\|_{HS}$ and projections are defined w.r.t. the RKHS itself.

We point out that in kernel learning literature (see e.g., \cite{caponnetto2007optimal,fischer2020sobolev}), one often observes the Tikhonov saturation effect, where the learning rate does not improve for $r >1/2$. However, we remark that this saturation happens only when the $L_2$ norm is used. In particular, Eq.~\eqref{eq:sin_risk_sharp} demonstrates that our learning rate can be improved up to $r =1$. This reflects the fact that, if the RKHS norm is employed, the Tikhonov saturation effect happens for $r > 1$. A similar phenomenon is observed by \cite{blanchard2018optimal}.
\end{rem}
Combining Theorem~\ref{th:inference_bound}, Proposition~\ref{prop:davis_kahan_meta}, and Eq.~\eqref{eq:sin_risk_sharp} from Theorem~\ref{th:pre_training}, we obtain the following results on the meta-learning excess risk.

\begin{corollary}\label{cor:learning_rate}
Under the assumptions of Theorem \ref{th:inference_bound} and Theorem \ref{th:pre_training}, for $\tau \geq 2.6$ and $\la_*=12\kappa^2(\log(s) \vee \tau)n_T^{-1}$, with probability $1-5e^{-\tau}-o((nN)^{-10})$ over the randomness in both the source and target tasks, we have the following regimes of rates for a constant $C_3$ that only depends on $Y_{\infty}$, $R$, $\kappa$, $\gamma_1$, $p$, $c$, $\|f_T\|_{\cH}$ and $k_{\alpha,\infty}$.

{A.} {\bf Small number of tasks.} In this regime, with the number of tasks $N$ being small, the variance is significant compared to the bias. Therefore, we must choose $\lambda$ to balance the order of the bias with that of the variance. If $N \leq n^{\frac{2r+1+p}{\alpha} -1}$ and $ r \in (0,1/2]$ or $N \leq n^{\frac{2r+1+p}{p+1} - 1}$ and $r \in (1/2,1]$, for a choice of $\lambda = \left(\log^2(nN)/(nN)\right)^{\frac{1}{2r+1+p}}$,
\begin{equation} \label{eq:corr_eq1}
\cE_{\mu_T}(\hat{f}_{T,\lambda_*})\leq C_3\tau\left\{\sqrt{\frac{s}{n_T}}+\left(\frac{\log^2(nN)}{nN}\right)^{\frac{r}{2r+1+p}}\right\}.
\end{equation}

{B.} {\bf Large number of tasks.} In this regime, we consider larger $N$ (see $B.1$ and $B.2$ below), so that the variance term becomes negligible compared to the bias. Therefore the rates below correspond to the choices of $\lambda$ that minimize the bias, in Eq.~\eqref{eq:sin_risk_sharp} (under the constraints $n \geq A_{\lambda}, B_{\lambda}$). In what follows, $\omega > 2$ is a free parameter. 

$\bullet$ B.1. For $r \in (0,1/2]$, if $n^{\frac{2r+1+p}{\alpha} -1} \leq N \leq o\left(e^n\right)$, for a choice of $\lambda = \left(\frac{\log^{\omega}(nN)}{n}\right)^{\frac{1}{\alpha}}$,
\begin{equation}
    \cE_{\mu_T}(\hat{f}_{T,\lambda_*})\leq C_3\tau\left\{\sqrt{\frac{s}{n_T}}+\log^{\frac{\omega r}{\alpha}}(nN)\cdot n^{-\frac{r}{\alpha}}\right\}.\nonumber
\end{equation}

$\bullet$ B.2. For $r \in (1/2,1]$, if $n^{\frac{2r+1+p}{p+1} -1} \leq N \leq o\left(e^n\right)$, for a choice of $\lambda = \left(\frac{\log^{\omega}(nN)}{n}\right)^{\frac{1}{p+1}}$,
\begin{equation}
    \cE_{\mu_T}(\hat{f}_{T,\lambda_*})\leq C_3\tau\left\{\sqrt{\frac{s}{n_T}}+\log^{\frac{\omega r}{p+1}}(nN)\cdot n^{-\frac{r}{p+1}}\right\}.\nonumber
\end{equation}
\end{corollary}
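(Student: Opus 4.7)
The plan is to combine the three ingredients already assembled in the paper: Theorem~\ref{th:inference_bound} reduces the target excess risk to $\sqrt{s/n_T}+\|\hat P_\perp P\|$ for $\lambda_*=12\kappa^2(\log s\vee\tau)/n_T$; Proposition~\ref{prop:davis_kahan_meta} bounds $\|\hat P_\perp P\|$ by the operator discrepancy $\|\hat C_{N,n,\lambda}-C_N\|_{HS}$; and Theorem~\ref{th:pre_training} controls that discrepancy in bias-variance form. After a union bound over the independent source and target randomness, the probability $1-5e^{-\tau}-o((nN)^{-10})$ stated in the corollary is immediate.

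First I would simplify Proposition~\ref{prop:davis_kahan_meta} using $\|\cdot\|\leq\|\cdot\|_{HS}$ and the fact that the discrepancy is $o(\gamma_1)$ in every regime under consideration, so that $\|\hat P_\perp P\|\lesssim \gamma_1\gamma_s^{-2}\|\hat C_{N,n,\lambda}-C_N\|_{HS}$. Combining with Theorem~\ref{th:inference_bound}, the target risk is then bounded, up to constants, by $\sqrt{s/n_T}+\|\hat C_{N,n,\lambda}-C_N\|_{HS}$. Applying $\sqrt{1+x}\leq 1+\sqrt{x}$ to Eq.~\eqref{eq:sin_risk_sharp} cleanly splits the Theorem~\ref{th:pre_training} bound into three additive pieces: a main variance $V_1(\lambda)\asymp \log(nN)/(\sqrt{nN}\,\lambda^{(1+p)/2})$, an auxiliary variance $V_2(\lambda)\asymp \log(nN)/(n\sqrt{N}\,\lambda^{(1+\alpha)/2})$, and a bias $\lambda^r$. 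Everything that follows is a constrained optimization of $\lambda$ over these three terms subject to the feasibility conditions $n\geq A_\lambda$ (for $r\in(0,1/2]$) or $n\geq B_\lambda$ (for $r\in(1/2,1]$).

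In Case~A (small $N$) the optimal $\lambda$ balances $V_1(\lambda)$ against the bias: solving $\lambda^r=\log(nN)/(\sqrt{nN}\,\lambda^{(1+p)/2})$ yields exactly $\lambda=(\log^2(nN)/(nN))^{1/(2r+1+p)}$, so the resulting risk contribution is $\lambda^r=(\log^2(nN)/(nN))^{r/(2r+1+p)}$, matching Eq.~\eqref{eq:corr_eq1}. To close the case I would verify two compatibility conditions: (i) that $V_2(\lambda)\lesssim V_1(\lambda)$ at this choice of $\lambda$, which reduces to $n\lambda^{\alpha-p}\gtrsim 1$ and is routine under the stated $N$-ceiling; and (ii) the admissibility conditions $n\geq A_\lambda$ or $n\geq B_\lambda$. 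Substituting the chosen $\lambda$ into $A_\lambda\asymp\lambda^{-\alpha}$ and $B_\lambda\asymp\lambda^{-(1+p)}$ (ignoring logs) produces exactly the two regime boundaries $N\lesssim n^{(2r+1+p)/\alpha-1}$ and $N\lesssim n^{(2r+1+p)/(p+1)-1}$ stated in the corollary.

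In Case~B (large $N$) the Case~A choice of $\lambda$ is infeasible, so I would instead take $\lambda$ at the admissibility boundary itself, inflated by a logarithmic slack: $\lambda=(\log^\omega(nN)/n)^{1/\alpha}$ for $r\in(0,1/2]$ and $\lambda=(\log^\omega(nN)/n)^{1/(p+1)}$ for $r\in(1/2,1]$, with any $\omega>2$ absorbing the $\log(nN)(1+p\log\lambda^{-1})$ prefactors in $A_\lambda, B_\lambda$. The bias is then pushed up to $\log^{\omega r/\alpha}(nN)\,n^{-r/\alpha}$ or $\log^{\omega r/(p+1)}(nN)\,n^{-r/(p+1)}$, respectively, and one checks that $V_1,V_2$ are both dominated by this bias because the extra $1/\sqrt{N}$ factor combined with the lower bound on $N$ makes the variance negligible (the upper bound $N=o(e^n)$ only enters to keep $\log(nN)$ manageable). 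The main obstacle is bookkeeping rather than anything deep: the $r\leq 1/2$ vs $r>1/2$ split propagates through $A_\lambda$ vs $B_\lambda$ and the resulting $\lambda$-choices, and log factors must be tracked carefully when verifying that $V_2$ is always absorbed; the substantive mathematical content lives entirely in Theorem~\ref{th:pre_training} and Proposition~\ref{prop:davis_kahan_meta}.
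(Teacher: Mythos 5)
Your proposal is correct and follows essentially the same route as the paper's proof: splitting the Theorem~\ref{th:pre_training} bound into the two variance pieces plus the bias $\lambda^r$ via $\sqrt{1+x}\leq 1+\sqrt{x}$, balancing the dominant variance term against the bias in Case~A (with the regime boundaries arising from the feasibility constraints $n\geq A_\lambda$ or $n\geq B_\lambda$), and taking $\lambda$ at the feasibility boundary with logarithmic slack $\omega>2$ in Case~B. The compatibility checks you flag (that the secondary variance term is dominated, and that the chosen $\lambda$ is admissible) are exactly the ones the paper carries out.
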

\begin{rem}[Saturation effect on large $N$]\label{rem:saturation}
Corollary~\ref{cor:learning_rate} shows no further improvement from larger $N$ once $N \geq n^{\frac{2(r\wedge 1/2)+1+p}{\alpha} -1}$, since the rates then only depend on $n$ (as outlined in case B). This is due to a saturation effect from the bias-variance trade-off, i.e., $N$ only helps decrease the variance term below the best achievable bias; at that point the bias (within each task) can only be further improved by larger per-task sample size $n$. 
\end{rem}

\begin{rem}[Regime $N \gtrsim \exp(n)$]\label{rem:rem_exp} 
The regimes presented in Corollary~\ref{cor:learning_rate} only cover settings where $N \lesssim \exp(n)$, which is in fact the only regime covered by previous works (see, for instance \cite{du2021fewshot,tripuraneni2020theory}). This is due to the constraints $n \geq A_{\lambda}, B_{\lambda}$, that prevents $N \gtrsim \exp(n)$. However, at the cost of a less tight rate we can obtain a bound on the pre-training error that is free of any constraint on $n$ (see Section~\ref{sec:exp_setting}). As a corollary of this theorem, when $N \gtrsim \exp(n)$, choosing $\lambda = n^{-\frac{1}{2}}$, results in the nontrivial rate 
$$
\cE_{\mu_T}(\hat{f}_{T,\lambda_*})\lesssim \sqrt{\frac{s}{n_T}}+n^{-\frac{r}{2}},
$$
Notice that this is a slower rate than shown for smaller $N$ in regime B of Corollary~\ref{cor:learning_rate}. Tightening the rates in the regime of $N \gtrsim \exp(n)$ appears difficult, and is left as an open problem. We emphasize, as stated earlier, that this regime is in fact not addressed by previous works, even under the stronger assumption of linear representations.  
\end{rem}

\paragraph{Regimes of Gain.} We want to contrast our results in the meta-learning setting with the rates obtainable on the target task without the benefits of source tasks. Since no regularity condition is imposed on the target distribution, the best target rate, absent any source tasks, is of the form $O(n_T^{-1/4})$ \citep[see e.g.,][]{caponnetto2007optimal}\footnote{Note that the assumption that $f_T$ is in some subspace $\cH_s$ is irrelevant for usual kernel ridge regression, since it is always true once we know that $f$ belongs to $\cH$.}; thus we gain from the source tasks whenever $\cE_{\mu_T}(\hat{f}_{T,\lambda_*}) = o(n_T^{-1/4})$.

Our interest, however, is in {\it regimes where the gain is greatest}, in that the source tasks permit a final meta-learning rate of $\cE_{\mu_T}(\hat{f}_{T,\lambda_*})\lesssim \sqrt{s/n_T}$; Corollary~\ref{cor:learning_rate} displays such regimes according to the number of source samples $N$ and $n$, and the parameters $r$, $\alpha$ and $p$, denoting the difficulty of the sources tasks. While it is clear that larger $r$ indicates \emph{smoother} source regression functions $f_i$ as viewed from within the RKHS $\cH$, smaller parameters $\alpha$ and $p$ can be understood as a \emph{smoothness level} of the RKHS $\cH$ itself---e.g., consider a Sobolev space $\cH$ of $m$-smooth functions, then we may take $\alpha, p\propto 1/m$ (see Example \ref{ex:sobolev}). Thus the {smoother} the source tasks, viewed under $r$, $\alpha$ and $p$, the faster the rates we can expect, since our approach aims at reducing the bias in each individual task (which is easiest under smoothness see Remark \ref{rem:overfitting_low} below).

Focusing on the situation where the number of samples per task is roughly the same across source and target, i.e., $n \propto n_T$, the conditions for meta-learning to provide the greatest gain, i.e., achieving $O(n^{-1/2})$ rate, under various regimes are listed in Table~\ref{tab:list_con}. 

\begin{table}[tbhp]
    \setstretch{1.5}
    \centering
    \caption{Conditions for meta-learning to reach the parametric rate $O\left(\sqrt{s/n}\right)$, log terms are removed for clarity.} 
    \label{tab:list_con}
\begin{tabular}{c|c|c|c}
\hline
\hline
\textbf{Cases}&\textbf{Range of Source Tasks} & \textbf{Choice of \(\lambda\)} & \textbf{Regimes of Gain} \\
\hline
A & $n^{\frac{2r+1+p}{2r}-1} \leq N \leq n^{\frac{2r+1+p}{\alpha} -1}$ & $(nN)^{-\frac{1}{2r+1+p}}$ & $ \frac{\alpha}{2} \leq r \leq \frac{1}{2}$ \\
\hline
A& $n^{\frac{2r+1+p}{2r}-1} \leq N \leq n^{\frac{2r+1+p}{p + 1} - 1}$ & $(nN)^{-\frac{1}{2r+1+p}}$ & $ \frac{p+1}{2} \leq r \leq 1 $ \\
\hline
B.1 &$n^{\frac{2r+1+p}{\alpha} -1} \leq N \leq o\left(e^n\right)$ & $n^{-\frac{r}{\alpha}}$ & $ \frac{\alpha}{2} \leq r \leq \frac{1}{2}$\\
\hline
B.2 &$n^{\frac{2r+1+p}{p+1} -1} \leq N \leq o\left(e^n\right)$ & $n^{-\frac{r}{p+1}}$ & $ \frac{p+1}{2} \leq r \leq 1$\\
\hline
\hline
\end{tabular}
\end{table}

\begin{rem}[Under-regularization/Overfitting]
\label{rem:overfitting_low} 
In order for meta-learning to provide gain, in particular for $n \propto n_T$, we have to \emph{overfit} the regression estimates in each source task, i.e., set $\lambda$ lower than would have been prescribed for optimal regression (choices of $\lambda$ for the different regimes of gain are summarized in Table~\ref{tab:list_con}). 

Overfitting is essential because, as highlighted in the introduction, the bias inherent in each task during meta-learning cannot be averaged out. Deliberate under-regularization reduces this bias at the expense of increased variance within each task. However, the variance in the target task may subsequently be mitigated by aggregating across multiple tasks.

More specifically, in the regimes of gain discussed earlier, the choices of $\lambda$ in Corollary~\ref{cor:learning_rate} are consistently lower than the optimal regression choice of $\lambda_{KRR} \asymp n^{-\frac{1}{2(r \wedge 1/2)+1+p}}$\citep[see e.g.,][Theorem 1]{fischer2020sobolev} in the well-specified regime. This deviation from the optimal regression setting indicates overfitting, which again reveals that understanding nonlinear meta-learning is fundamentally more difficult than the linear setting due to the bias term. {This effect is similarly observed in nonparametric kernel regression when splitting the dataset and averaging estimators trained on each split of the dataset \cite{zhang2015divide}.}
\end{rem}

\begin{rem}[Regularity beyond regression]\label{rem:beyondregression} Notice that the choice of the regularization parameter in kernel ridge regression $\lambda_{KRR}\asymp n^{-\frac{1}{2(r\wedge1/2)+1+p}}$ has no direct dependence on $\alpha$: lower values of $0<\alpha\leq 1$ yield no further benefit in regression once we assume $f_i \in \cH$, as opposed to the misspecified setting where $f_i$ lies outside $\cH$\footnote{Note, however, that $p\leq \alpha$, and therefore a small $\alpha$ implies that we are in the small $p$ regime (and the rates do depend on $p$).}. By contrast, in meta-learning, we do benefit from considering $\alpha$, as $\alpha$ governs both the threshold level at which the saturation effect on large $N$ kicks in (see Remark~\ref{rem:saturation}) and the level of smoothness required for meta-learning to provide the greatest gain (See Table~\ref{tab:list_con} and associated discussion). Ultimately, if $\alpha \to 0$, there is no saturation effect, and the rates always match the parametric rate $O(n^{-1/2})$. This indicates that subspace learning is a fundamentally different problem to ridge regression.
\end{rem}

\paragraph{Characterizing $\alpha$, $p$ and $r$.} As discussed above, smaller parameters $\alpha $ and $p$ and higher parameter $r$ yield faster meta-learning rates. The next examples yield insights on these situations. Throughout, recall that by Lemma 10 \cite{fischer2020sobolev}, we have $p \leq \alpha$, i.e., $p = \alpha$ is always admissible. 

\begin{ex}[Finite-dimensional kernels]\label{ex:finite}
Suppose $\cH$ is finite dimensional, i.e., the covariance operators $\Sigma_i$ each admit a finite number of eigenfunctions $e_{i, j}, j = 1, 2, \ldots k$ for some $k\geq 1$. Then clearly as the eigenfunctions $\{e_{i, j}\}$ are bounded \footnote{As we employ a bounded kernel, every function in the RKHS is bounded (Lemma 4.23 \cite{steinwart2008support}).} and Assumptions \ref{asst:evd}-\ref{asst:emb} hold for $\alpha, p=0$. Furthermore, Assumption~\ref{asst:src} holds for any value of $r$. In this regime,  
\begin{equation}\label{eq:finite_dim}
    \cE_{\mu_T}(\hat{f}_{T,\lambda_*})  \lesssim\sqrt{\frac{s}{n_T}}+\sqrt{\frac{k}{\gamma_s^2 nN}}\log (nN).
\end{equation}
See Remark~\ref{rem:rem_pf} in the Appendix for the detailed derivations. As an example, for polynomial kernels $K(x, x') \doteq (x^\top x' + b)^m$ on compact domains $\cX\subset \R^d$, we obtain $k=d^m$. Note that, since polynomial regression converges at rate $O(\sqrt{d^{m}/n_T})$ \citep[see e.g.,][]{ghorbani2021linearized,chen2020learning,andoni2014learning,zippel1979probabilistic}, we can gain in meta-learning whenever the representation $\cH_s$ is of dimension $s \ll d^m$. 
\end{ex}

\begin{rem}[Subspace learning guarantees in the linear setting] In the meta learning model with linear representations, with $d$ the dimension of the input points and $s$ the dimension of the subspace, \cite{tripuraneni2021provable} (Theorem 5) provide an information-theoretic lower bound on the $\sin-\Theta$ distance $\|\hat P_{\perp}P\|$ of the order $\sqrt{\frac{ds}{nN}}$ valid for  estimators that are functions of the $nN$ data points. Assuming that the eigenvalues of $C_N$ are well-conditioned ($\gamma_s \asymp s^{-1}$), estimators with matching guarantees on the $\sin-\Theta$ distance are obtained in \cite{du2021fewshot,niu2024collaborative}. By the previous example, if we employ a linear kernel on $\R^d$ and under the assumption $\gamma_s \asymp s^{-1}$, we obtain a subspace learning error (up to a log term) of $\sqrt{\frac{ds^2}{nN}}$, recovering the learning rate obtained in \cite{tripuraneni2021provable}. Generalizing the result of \cite{tripuraneni2021provable} to the nonlinear setting with a lower bound depending on the parameters $(N,n,s,p,r,\alpha)$ represents a significant and valuable direction for future research. 
\end{rem}

\begin{ex}[Gaussian kernel] \label{ex:gaussian}
Let $\cX \subset \Rd$ be a bounded set with
Lipschitz boundary \footnote{For the definition of Lipschitz boundary see \citep[Definition 3][]{kanagawa2020convergence}.}, $\mu$ a distribution supported on $\cX \times \R$, with marginal distribution uniform on $\cX$ and let $K$ be a Gaussian kernel. Then by \citep[Corollary 4.13][]{kanagawa2018gaussian}, Assumption~\ref{asst:emb} is satisfied with any $\alpha \in (0,1]$, implying that Assumption~\ref{asst:evd} is also satisfied with any $p \in (0,1]$.
\end{ex}
 
\begin{ex}[Sobolev spaces and Matérn kernels]\label{ex:sobolev}
Let $\cX \subset \Rd$, be a non-empty, open, connected, and bounded set with a $C_{\infty}-$boundary. Let $\mu$ be a distribution supported on $\cX \times \R$, with marginal equivalent to the Lebesgue measure on $\cX$. Choose a kernel which induces a Sobolev space $H^m$ of smoothness $m \in \N$ with $m>d/2$, such as the Matérn kernel 
(see e.g., \cite{kanagawa2018gaussian} Examples 2.2 and 2.6). Then by \citet[Corollary 5][]{fischer2020sobolev}, Assumption~\ref{asst:evd} is satisfied with $p=\frac{d}{2m}$ and Assumption~\ref{asst:emb} is satisfied for every $\alpha \in (\frac{d}{2m},1]$. Furthermore, it can be shown that Assumption~\ref{asst:src} is satisfied if and only if the $\{f_i\}_{i=1}^N$ belong to a Sobolev space (with fractional smoothness) of smoothness $m(2r+1)$ (see \cite{fischer2020sobolev}).
\end{ex}

\section{Experimental Results} \label{sec:exp}

In this section, we report the results of experiments on simulated data to test the two main theoretical predictions of our paper: 1) with the proper number of tasks it is possible to learn at the parametric rate; 2) overfitting is beneficial for meta learning. Consider the Sobolev space $\cH = \{ f: [0,1] \to \R, f \text{ absolutely continuous}, f' \in L^2([0,1]), f(0)=0 \},$ equipped with the inner product $\langle f, g \rangle_{\cH} = \int_{0}^1 f'(x)g'(x) dx.$ $\cH$ is the RKHS associated to the kernel $K: [0,1] \times [0,1] \to \R, (x,x') \mapsto \min(x,x')$ \cite{gu2013smoothing}. For a fixed parameter $s \in \N$, we consider an orthonormal system (with respect to $\langle \cdot, \cdot \rangle_{\cH}$) of $s$ splines of degree 2 (i.e. piecewise quadratic functions with continuous derivative) $(\psi_1, \ldots, \psi_s)$ as shown in Figure~\ref{fig:task_examples}. We then take $\cH_s = \operatorname{span}\{\psi_1, \ldots, \psi_s\}$ and $P = \sum_{j=1}^s \psi_j \otimes \psi_j$ the projection onto $\cH_s$. Note that $P = VV^*$ with $V = [\psi_1, \ldots, \psi_s]$.  Any $\omega \in \R^s$ leads to an element of $\cH_s$ as,
$$
f = \sum_{\ell=1}^s \omega_{\ell} \psi_\ell(x) = \sum_{\ell=1}^s \omega_{\ell}\langle \psi_\ell, K(x,\cdot) \rangle_{\cH} = \langle g, PK(x,\cdot) \rangle_{\cH}, \qquad g \doteq \sum_{\ell=1}^s \omega_{\ell} \psi_\ell.
$$
To generate each task, we proceed as follows. For $i \in [N] \cup \{T\}$, $\omega_i \sim \cU(\sqrt{s}\mathbb{S}^{s-1})$, $f_i = \sum_{\ell=1}^s \omega_{i,\ell} \psi_\ell$, for $j = 1, \ldots, 2n$ (or $j = 1, \ldots, n_T$ for the target task),
$$
y_{i,j} = f_i(x_{i,j}) + \epsilon_{i,j}, \qquad x_{i,j} \sim \cU(0,1), \quad \epsilon_{i,j} \sim \cN(0,\sigma^2).
$$
Throughout the experiments, $\sigma$ is fixed to 0.1. 
In Figure~\ref{fig:task_examples}, we display an example of generated task for $s=10$. Given an estimator $\hat f$ for the target task, we evaluate its performance by approximating the squared excess risk $\E_{\mu_T}\left[(\hat f(X) - f_T(X))^2\right]$ on independent samples, where $\mu_T$ is the Lebesgue measure on $[0,1]$.
\begin{figure}[ht]
    \centering
    \begin{minipage}[b]{0.3\textwidth}
        \centering
        \includegraphics[width=\textwidth]{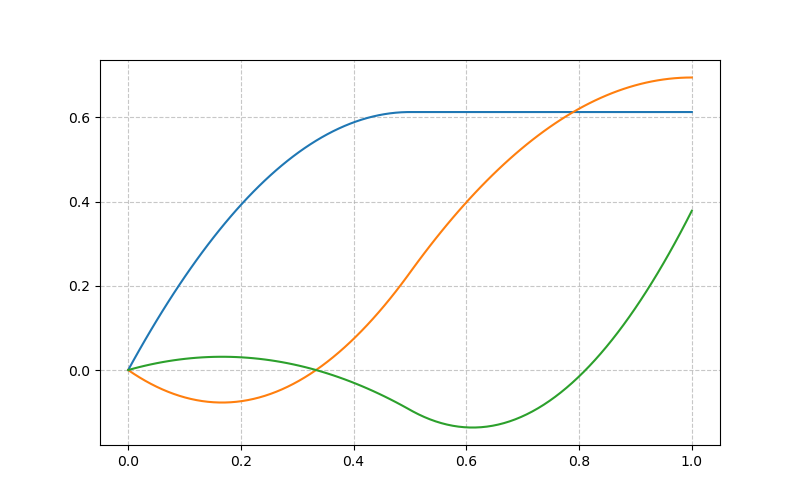}
    \end{minipage}
    \begin{minipage}[b]{0.3\textwidth}
        \centering
        \includegraphics[width=\textwidth]{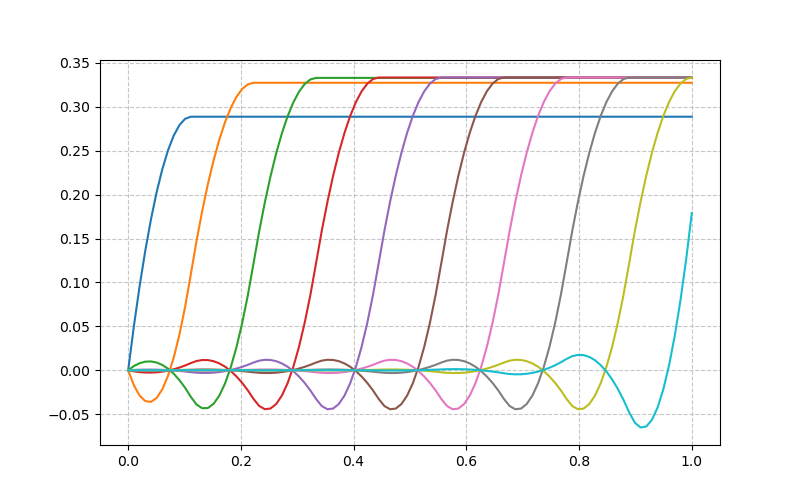}
    \end{minipage}
    \begin{minipage}[b]{0.3\textwidth}
        \centering
     \includegraphics[width=\textwidth]{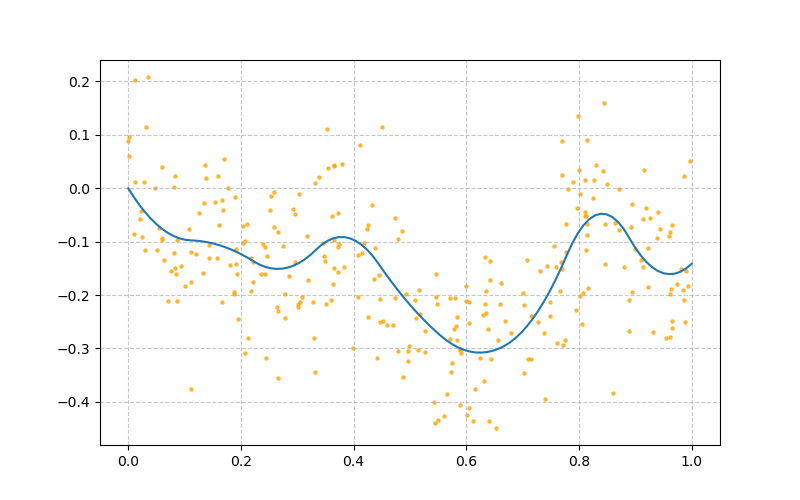}
    \end{minipage}
    \caption{{(Left)-(Center) Orthonormal system in $\cH$ spanning $\cH_s$ for respectively $s=3$ (Left) and $s=10$ (Center). (Right) Example of sampled task for $s=10$ with $300$ datapoints, the blue solid line represents the ground truth.}}
    \label{fig:task_examples}
\end{figure}

\paragraph{Parameter values: $p$, $\alpha$ and $r$.} { As the marginal probability distribution is the uniform measure on $[0,1]$ and $K$ induces a Sobolev space of smoothness $m=1$, by Remark~\ref{ex:sobolev}, Assumption~\ref{asst:evd} is satisfied with $p = \frac{1}{2}$ and Assumption~\ref{asst:emb} is satisfied with every $\alpha \in (\frac{1}{2}, 1]$. Finally, tasks functions are generated as linear combinations of order 2 splines and therefore belong to $H^m(0,1)$ for every $m < \frac{5}{2}$ (and do not belong to $H^m(0,1)$ for any $m \geq \frac{5}{2}$). By Remark~\ref{ex:sobolev}, Assumption~\ref{asst:src} is therefore satisfied for every $r \in [0,\frac{3}{4})$ (and Assumption~\ref{asst:src} is not satisfied for any $r \geq \frac{3}{4}$). In the experiments, we set $r = \frac{1}{2}$.}

\paragraph{Choice of regularization.} {We focus on the \textbf{small number of tasks regime}, Corollary~\ref{cor:learning_rate}-(A),  where $N \leq n^{\frac{2r+1+p}{\alpha}-1} = n^4$. According to Case A, we set $\lambda = (nN)^{-\frac{1}{2r+1+p}} = (nN)^{-\frac{2}{5}}$ and $\lambda_* = n_T^{-1}$. By Corollary~\ref{cor:learning_rate}, the excess risk on the target task is upper bounded (up to constants and log terms) by $\sqrt{s/n_T} +  (nN)^{-\frac{1}{5}}$.}

\paragraph{Learning at the parametric rate.} {We have shown in Table~\ref{tab:list_con} that given enough source tasks and samples per source task it is possible to learn at the parametric rate $\sqrt{s/n_T}$. To illustrate this fact, we compare our meta learning approach to an oracle estimator accessing the true subspace. The oracle estimator has access to $(\psi_1, \ldots, \psi_s)$ and is trained with linear ridge regression. For $x \in [0,1]$, define its transform $\tilde{x}^s \doteq (\psi_1(x), \ldots, \psi_s(x))^{\top} \in \mathbb{R}^s$. Then, $\hat f_{\text{oracle}}(x) \doteq \hat \beta^{\top}\tilde{x}^s$, with
$$
\hat \beta = \argmin_{\beta \in \mathbb{R}^s} \frac{1}{n_T} \sum_{i=1}^{n_T} \left(y_{T,i} - \beta^{\top}\tilde{x}_{T,i}^s\right)^2 + \lambda_{\text{oracle}} \|\beta\|_{2}^2. 
$$
For $\lambda_{\text{oracle}} = n_T^{-1}$, $\cE_{\mu_T}\left(\hat f_{\text{oracle}} \right)$ is of the order $\sqrt{s/n_T}$ \cite{mourtada2022elementary}. In Figure~\ref{fig:experiment}-(Left), for $s = 25$ and $n = 300$ we show the evolution of the squared excess risk as we vary $n_T$ for the oracle estimator and our meta learning estimator trained with different values of $N$. Results are averaged over 100 runs, where for each run we sample new source and target tasks. For $N=250$, the performance of the meta learning is identical to the oracle. It demonstrates that our meta learning strategy successfully leverages the source tasks and that given enough source tasks, it learns at a similar rate of the oracle estimator, leading to a parametric rate of convergence. We refer to Section~\ref{sec:appendix_exp} for additional results.}

\begin{figure}[ht]
    \centering
    \begin{minipage}[b]{0.45\textwidth}
        \centering
        \includegraphics[width=\textwidth]{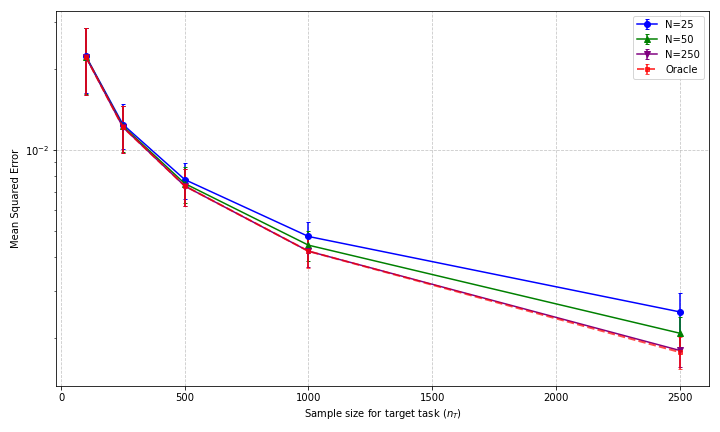}
    \end{minipage}
    \begin{minipage}[b]{0.45\textwidth}
        \centering
     \includegraphics[width=\textwidth]{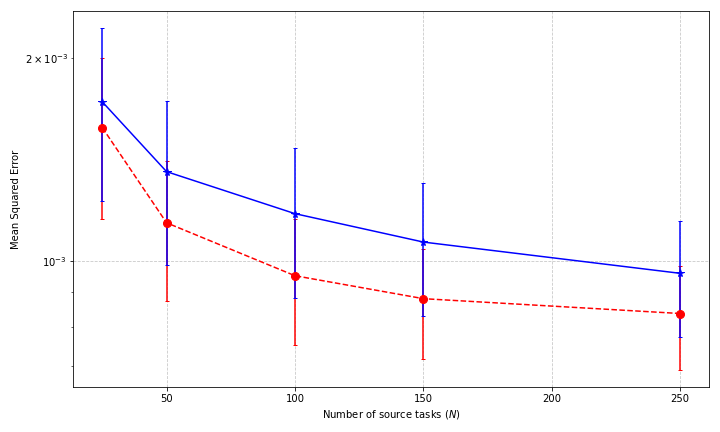}
    \end{minipage}
    \caption{\textbf{(Left)} {Meta Learning versus Oracle: Comparison of the squared excess risk on the target task for the oracle estimator $\hat{f}_{\text{oracle}}$ (dotted red line) and the meta learning estimator $\hat{f}_{T,\lambda_*}$ trained with different number of tasks $N$ (solid lines). $x-$axis represents the size of the dataset for the target task $(n_T)$. \textbf{(Right)} Effect of under-regularization: Comparison of the squared excess risk of the meta learning estimator trained with $\lambda = (nN)^{-\frac{2}{5}}$ (red dotted line) and $\lambda = n^{-\frac{2}{5}}$ (blue solid line). $x-$axis represents the number of source tasks $(N)$. For both figures $n=300$, $s=25$ and results are averaged over $100$ generations of the source and target tasks.}}
    \label{fig:experiment}
\end{figure}

\paragraph{Effect of overfitting.} {To assess the effect of overfitting (see Remark~\ref{rem:overfitting_low}), we compare our meta learning approach trained with $\lambda_1 = (nN)^{-\frac{2}{5}}$ and $\lambda_2 = n^{-\frac{2}{5}}$.  In Figure~\ref{fig:experiment}-(Right), for $s=25$, $n = 300$ and $n_T = 5000$, we plot the evolution of the squared excess risk as we increase $N$ for $\lambda_1$ (red dotted line) and $\lambda_2$ (blue solid line). Results are averaged over 100 runs. It confirms the message of Remark~\ref{rem:overfitting_low} that overfitting (with respect to the usual regularization of kernel ridge regression) on each source task is beneficial for meta learning.  We refer to Section~\ref{sec:appendix_exp} for additional results.}

\section{Analysis Outline}\label{sec:ana_out}
To prove Theorem~\ref{th:pre_training}, we proceed with a  bias-variance decomposition:
\begin{align} \label{eq:bias_variance}
\|\hat{C}_{N,n,\lambda} - C_N\|_{HS} 
&\leq \underbrace{\|\hat{C}_{N,n,\lambda} - \bar C_{N,n,\lambda}\|_{HS}}_{\text{Variance}} 
+ \underbrace{\|\bar C_{N,n,\lambda} - C_N\|_{HS}}_{\text{Bias}},
\end{align}
where $\bar C_{N,n,\lambda}\doteq \frac{1}{N}\sum_i \E (\hat{f}_{i,\lambda}) \otimes \E (\hat{f}_{i,\lambda})$. Next we consider both of these terms separately.

$\bullet$ The variance term can be written as follows
\begin{align*} \|\hat{C}_{N,n,\la}-\bar C_{N,n,\la}\|_{HS} = \left\|\frac{1}{N}\sum_i^N \xi_i\right\|_{HS},
\label{eqn:varterm}
\end{align*}
with $\xi_i \doteq \hat f'_{i,\lambda} \otimes \hat f_{i,\lambda} - \E(\hat f_{i,\lambda}) \otimes \E(\hat f_{i,\lambda}), i \in [N]$.
Thus, the variance term being an average with mean $0$, we would naturally want to bound it via a concentration inequality. However, this requires $\xi_i$ to be well behaved, e.g., bounded or subgaussian. 
A naive upper bound on $\|\xi_i\|_{HS}$ is of the order $\|\hat f_{i,\la}'\|_{\cH}\cdot\|\hat f_{i,\la}\|_{\cH} \leq \lambda^{-1}$ (see Proposition~\ref{prop:xi_as}); however this would lead to a loose concentration bound on the variance term, in particular, such a bound would not go down with the per-task's sample size $n$. 

Therefore, we first establish a high probability bound on $\|\xi_i\|_{HS}$ in terms of $n$ and $\lambda$ as follows. First, recall $f_{i,\lambda}$ from Eq.~\eqref{eq:krr_pop}, and let $\eta_i \doteq \hat f'_{i,\lambda} \otimes \hat{f}_{i,\lambda} - f_{i,\la} \otimes f_{i, \la}$ whereby $\xi_i = \eta_i - \E[\eta_i]$. With some algebra we can get 
$$
\|\eta_i \|_{HS} \leq \|\hat f_{i,\lambda}'- f_{i,\lambda}\|_{\cH}\|\hat f_{i,\lambda}- f_{i,\lambda}\|_{\cH} + \|f_{i}\|_{\cH}(\|\hat f_{i,\lambda}- f_{i,\lambda}\|_{\cH}+\|\hat f_{i,\lambda}'- f_{i,\lambda}\|_{\cH}).
$$
From existing results on kernel ridge regression \citep[see e.g.,][]{fischer2020sobolev}, we can bound $\|\hat{f}_{i,\lambda}-f_{i,\lambda} \|_{\cH}$ in terms of 
both $n$ and $\lambda$, in high-probability. This leads to a high probability bound on $\|\xi_i\|_{HS}$
that takes the form $\P\left(\|\xi_i\|_{HS} \leq V(\delta,n,\la) \right) \geq 1-2e^{-\delta}$ for all $\delta \geq 0$ and $i \in [N]$ (see Theorem~\ref{theo:var_pf} in Section~\ref{sec:prf_sin} for details). Define the event $E_{N,\delta,n,\la}= \cap_{i \in [N]} E_{i,\delta,n,\la}$ where $E_{i,\delta,n,\la}\doteq\{\|\xi_i\|_{HS} \leq V(\delta,n,\la)\}$. We then have
\begin{equation} \label{eq:var_hp_pf}
\P\left( \left\| \frac{1}{N} \sum_{i=1}^N \xi_i \right\|_{HS} \geq \epsilon \right) 
\leq \P\left( \left\| \frac{1}{N} \sum_{i=1}^N \xi_i \right\|_{HS} \geq \epsilon \,\middle|\, E_{N,\delta,n,\lambda} \right) 
+ 2N e^{-\delta}.
\end{equation}
For the first term on the r.h.s, we can now apply Hoeffding inequality (Theorem~\ref{th:hoeffding_hilbert}) since $\xi_i$ conditionally on $E_{N,\delta,n,\la}$ is bounded. However, conditioning on $E_{N,\delta,n,\la}$, the variable $\xi_i$ may no longer have zero mean, a requirement for usual concentration arguments. We therefore proceed by first centering $\xi_i$ around $\E(\xi_i \mid E_{N,\delta,n,\la}) =\E\left(\xi_i \mid E_{i,\delta,n,\la}\right)$ (by independence of the source tasks), and upper-bounding this expectation as 
\[\left\|\E\left[\xi_i \mid E_{i,\delta,n,\la}\right]\right\| =  \left\| \E(\xi_i\mid E_{i,\delta,n,\la}) - \E(\xi_i)\right\| \leq 2\E\left[\|\xi_i\| \mid E_{i,\delta,n,\la}^c \right]\P\left(E_{i  ,\delta,n,\la}^c\right) \leq 4e^{-\delta}\la^{-1},\] 
where we used the upper bound $\la^{-1}$ on $\|\xi_i\|_{HS}$. Then, applying Hoeffding inequality to the first term, we obtain with probability greater than $1-2e^{-\tau}-2Ne^{-\delta}$,
$$
\begin{aligned}
    \left\| \frac{1}{N}\sum_{i=1}^N \xi_i\right\|_{HS} &\leq V(\delta,n,\la)\sqrt{\frac{\tau}{N}} + \frac{4e^{-\delta}}{\lambda} \leq V(\delta,n,\la)\sqrt{\frac{\tau}{N}} + \frac{4}{\lambda N^{12} n^{12}},
\end{aligned}
$$
by choosing $\delta$ (a free parameter) as $12\log (nN)$. In that way, for our choices of $\lambda$ (see Corollary~\ref{cor:learning_rate}), $(\la N^{12}n^{12})^{-1}$ is always of lower order and $2Ne^{-\delta} = o((nN)^{-10})$. Our choice of $V(\delta,n,\la)$ is given in Theorem~\ref{prop:variance_regression} (leading to Eq.~\eqref{eq:sin_risk_sharp}), with the constraint that $n \geq A_{\la}$ (see Theorem~\ref{th:pre_training} for the definition of $A_{\la}$).
For the detailed proof of the variance bound, please refer to Theorem~\ref{theo:var_pf} in Section~\ref{sec:prf_sin}. 

$\bullet$ To bound the bias, we first notice that it can be decomposed in the following way
\[\|\bar C_{N,n,\la}-C_N\|_{HS} \lesssim \frac{1}{N}\sum_{i=1}^N\left\| f_i - \E(\hat{f}_{i,\lambda})\right\|_{\cH}.\] The key is therefore to obtain a good control on $\| f_i - \E(\hat{f}_{i,\lambda})\|_{\cH}$.  We consider two different ways of bounding this term, commensurate with regimes of $r$.

--- When $r \in (0, 1/2]$, we proceed as follows, 
\begin{align*} 
\left\| f_i - \E(\hat{f}_{i,\lambda})\right\|_{\cH} &=\lambda\left\| \E \left( \hat \Sigma_{i,\lambda}^{-1}\right)f_i \right\|_{\cH} = \lambda\left\| \Sigma_{i,\lambda}^{-1/2}\E \left(I+  \Sigma_{i,\lambda}^{-1/2}\left(\hat \Sigma_{i}-\Sigma_i\right)\Sigma_{i,\lambda}^{-1/2} \right)^{-1} \Sigma_{i,\lambda}^{-1/2}f_i \right\|_{\cH}\\
& \leq \lambda\left\| \Sigma_{i,\lambda}^{-1/2}\right\| \left\|\E \left(I+  \Sigma_{i,\lambda}^{-1/2}\left(\hat \Sigma_{i}-\Sigma_i\right)\Sigma_{i,\lambda}^{-1/2} \right)^{-1} \right\| \left\|\Sigma_{i,\lambda}^{-1/2}f_i \right\|_{\cH}. 
\end{align*}
For $r \leq 1/2$, we have $\left\| \Sigma_{i,\lambda}^{-1/2}f_i \right\|_{\cH} = \left\| \Sigma_{i,\lambda}^{r-1/2}\Sigma_{i,\lambda}^{r}f_i \right\|_{\cH} \leq \lambda^{r-1/2}$, 
while $\left\| \Sigma_{i,\lambda}^{-1/2}\right\| \leq \lambda^{-1/2}$. We then have,\[\left\| f_i - \E(\hat{f}_{i,\lambda})\right\|_{\cH} \leq \lambda^{r}\left\|\E \left(I+  \Sigma_{i,\lambda}^{-1/2}\left(\hat \Sigma_{i}-\Sigma_i\right)\Sigma_{i,\lambda}^{-1/2} \right)^{-1}\right\|. \]
For $n \geq A_{\lambda}$, with probability over $1-2e^{-\delta}$---where $\delta$ is chosen as discussed for the variance bound---we can show that $\|(I+  \Sigma_{i,\lambda}^{-1/2}(\hat \Sigma_{i}-\Sigma_i\Sigma_{i,\lambda}^{-1/2} )^{-1}\| \leq 3$, whereby we get with the same probability $\| f_i - \E(\hat{f}_{i,\lambda})\|_{\cH} \leq 3\lambda^r$. Thus, conditioning on this event, we get a final bound 
\[\| f_i - \E(\hat{f}_{i,\lambda})\|_{\cH} \leq 3\lambda^r + 2e^{-\delta}\|f_i\|_{\cH},\]
using the fact that $\| f_i - \E(\hat{f}_{i,\lambda})\|_{\cH} = \lambda\| \E ( \hat \Sigma_{i,\lambda}^{-1})f_i \|_{\cH}$ is always at most 
$\|f_i\|_{\cH}$.

--- When $r \in (1/2,1]$, we proceed as follows, 
$$
\begin{aligned}
    \left\| f_i - \E(\hat{f}_{i,\lambda})\right\|_{\cH} &=\lambda\left\| \E \left( \hat \Sigma_{i,\lambda}^{-1}\right)f_i \right\|_{\cH}= \lambda\left\| \E \left( \hat \Sigma_{i,\lambda}^{-1}\Sigma_{i,\lambda}\right) \Sigma_{i,\lambda}^{-1}f_i \right\|_{\cH} \\
    &\leq \lambda \left\| \E \left( \hat \Sigma_{i,\lambda}^{-1}\Sigma_{i,\lambda}\right) \right\| \left\|\Sigma_{i,\lambda}^{r-1}\Sigma_{i,\lambda}^{-r}\Sigma_{i}^r \Sigma_{i}^{-r}f_i \right\|_{\cH}\\
    &\leq \lambda^r\left\| \E \left( \hat \Sigma_{i,\lambda}^{-1}\Sigma_{i,\lambda}\right)\right\| =\lambda^r\left\| \Sigma_{i,\lambda}\E \left( \hat \Sigma_{i,\lambda}^{-1}\right)\right\|.
\end{aligned}
$$
We then use the following derivation \[\hat \Sigma_{i,\lambda}^{-1} = \left(\hat \Sigma_i + \lambda\right)^{-1} =  \left(\Sigma_i + \lambda -(\Sigma_i-\hat \Sigma_i) \right )^{-1} = \Sigma_{i,\lambda}^{-1}\left(I -(\Sigma_i-\hat \Sigma_i)\Sigma_{i,\lambda}^{-1} \right )^{-1}.\] We are left with bounding the term $\E\|(I -(\Sigma_i-\hat \Sigma_i)\Sigma_{i,\lambda}^{-1} )^{-1}\|$ which can be obtained by using a Neumann series. For a detailed analysis of the bias, see Theorem~\ref{theo:bias_pf} of  Section~\ref{sec:prf_sin}

\section{Conclusion}

We address the problem of meta-learning with nonlinear representations, providing theoretical guarantees for its effectiveness. Our study focuses on the scenario where the shared representation maps inputs nonlinearly into an infinite dimensional RKHS. By leveraging the smoothness of task-specific regression functions and employing careful regularization techniques, the paper demonstrates that biases introduced in the nonlinear representation can be mitigated. Importantly, the derived guarantees show that the convergence rates in learning the common representation can scale with the number of tasks, in addition to the number of samples per task. The analysis extends previous results obtained in the linear setting, and highlights the challenges and subtleties specific to the nonlinear case. The findings presented in this work open up several avenues for future research, which include: exploration of different types of nonlinear representations beyond RKHS,  alternative subspace estimation techniques, and further refinement of trade-offs between bias and variance.

\section*{Acknowledgements}

Dimitri Meunier, Zhu Li, and Arthur Gretton were supported by the Gatsby Charitable Foundation. Zhu Li is also funded by Imperial College London through the Chapman Fellowship. Samory Kpotufe is thankful for support from NSF CNS-2334997, and a Sloan 2021 Fellowship over the bulk of this study. 

\bibliography{file}
\newpage
\appendix

\section*{Appendix}

In Section~\ref{sec:appendix_proofs} we present the proofs of the main results:
\begin{itemize}
    \item \ref{sec:proof_th_1}: proof of Theorem~\ref{th:inference_bound};
    \item \ref{sec:wedin}: proof of Proposition~\ref{prop:davis_kahan_meta};
    \item \ref{sec:prf_sin}: proofs of Theorem~\ref{th:pre_training} and Example~\ref{ex:finite};
    \item \ref{sec:conv_rate}: proof of Corollary~\ref{cor:learning_rate};
    \item \ref{sec:algo_proof}: proofs Section~\ref{sec:algo};
    \item \ref{sec:exp_setting}: proof of Remark~\ref{rem:rem_exp}.
\end{itemize}
In Section~\ref{ref:aux_results}, we present auxiliary results used for the main proofs. In Section~\ref{sec:concentration}, we list concentration inequalities used in the different proofs. Finally, in Section~\ref{sec:appendix_exp}, we present additional results from our experimental results.

\section{Proofs of the Main Results} \label{sec:appendix_proofs}

\subsection{Proof of Theorem~\ref{th:inference_bound}} \label{sec:proof_th_1}
Before embarking on the proof of Theorem~\ref{th:inference_bound} we need a few preliminary results and definitions. We first introduce the empirical counterpart of the covariance operator, for $i \in [N] \cup \{T\}$ and $m = n$ if $i \in [N]$, $m = n_T$ if $i = T$,
\begin{equation} \label{eq:cov_empirical}
    \hat \Sigma_i = \frac{1}{m}\sum_{j=1}^m \phi(x_{i,j}) \otimes \phi(x_{i,j}) = \frac{1}{m} \Phi_i\Phi_i^*
\end{equation}
where $\Phi_i = [\phi(x_{i,1}), \ldots, \phi(x_{i,m})]$ is defined as
\begin{align*}
  \Phi_i \colon  & \mathbb{R}^m \longrightarrow \mathcal{H}\\[-1ex]
  v & \longmapsto \sum_{j=1}^m v_j\phi(x_{i,j})
\end{align*}
and admits as adjoint the sampling operator for task $i$, 
\begin{align*}
  \Phi_i^* \colon \mathcal{H} & \longrightarrow \mathbb{R}^m \\[-1ex]
  f & \longmapsto (\langle f, \phi(x_{i,j}) \rangle)_{j=1}^m
\end{align*}
The Gram matrix for each task is $K_i \doteq \Phi_i^*\Phi_i \in \R^{m \times m}$. For any linear operator $F: \cH \to \cH$ and scalar $\gamma>0$, we define $F_{\gamma}\doteq F + \gamma I_{\cH}$. With those notations and taking derivatives with respect to $f$ in Eq.~\eqref{eq:krr}, we can derive a closed-form expression for $\hat f_{i,\la}$, for $i \in [N]$,
\begin{equation}\label{eq:charac_f_la}
\hat{f}_{i,\lambda} = \hat \Sigma_{i,\la}^{-1} \frac{1}{n}\Phi_iY_i, \quad Y_i \doteq (y_{i,1},\dots,y_{i,n})^{\top} \in \R^n. 
\end{equation}
Recall that $\hat \cH_s$ is a RKHS with canonical feature map $\hat P \phi(\cdot)$ equipped with the same inner product as $\cH$. Hence, the covariance operator in that space equipped with the marginal distribution $\mu_T$ on $\cX$ is defined as 
\begin{equation} \label{eq:sigma_P}
    \Sigma_{\hat P} : = \E_{X \sim \mu_T} \left[\hat P \phi(X) \otimes \hat P \phi(X) \right] = \hat P \Sigma_T \hat P,
\end{equation}
which is a positive semi-definite self-adjoint operator. The counterpart of Eq.~\eqref{eq:cov_empirical} in $\hat \cH_s$ is 
\begin{align} 
\hat \Sigma_{\hat P} &\doteq \hat P \hat{\Sigma}_T \hat P, \nonumber \\
\Phi_{\hat P} &\doteq \hat P\Phi_T. \nonumber
\end{align}
Then $\hat \Sigma_{\hat P} = \frac{1}{n_T}\Phi_{\hat P}\Phi_{\hat P}^*$ is the empirical covariance in $\hat \cH_s$ for the target task. Therefore, since $\hat f_{T,\la_*}$ is the ridge estimator in $\hat \cH_s$ (see Eq.~\eqref{eq:krr_inference}), in light of Eq.~\eqref{eq:charac_f_la}, we have 
\begin{equation}\label{eq:charac_f_T_la}
\hat{f}_{T,\lambda_*} = \hat \Sigma_{\hat P,\la_*}^{-1} \frac{1}{n_T}\Phi_{\hat P}Y_T, \quad Y_T \doteq (y_{T,1},\dots,y_{T,n_T})^{\top} \in \R^{n_T}. 
\end{equation}
The next technical lemmata are useful for the proof of Theorem~\ref{th:inference_bound}. Recall that $\hat P$ is the projection onto $\hat \cH_s = \operatorname{span}\{\hat v_1, \ldots, \hat v_s\}$, hence $\hat P = \hat V \hat V^*$ where $\hat V = [\hat v_1, \ldots, \hat v_s]$ and $\hat V^*\hat V = I_s$ with $I_s$ the identity in $\R^{s\times s}$. 
\begin{lemma} \label{lma:b1}
    $\hat P\hat \Sigma_{T,\la_*}\hat P = \hat V(\hat V^*\hat \Sigma_T \hat V + \la_* I_s)\hat V^*$ and $\hat \Sigma_{\hat P, \la_*}^{-1}\hat P\hat \Sigma_{T,\la_*}\hat P = \hat P$.
\end{lemma}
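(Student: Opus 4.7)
The plan is to establish both identities by direct algebraic manipulation, relying only on the two structural facts $\hat P = \hat V \hat V^*$ and $\hat V^* \hat V = I_s$ (so in particular $\hat P^2 = \hat P$ and $\hat P \hat V = \hat V$), together with the convention $F_\gamma = F + \gamma I_{\cH}$ from the preamble.

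For the first identity, I would simply expand $\hat \Sigma_{T,\lambda_*} = \hat \Sigma_T + \lambda_* I_{\cH}$, then substitute $\hat P = \hat V \hat V^*$ on both sides and use $\hat P^2 = \hat P$ on the $\lambda_*$ term:
\begin{align*}
\hat P \hat \Sigma_{T,\lambda_*} \hat P
&= \hat P \hat \Sigma_T \hat P + \lambda_* \hat P \\
&= \hat V(\hat V^* \hat \Sigma_T \hat V)\hat V^* + \lambda_* \hat V I_s \hat V^* \\
&= \hat V\bigl(\hat V^* \hat \Sigma_T \hat V + \lambda_* I_s\bigr)\hat V^*.
\end{align*}

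For the second identity, in light of the first it suffices to check $\hat \Sigma_{\hat P,\lambda_*}^{-1} \hat V(\hat V^* \hat \Sigma_T \hat V + \lambda_* I_s)\hat V^* = \hat V \hat V^*$. Since $\hat V^* \hat V = I_s$, right-multiplying both sides by $\hat V$ reduces this to verifying the cleaner identity
\begin{equation*}
\hat \Sigma_{\hat P,\lambda_*}\, \hat V \;=\; \hat V\bigl(\hat V^* \hat \Sigma_T \hat V + \lambda_* I_s\bigr),
\end{equation*}
which I would obtain by expanding $\hat \Sigma_{\hat P,\lambda_*} = \hat P \hat \Sigma_T \hat P + \lambda_* I_{\cH}$ and applying $\hat P \hat V = \hat V$ to clear the two $\hat P$'s adjacent to $\hat V$. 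Pre-multiplying by $\hat \Sigma_{\hat P,\lambda_*}^{-1}$ and post-multiplying by $\hat V^*$ then yields the claim.

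There is no real obstacle here; this is pure bookkeeping. The one subtlety to keep in mind is that $\hat \Sigma_{\hat P,\lambda_*}$ is defined with $\lambda_* I_{\cH}$ rather than $\lambda_* \hat P$, so it does \emph{not} coincide with $\hat P \hat \Sigma_{T,\lambda_*} \hat P$ on all of $\cH$: the two agree on $\hat \cH_s$ (the range of $\hat V$), while on $\hat \cH_s^{\perp}$ the left-hand side acts as $\lambda_* I$ but the right-hand side is zero. This is exactly why the product $\hat \Sigma_{\hat P,\lambda_*}^{-1}\bigl(\hat P \hat \Sigma_{T,\lambda_*} \hat P\bigr)$ collapses to the projector $\hat P$ rather than to $I_{\cH}$.
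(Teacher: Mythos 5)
Your proof is correct, and both identities are established from the same two structural facts the paper uses ($\hat P = \hat V\hat V^*$ and $\hat V^*\hat V = I_s$); the first identity is handled identically. For the second identity there is a small difference in bookkeeping: the paper pushes $\hat V$ through the inverse directly via the matrix inversion lemma, writing $(\hat V \hat V^*\hat\Sigma_T\hat V\hat V^* + \la_* I_{\cH})^{-1}\hat V = \hat V(\hat V^*\hat\Sigma_T\hat V + \la_* I_s)^{-1}$, whereas you first verify the uninverted intertwining relation $\hat\Sigma_{\hat P,\la_*}\hat V = \hat V(\hat V^*\hat\Sigma_T\hat V + \la_* I_s)$ and only then apply $\hat\Sigma_{\hat P,\la_*}^{-1}$ and $\hat V^*$. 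The two routes are equivalent (your intertwining relation is exactly what underlies the push-through identity), but yours is marginally more self-contained since it avoids invoking the inversion lemma for operators between $\R^s$ and $\cH$; your closing remark about why the product collapses to $\hat P$ rather than $I_{\cH}$ is also accurate and a useful sanity check.
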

\begin{proof}
    The first identity is obtained by plugging $\hat P = \hat V \hat V^* $ and using $\hat V^* \hat V = I_s$. For the second identity, we have  
\begin{align*}
    \hat \Sigma_{\hat P, \la_*}^{-1}\hat P\hat \Sigma_{T,\la_*}\hat P 
    &= (\hat V \hat V^* \hat \Sigma_{T}\hat V\hat V^* + \la_* I_{\cH})^{-1}\hat V(\hat V^*\hat \Sigma_T \hat V + \la_* I_s)\hat V^*\\ &= \hat V( \hat V^* \hat \Sigma_{T}\hat V\hat V^*\hat V + \la_* I_{s})^{-1}(\hat V^*\hat \Sigma_T \hat V + \la_* I_s)\hat V^* \\ &= \hat V( \hat V^* \hat \Sigma_{T}\hat V + \la_* I_{s})^{-1}(\hat V^*\hat \Sigma_T \hat V+ \la_* I_s)\hat V^* \\ &= \hat V\hat V^* = \hat P,
\end{align*}
where in the second equality, we used the matrix inversion lemma.
\end{proof}
\begin{lemma} \label{lma:b1_bis}
    $\hat P-\hat \Sigma_{\hat P, \la_*}^{-1} \hat P\hat \Sigma _T = \la_* \hat \Sigma_{\hat P, \la_*}^{-1}\hat P - \hat \Sigma_{\hat P, \la_*}^{-1} \hat P\hat \Sigma_{T,\la_*}\hat P_{\perp}$, where $\hat P_{\perp} \doteq I_{\cH} - \hat P$.
\end{lemma}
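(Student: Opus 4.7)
The plan is to derive this identity as an easy algebraic consequence of Lemma~\ref{lma:b1}. Specifically, Lemma~\ref{lma:b1} gives $\hat \Sigma_{\hat P, \la_*}^{-1}\hat P \hat \Sigma_{T,\la_*}\hat P = \hat P$, which serves as the starting point.

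First, I would rewrite the projection inside this identity using $\hat P = I_{\cH} - \hat P_{\perp}$, producing
\[
\hat \Sigma_{T,\la_*}\hat P = \hat \Sigma_{T,\la_*} - \hat \Sigma_{T,\la_*}\hat P_{\perp} = \hat \Sigma_{T} + \la_* I_{\cH} - \hat \Sigma_{T,\la_*}\hat P_{\perp}.
\]
Then I would substitute this expansion back into Lemma~\ref{lma:b1} and distribute the leading factor $\hat \Sigma_{\hat P, \la_*}^{-1}\hat P$ across the three terms, which yields
\[
\hat P \;=\; \hat \Sigma_{\hat P, \la_*}^{-1}\hat P\hat \Sigma_{T} + \la_* \hat \Sigma_{\hat P, \la_*}^{-1}\hat P - \hat \Sigma_{\hat P, \la_*}^{-1}\hat P\hat \Sigma_{T,\la_*}\hat P_{\perp}.
\]
A rearrangement of terms then gives exactly the claimed identity.

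Since the whole argument is a direct algebraic consequence of Lemma~\ref{lma:b1} together with the definition $\hat \Sigma_{T,\la_*} = \hat \Sigma_T + \la_* I_{\cH}$ and the orthogonal decomposition $I_{\cH} = \hat P + \hat P_{\perp}$, there is no real obstacle to overcome; the only thing to be careful about is the order of operators, since $\hat P$ and $\hat \Sigma_T$ do not commute in general, so the expansion must preserve the left-to-right ordering throughout.
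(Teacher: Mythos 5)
Your proof is correct and is essentially the paper's own argument run in the opposite direction: the paper starts from $\hat P-\hat \Sigma_{\hat P, \la_*}^{-1} \hat P\hat \Sigma _T$, inserts $\hat \Sigma_{T,\la_*}=\hat\Sigma_T+\la_* I_{\cH}$ and $I_{\cH}=\hat P+\hat P_\perp$, and cancels via Lemma~\ref{lma:b1}, whereas you start from Lemma~\ref{lma:b1} and expand to reach the same identity. Same ingredients, same computation; no issues.
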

\begin{proof}
\begin{align*}
        \hat P-\hat \Sigma_{\hat P, \la_*}^{-1} \hat P\hat \Sigma _T 
        &= \hat P - \hat \Sigma_{\hat P, \la_*}^{-1} \hat P\hat \Sigma_{T,\la_*} + \la_*\hat \Sigma_{\hat P, \la_*}^{-1} \hat P\\ &= \hat P - \hat \Sigma_{\hat P, \la_*}^{-1} \hat P\hat \Sigma_{T,\la_*}(\hat P + \hat P_{\perp}) + \la_*\hat \Sigma_{\hat P, \la_*}^{-1} \hat P\\ 
        &= \la_* \hat \Sigma_{\hat P, \la_*}^{-1}\hat P - \hat \Sigma_{\hat P, \la_*}^{-1} \hat P\hat \Sigma_{T,\la_*}\hat P_{\perp},
\end{align*}
where the last equality follows from Lemma~\ref{lma:b1}.
\end{proof}
\begin{lemma} \label{ref:lma_b2}
    $\left\|\hat{\Sigma}_{\hat P,\la_*}^{-1/2} \hat P\hat \Sigma_{T,\la_*}^{1/2}\right\| \in \{0,1\}$. 
\end{lemma}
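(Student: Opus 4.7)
The plan is to compute $\|A\|^2 = \|AA^*\|$ for $A \doteq \hat \Sigma_{\hat P,\la_*}^{-1/2}\hat P \hat \Sigma_{T,\la_*}^{1/2}$, and in fact to show that $AA^*$ collapses to the orthogonal projection $\hat P$ itself. Since any orthogonal projection has operator norm in $\{0,1\}$, the claim will follow immediately.

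Expanding,
$$AA^* = \hat \Sigma_{\hat P,\la_*}^{-1/2}\hat P \hat \Sigma_{T,\la_*} \hat P \hat \Sigma_{\hat P,\la_*}^{-1/2}.$$
Using $\hat P^2 = \hat P$ and the definition $\hat \Sigma_{\hat P} = \hat P \hat \Sigma_T \hat P$, I simplify the middle factor as
$$\hat P \hat \Sigma_{T,\la_*} \hat P = \hat P \hat \Sigma_T \hat P + \la_* \hat P = \hat \Sigma_{\hat P} + \la_* \hat P.$$
Since $\hat \Sigma_{\hat P}\hat P = \hat P\hat\Sigma_T\hat P\hat P = \hat \Sigma_{\hat P}$, we may rewrite the right-hand side as $\hat \Sigma_{\hat P,\la_*}\hat P$, i.e.\ $\hat P \hat \Sigma_{T,\la_*} \hat P = \hat \Sigma_{\hat P,\la_*}\hat P$.

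Next I would exploit the commutativity of $\hat P$ with $\hat \Sigma_{\hat P,\la_*}$: from $\hat \Sigma_{\hat P}\hat P = \hat P\hat \Sigma_{\hat P} = \hat \Sigma_{\hat P}$ and the fact that $\hat P$ trivially commutes with $\la_* I_{\cH}$, one obtains $\hat P \hat \Sigma_{\hat P,\la_*} = \hat \Sigma_{\hat P,\la_*}\hat P$. Because $\hat \Sigma_{\hat P,\la_*}$ is bounded, positive, and self-adjoint, the functional calculus guarantees that $\hat P$ then also commutes with $\hat \Sigma_{\hat P,\la_*}^{-1/2}$. Substituting the simplified middle factor and moving $\hat P$ past the outer resolvent:
$$AA^* = \hat \Sigma_{\hat P,\la_*}^{-1/2}\, \hat \Sigma_{\hat P,\la_*}\hat P\, \hat \Sigma_{\hat P,\la_*}^{-1/2} = \hat \Sigma_{\hat P,\la_*}^{-1/2}\, \hat \Sigma_{\hat P,\la_*}\, \hat \Sigma_{\hat P,\la_*}^{-1/2}\,\hat P = \hat P.$$
Hence $\|A\|^2 = \|\hat P\| \in \{0,1\}$, proving the statement.

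There is no substantive obstacle here; the only delicate step is the commutativity of $\hat P$ with $\hat \Sigma_{\hat P,\la_*}^{-1/2}$ in the infinite-dimensional setting, which is secured by noting that $\hat \Sigma_{\hat P,\la_*}$ is a bounded positive self-adjoint operator commuting with $\hat P$, so the Borel functional calculus applies and transfers this commutativity to its inverse square root. Everything else is purely algebraic manipulation using $\hat P^2 = \hat P$ and the definitions of $\hat \Sigma_{\hat P}$ and $\hat \Sigma_{T,\la_*}$.
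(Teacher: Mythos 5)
Your proof is correct, and it reaches the conclusion by a slightly different (and arguably cleaner) route than the paper. Both arguments start from the same $C^*$-identity, reducing the norm of $A \doteq \hat{\Sigma}_{\hat P,\la_*}^{-1/2}\hat P\hat\Sigma_{T,\la_*}^{1/2}$ to the norm of a self-adjoint product and then showing that product is an orthogonal projection. The paper works with $A^*A$: it introduces $h_{\la_*} \doteq \hat\Sigma_{\hat P,\la_*}^{-1}\hat P = \hat V(\hat V^*\hat\Sigma_{T,\la_*}\hat V)^{-1}\hat V^*$, verifies the two idempotency identities $h_{\la_*}\hat\Sigma_{\hat P,\la_*}h_{\la_*} = h_{\la_*}$ and $h_{\la_*}\hat\Sigma_{T,\la_*}h_{\la_*} = h_{\la_*}$ by explicit $\hat V$-calculations, and concludes that $\hat\Sigma_{T,\la_*}^{1/2}h_{\la_*}\hat\Sigma_{T,\la_*}^{1/2}$ is a self-adjoint idempotent, hence of norm $0$ or $1$. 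You instead work with $AA^*$ and pin it down exactly as $\hat P$, using the identity $\hat P\hat\Sigma_{T,\la_*}\hat P = \hat\Sigma_{\hat P,\la_*}\hat P$ together with the fact that $\hat P$ commutes with $\hat\Sigma_{\hat P,\la_*}$ and hence, by the functional calculus, with $\hat\Sigma_{\hat P,\la_*}^{-1/2}$. Your route yields the stronger structural fact $AA^* = \hat P$ with less bookkeeping; the paper's route avoids invoking the spectral theorem by staying within explicit finite-dimensional $\hat V$-algebra, at the cost of the double application of $\|F\|^2 = \|F^*F\|$. Each step of your argument checks out (in particular the commutation $\hat P\hat\Sigma_{\hat P,\la_*} = \hat\Sigma_{\hat P,\la_*}\hat P$ and the boundedness of $\hat\Sigma_{\hat P,\la_*}^{-1/2}$, which holds since $\hat\Sigma_{\hat P,\la_*} \geq \la_* I_{\cH}$), so there is no gap.
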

\begin{proof}
    First, note that $h_{\la_*} \doteq \hat{\Sigma}_{\hat P,\la_*}^{-1} \hat P  = \hat V( \hat V^* \hat \Sigma_{T,\la_*}\hat V)^{-1}\hat V^*$. Secondly, note that 
    \begin{align*}
    h_{\la_*}\hat{\Sigma}_{\hat P,\la_*} h_{\la_*} &= \hat{\Sigma}_{\hat P,\la_*}^{-1} \hat P\hat{\Sigma}_{\hat P,\la_*}\hat{\Sigma}_{\hat P,\la_*}^{-1} \hat P \\ 
    &= \hat{\Sigma}_{\hat P,\la_*}^{-1} \hat P \\
    &= h_{\la_*}.
    \end{align*}
    Thirdly, 
    \begin{align*}
    h_{\la_*}\hat{\Sigma}_{T,\la_*} h_{\la_*}  &= \hat V(\hat V^* \hat \Sigma_{T,\la_*}\hat V)^{-1}\hat V^*(\hat \Sigma_{T} + \la_* I_{\cH})\hat V( \hat V^* \hat \Sigma_{T,\la_*}\hat V)^{-1}\hat V^*\\
    &= \hat V( \hat V^* \hat \Sigma_{T,\la_*}\hat V)^{-1}(\hat V^* \hat \Sigma_{T}\hat V + \la_* I_{s})( \hat V^* \hat \Sigma_{T,\la_*}\hat V)^{-1}\hat V^*\\
    &= \hat V( \hat V^* \hat \Sigma_{T,\la_*}\hat V)^{-1} \hat V^* \\ &= h_{\la_*}.
    \end{align*}
    Hence, using that for any bounded linear operator $F:\cH \to \cH$, $\|F\|^2 = \|F^*F\|$,
    \begin{align*}
        \left\|\hat{\Sigma}_{\hat P,\la_*}^{-1/2} \hat P\hat \Sigma_{T,\la_*}^{1/2}\right\|^4  &= \left\|\hat{\Sigma}_{\hat P,\la_*}^{1/2} h_{\la_*}\hat \Sigma_{T,\la_*}^{1/2}\right\|^4 \\ &= \left\|\hat \Sigma_{T,\la_*}^{1/2} h_{\la_*}\hat{\Sigma}_{\hat P,\la_*} h_{\la_*}\hat \Sigma_{T,\la_*}^{1/2}\right\|^2 \\ &=  \left\|\hat \Sigma_{T,\la_*}^{1/2}  h_{\la_*}\hat \Sigma_{T,\la_*}^{1/2}\right\|^2 \\ &=  \left\|\hat \Sigma_{T,\la_*}^{1/2}  h_{\la_*}\hat \Sigma_{T,\la_*}h_{\la_*}\hat \Sigma_{T,\la_*}^{1/2}\right\| \\ &= \left\|\hat \Sigma_{T,\la_*}^{1/2}  h_{\la_*}\hat \Sigma_{T,\la_*}^{1/2}\right\|.
    \end{align*}
Since $\left\|\hat \Sigma_{T,\la_*}^{1/2}  h_{\la_*}\hat \Sigma_{T,\la_*}^{1/2}\right\|^2 = \left\|\hat \Sigma_{T,\la_*}^{1/2}  h_{\la_*}\hat \Sigma_{T,\la_*}^{1/2}\right\|$, it belongs to $\{0,1\}$, and therefore,
$$\left\|\hat{\Sigma}_{\hat P,\la_*}^{-1/2} \hat P\hat \Sigma_{T,\la_*}^{1/2}\right\|  = \left\|\hat \Sigma_{T,\la_*}^{1/2}  h_{\la_*}\hat \Sigma_{T,\la_*}^{1/2}\right\|^{1/2} \in \{0,1\}.
$$
\end{proof}
\begin{proof}[Proof of Theorem~\ref{th:inference_bound}]
Under Assumptions \ref{asst:meta} and \ref{asst:rich} with $s \geq 1$, we have the following excess risk decomposition
\begin{equation}\label{eq:risk_decomp}
    \cE_{\mu_T}(\hat{f}_{T,\lambda_*}) = \|\hat{f}_{T,\lambda_*} - f_T\|_{L_2(\mu_T)} \leq \|\hat{f}_{T,\lambda_*} - \hat P f_{T}\|_{L_2(\mu_T)} + \|\hat P_{\perp}Pf_T\|_{L_2(\mu_T)},
\end{equation} 
where we used $\hat P_{\perp} + \hat P = I_{\cH}$ and $f_T = Pf_T$ since $f_T \in \cH_s$. Instead of working with the $L_2-$norm we can work with the $\cH$ norm as for any $f \in \hat \cH_s$,

\begin{align}\label{eq:norm_conv_0}
\|f\|_{L_2(\mu_T)}^2 &= \E_{X \sim \mu_T}[f(X)^2] \nonumber \\ &= \E_{X \sim \mu_T}[\langle f,\hat P\phi(X) \rangle_{\cH}^2] \nonumber \\ &= \E_{X \sim \mu_T}[\langle (\hat P\phi(X) \otimes \hat P\phi(X))f, f\rangle_{\cH}] \nonumber \\ &= \langle \Sigma_{\hat P}f, f \rangle_{\cH} \nonumber \\ &= \|\Sigma_{\hat P}^{1/2}f\|_{\cH}^2,
\end{align}
where in the second equality we used the reproducing property in $\hat \cH_s$  and in the fourth equality we used the definition of $\Sigma_{\hat P}$ in Eq.~\eqref{eq:sigma_P}. Similarly, for any $f \in \cH$,
\begin{equation} \label{eq:norm_conv}
    \|f\|_{L_2(\mu_T)} = \|\Sigma_{T}^{1/2}f\|_{\cH}.
\end{equation}
Therefore, we have  
\begin{equation} \label{eq:risk_dec_1}
    \|\hat P_{\perp}Pf_T\|_{L_2(\mu_T)} = \|\Sigma_{T}^{1/2}\hat P_{\perp}Pf_T\|_{\cH} \leq \|\Sigma_{T}\|^{1/2}\|\hat P_{\perp}P\|\|f_T\|_{\cH}  \leq \kappa\|\hat P_{\perp}P\|\|f_T\|_{\cH},
\end{equation}
where we used that for a bounded kernel (here $\sup_{x,x' \in \cX}K(x,x') \doteq \kappa^2 < \infty$), for any marginal distribution, the trace norm (and hence the operator norm) of the associated covariance operator is bounded by $\kappa$ \citep[see][Theorem 4.27]{steinwart2008support}. On the other hand, by Eq.~\eqref{eq:norm_conv_0} and Eq.~\eqref{eq:charac_f_T_la}, we have
\begin{align} \label{eq:risk_dec_2}
    \|\hat{f}_{T,\la_*} &- \hat Pf_T\|_{L_2(\mu_T)} = \left\|\Sigma_{\hat P}^{1/2}  \left( \hat{f}_{T,\la_*} - \hat Pf_T\right)\right\|_{\cH}  \nonumber \\
    &=\left\|\Sigma_{\hat P}^{1/2} \left(\hat \Sigma_{\hat P, \la_*}^{-1}\frac{1}{n_T}\Phi_{\hat P}Y_T-\hat P f_T  \right)\right\|_{\cH} \nonumber \\
    &= \left\|\Sigma_{\hat P}^{1/2} \left(\hat \Sigma_{\hat P, \la_*}^{-1} \frac{1}{n_T}\Phi_{\hat P}(Y_T-\Phi_{T}^*f_T +\Phi_{T}^*f_T)-\hat P f_T  \right)\right\|_{\cH}  \nonumber\\
    &\leq \left\| \Sigma_{\hat P}^{1/2} \hat \Sigma_{\hat P, \la_*}^{-1} \frac{1}{n_T}\Phi_{\hat P}(Y_T-\Phi_{T}^*f_T)\right\|_{\cH} +  \left\|\Sigma_{\hat P}^{1/2} \left(\hat \Sigma_{\hat P, \la_*}^{-1} \frac{1}{n_T}\Phi_{\hat P}\Phi_{T}^*f_T-\hat P f_T  \right)\right\|_{\cH} \nonumber \\
    &= \underbrace{\left\| \Sigma_{\hat P}^{1/2} \hat \Sigma_{\hat P, \la_*}^{-1} \frac{1}{n_T}\Phi_{\hat P}(Y_T-\Phi_{T}^*f_T)  \right\|_{\cH}}_{\doteq A} + \underbrace{\left\| \Sigma_{\hat P}^{1/2} \left(\hat \Sigma_{\hat P, \la_*}^{-1} \hat P\hat \Sigma_{T}-\hat P   \right)f_T\right\|_{\cH}}_{\doteq B}
\end{align}
where in the last equality we used $\Phi_{\hat P}\Phi_{T}^* = \hat P \Phi_{T}\Phi_{T}^* = n_T \hat P\hat \Sigma_T.$\\
\textbf{Term A.} For term A, we have 
\begin{align*}
    A &= \left\| \Sigma_{\hat P}^{1/2} \hat \Sigma_{\hat P, \la_*}^{-1} \frac{1}{n_T}\Phi_{\hat P}(Y_T-\Phi_{T}^*f_T) \right\|_{\cH} \\ &\leq \left\|\Sigma_{\hat P}^{1/2} \hat \Sigma_{\hat P, \la_*}^{-1/2} \right\|\left\|\hat \Sigma_{\hat P, \la_*}^{-1/2}\Sigma_{\hat P, \la_*}^{1/2}\right\|\left\|\Sigma_{\hat P, \la_*}^{-1/2}\frac{1}{n_T}\Phi_{\hat P}(Y_T-\Phi_{T}^*f_T)\right\|_{\cH} \\ &\leq \left\|\Sigma_{\hat P, \la_*}^{1/2} \hat \Sigma_{\hat P, \la_*}^{-1/2} \right\|^2\left\|\Sigma_{\hat P, \la_*}^{-1/2}\frac{1}{n_T}\Phi_{\hat P}(Y_T-\Phi_{T}^*f_T)\right\|_{\cH},
\end{align*}
where in the last inequality, we used
$$
\left\|\Sigma_{\hat P}^{1/2} \hat \Sigma_{\hat P, \la_*}^{-1/2} \right\| = \left\|\Sigma_{\hat P}^{1/2}\Sigma_{\hat P, \la_*}^{-1/2}\Sigma_{\hat P, \la_*}^{1/2} \hat \Sigma_{\hat P, \la_*}^{-1/2} \right\| \leq \left\|\Sigma_{\hat P}^{1/2}\Sigma_{\hat P, \la_*}^{-1/2}\right\|\left\|\Sigma_{\hat P, \la_*}^{1/2} \hat \Sigma_{\hat P, \la_*}^{-1/2} \right\| \leq \left\|\Sigma_{\hat P, \la_*}^{1/2} \hat \Sigma_{\hat P, \la_*}^{-1/2} \right\|.
$$
To deal with $\left\|\Sigma_{\hat P, \la_*}^{1/2}\hat \Sigma_{\hat P, \la_*}^{-1/2}\right\|$ we apply the first part of Proposition~\ref{prop:op_trick} to $C=\hat \Sigma_{\hat P}$, $D= \Sigma_{\hat P}$ and $\la_* > 0$, we get
\begin{equation} \label{eq:transform_a1}
    \left\|\Sigma_{\hat P, \la_*}^{1/2} \hat \Sigma_{\hat P, \la_*}^{-1/2} \right\| = \left\|\left(I - B_{T,\la_*} \right)^{-1}\right\|^{1/2},
\end{equation}
where $B_{T,\la_*} \doteq \Sigma_{\hat P, \la_*}^{-1/2}\hat P(\Sigma_T - \hat{\Sigma}_T)\hat P\Sigma_{\hat P, \la_*}^{-1/2}$. We control $B_{T,\la_*}$ in operator norm with a Bernstein-type concentration inequality for Hilbert-Schmidt operator valued random variables. By Proposition~\ref{prop:hsu_bernstein_finite_dim}, for $\la_*>0$, $\tau \geq 2.6$ and $n_T \geq 1$, the following operator norm bound is satisfied with $\mu_T^{n_T}$-probability not less than $1-e^{-\tau}$
$$
\left\|B_{T,\la_*}\right\| \leq \frac{2 \kappa^2(\tau+ \log(s))}{3 \la_*n_T}+\sqrt{\frac{4 \kappa^2(\tau+ \log(s))}{\la_* n_T}}
$$
conditionally on $\mathcal{D}_i = \{(x_{i,j}, y_{i,j})_{j=1}^{2n}\}, i \in [N]$. Therefore, if  $n_T \geq 6(\tau+\log(s))\kappa^2 \la_*^{-1}$, Proposition~\ref{prop:hsu_bernstein_finite_dim} yields
$$
\left\|B_{T,\la_*}\right\| \leq \frac{2}{3} \cdot \frac{(\tau+\log(s))\kappa^2 \la_*^{-1}}{n_T}+\sqrt{4 \cdot \frac{(\tau+\log(s))\kappa^2 \la_*^{-1}}{n_T }} \leq \frac{2}{3} \cdot \frac{1}{6}+\sqrt{4 \cdot \frac{1}{6}}<0.93
$$
with $\mu_T^{n_T}$-probability not less than $1-e^{-\tau}$. Consequently, the inverse of
$I-B_{T,\la_*}$ can be represented by the Neumann series. In particular, the Neumann series gives us the following bound
\begin{equation} \label{eq:neumann}
\left\|\Sigma_{\hat P, \la_*}^{1/2} \hat \Sigma_{\hat P, \la_*}^{-1/2} \right\|^2 = \left\|\left(I - B_{T,\la_*} \right)^{-1}\right\|
\leq \sum_{k=0}^{\infty}\left\|B_{T,\la_*}\right\|^k
\leq \sum_{k=0}^{\infty}\left(0.93\right)^k\leq 15
\end{equation}
with $\mu_T^{n_T}$-probability not less than $1-e^{-\tau}$. Hence, for $\la_*>0$, $\tau \geq 2.6$ and $n_T \geq 6(\tau+\log(s))\kappa^2 \la_*^{-1}$, conditionally on $\mathcal{D}_i = \{(x_{i,j}, y_{i,j})_{j=1}^{2n}\}, i \in [N]$ with $\mu_T^{n_T}$-probability not less than $1-e^{-\tau}$, 
\begin{equation}\label{eq:term_A_first_bound}
    A \leq 15\left\|\Sigma_{\hat P, \la_*}^{-1/2}\frac{1}{n_T}\Phi_{\hat P}(Y_T-\Phi_{T}^*f_T)\right\|_{\cH}.
\end{equation}
To deal with the remaining term in term A, note that 
$$
\Sigma_{\hat P, \la_*}^{-1/2}\frac{1}{n_T}\Phi_{\hat P}(Y_T-\Phi_{T}^*f_T)=\frac{1}{n_T} \sum_{i=1}^{n_T}\Sigma_{\hat P, \la_*}^{-1/2}\hat P \phi(x_{T,i})(y_{T,i} - f_T(x_{T,i})) = \frac{1}{n_T} \sum_{i=1}^{n_T}\xi(x_{T,i},y_{T,i})
$$
where 
\begin{align*}
  \xi \colon  & \cX \times \R \longrightarrow \mathcal{H}\\[-1ex]
  &(x,y) \longmapsto (y - f_T(x))\Sigma_{\hat P, \la_*}^{-1/2}\hat P \phi(x).
\end{align*}
We can bound this quantity in probability using a Bernstein concentration inequality for Hilbert space valued random variables (Theorem~\ref{th:pinelis}). First note that \[\E_{(X,Y) \sim \mu_{T}}\left[\xi(X,Y) \right] = \E_{(X,Y) \sim \mu_T}\left[\Sigma_{\hat P, \la_*}^{-1/2}\hat P \phi(X)(\E\left[Y \mid X\right] - f_T(X)) \right]=0.\]
Consequently, to apply Theorem~\ref{th:pinelis}, it remains to bound the $m$-th moment of $\xi$, for $m \geq 2$,
$$
\mathbb{E}_{(X,Y) \sim \mu_{T}}\left\|\xi(X,Y)\right\|_{\cH}^{m}=\int_{\cX}\left\|\Sigma_{\hat P, \la_*}^{-1/2}\hat P \phi(x)\right\|_{\cH}^{m} \int_{\R}|y-f_{T}(x)|^{m} \mu_{T}(x, \mathrm{~d} y) \mathrm{d} \mu_T(x) .
$$
The inner integral can be bounded by Assumption $\ref{asst:mom}$, for $\mu_T$-almost all $x \in \cX$, for $m \geq 2$, 
$$
\int_{\R}|y-f_{T}(x)|^{m} \mu_{T}(x, \mathrm{~d} y) \leq 2^m Y_{\infty}^m \leq \frac{1}{2} m ! (2Y_{\infty})^{m}.
$$
Then, by Lemma~\ref{lma:eff_dim_exp}, and since $\operatorname{dim}(\hat \cH_s)=s$,
$$
\int_{\cX}\left\|\Sigma_{\hat P, \la_*}^{-1/2}\hat P \phi(x)\right\|_{\cH}^{2} \mathrm{d} \mu_T(x) = \operatorname{Tr}(\Sigma_{\hat P, \la_*}^{-1}\Sigma_{\hat P}) \leq s.
$$
Since $\|\hat P\|\leq 1$ and $\sup_{x,x' \in \cX}K(x,x') \doteq \kappa^2 < \infty$, we have for all $x \in \cX$, 
\begin{align*}
\left\|\Sigma_{\hat P, \la_*}^{-1/2}\hat{P}\phi(x)\right\|_{\mathcal{H}} \leq \left\|\Sigma_{\hat P, \la_*}^{-1/2}\right\|\|\hat{P}\|\|\phi(x)\|_{\mathcal{H}} \leq \frac{\kappa}{\sqrt{\la_*}}.  
\end{align*}
Therefore,
$$
\begin{aligned}
\mathbb{E}_{(X,Y) \sim \mu_{T}}\left\|\xi(X,Y)\right\|_{\cH}^{m} &\leq \frac{1}{2} m ! (2Y_{\infty})^{m} \left(\frac{\kappa}{\sqrt{\la_*}}\right)^{m-2} \int_{\cX}\left\|\Sigma_{\hat P, \la_*}^{-1/2}\hat P\phi(x)\right\|_{\cH}^{2} \mathrm{d} \mu_T(x) \\ &\leq \frac{1}{2} m ! (2Y_{\infty})^2 \left(2Y_{\infty}\frac{\kappa}{\sqrt{\la_*}}.\right)^{m-2}s.
\end{aligned}
$$
Applying Theorem~\ref{th:pinelis} and Proposition~\ref{prop:bound_conversion} with $v^2 = (2Y_{\infty})^2s$ and $b = 2Y_{\infty} \kappa \la_*^{-1/2},$ we get that for $\tau \geq 1$ and $n_T \geq 1$, with probability at least $1-2e^{-\tau}$,
\begin{align}
\left\|\frac{1}{n_T} \sum_{i=1}^{n_T}\Sigma_{\hat P, \la_*}^{-1/2}\hat P\phi(x_{T,i})(y_{T,i} - f_T(x_{T,i})) \right\| \leq \sqrt{\frac{2\tau (2Y_{\infty})^2s}{n_T}} + \frac{4\tau Y_{\infty}\kappa}{n_T \sqrt{\la_*}}, \nonumber
\end{align}
conditionally on $\mathcal{D}_i = \{(x_{i,j}, y_{i,j})_{j=1}^{2n}\}, i \in [N]$. Therefore, merging with Eq.~\eqref{eq:term_A_first_bound} and using a union bound, for $\la_*>0$, $\tau \geq 2.6$ and $n_T \geq 6(\tau+\log(s))\kappa^2 \la_*^{-1}$, conditionally on $\mathcal{D}_i = \{(x_{i,j}, y_{i,j})_{j=1}^{2n}\}, i \in [N]$, with $\mu_T^{n_T}$-probability not less than $1-3e^{-\tau}$
\begin{equation} \label{eq:bound_A}
A \leq 15\left(\sqrt{\frac{8\tau Y_{\infty}^2s}{n_T}} + \frac{4\tau Y_{\infty}\kappa}{n_T \sqrt{\la_*}}\right).
\end{equation}
\textbf{Term B.} By Lemma~\ref{lma:b1_bis}, we have 
$$
B = \left\| \Sigma_{\hat P}^{1/2} \left(\hat \Sigma_{\hat P, \la_*}^{-1} \hat P\hat \Sigma _T-\hat P   \right)f_T\right\|_{\cH} \leq \underbrace{\la_*\left\|\Sigma_{\hat P}^{1/2} \hat \Sigma_{\hat P, \la_*}^{-1}\hat Pf_T\right\|_{\cH}}_{\doteq B.1} + \underbrace{\left\| \Sigma_{\hat P}^{1/2} \hat \Sigma_{\hat P, \la_*}^{-1} \hat P\hat \Sigma_{T,\la_*}\hat P_{\perp}f_T\right\|_{\cH}}_{\doteq B.2}.
$$
For B.1,
\begin{align}
B.1 &\leq \la_* \left\|\Sigma_{\hat P}^{1/2}\hat \Sigma_{\hat P, \la_*}^{-1/2}\right\| \left\|\hat \Sigma_{\hat P, \la_*}^{-1/2}\right\|\|\hat P\|\left\|f_T\right\|_{\cH} \nonumber \\ &\leq \sqrt{\la_*} \left\|\Sigma_{\hat P, \la_*}^{1/2}\hat \Sigma_{\hat P, \la_*}^{-1/2}\right\|\left\|f_T\right\|_{\cH} \nonumber.
\end{align}
We encountered the first term when we bounded Term A (see Eqs.~\eqref{eq:transform_a1} and \eqref{eq:neumann})). For $\la_* > 0$, $\tau \geq 2.6$ with $n_T \geq 6(\tau+\log(s))\kappa^2 \la_*^{-1}$, with probability at least $1-e^{-\tau}$,
$$
\left\|\Sigma_{\hat P, \la_*}^{1/2}\hat \Sigma_{\hat P, \la_*}^{-1/2}\right\| \leq \sqrt{15},
$$
conditionally on $\mathcal{D}_i = \{(x_{i,j}, y_{i,j})_{j=1}^{2n}\}, i \in [N]$. Hence,
\begin{equation} \label{eq:bound_b1}
    B.1 \leq \sqrt{15\la_*} \left\|f_T\right\|_{\cH}.
\end{equation}
For term B.2., for $\la_* > 0$, $\tau \geq 2.6$ with $n_T \geq 6(\tau+\log(s))\kappa^2 \la_*^{-1}$, with probability at least $1-e^{-\tau}$
\begin{align*}
B.2 &\leq \left\|\Sigma_{\hat P}^{1/2}\hat{\Sigma}_{\hat P,\la_*}^{-1/2}\right\|\left\|\hat{\Sigma}_{\hat P,\la_*}^{-1/2} \hat P\hat \Sigma_{T,\la_*}^{1/2}\right\|\left\|\hat\Sigma_{T,\la_*}^{1/2}
\right\| \left\|\hat{P}_{\perp}Pf_T\right\|_{\cH}\\
&\leq \sqrt{15}\left(\kappa + \la_*\right)^{1/2} \left\|\hat{P}_{\perp}P\right\|\left\|f_T\right\|_{\cH},
\end{align*}
where we used Lemma~\ref{ref:lma_b2} and Eqs.~\eqref{eq:transform_a1} and \eqref{eq:neumann} again. 
Putting together Eq.~\eqref{eq:risk_decomp}, Eq.~\eqref{eq:risk_dec_1}, Eq.~\eqref{eq:risk_dec_2}, Eq.~\eqref{eq:bound_A} and Eq.~\eqref{eq:bound_b1}, for $\la_*>0$, $\tau \geq 2.6$ and 
$$
n_T \geq 6\kappa^2 \la_*^{-1}\left(\tau+\log\left(s\right)\right),
$$
conditionally on $\mathcal{D}_i = \{(x_{i,j}, y_{i,j})_{j=1}^{2n}\}, i \in [N]$ with $\mu_T^{n_T}$-probability not less than $1-3e^{-\tau}$,
\begin{align} 
    \cE_{\mu_T}(\hat{f}_{T,\la_*}) &\leq c \left\{ \left(\sqrt{\frac{\tau Y_{\infty}^2s}{n_T}} + \frac{\tau Y_{\infty}\kappa}{n_T \sqrt{\la_*}}\right) + \sqrt{\la_*} \left\|f_T\right\|_{\cH} + \sqrt{\kappa+ \la_*}\left\|\hat{P}_{\perp}P\right\| \left\|f_T\right\|_{\cH}\right\}, \nonumber
\end{align}
where $c$ is a universal constant. 
\end{proof}

\subsection{Proof of Proposition~\ref{prop:davis_kahan_meta}}\label{sec:wedin}
We prove the following infinite dimensional version of Wedin's $\sin-\Theta$ Theorem. 

\begin{theorem} \label{th:wedin_generalized}
 Let $A: H \rightarrow H$ and $\widehat{A}: H \rightarrow H$ be compact operators on a separable Hilbert space $H$ with nonincreasingly ordered singular values $\left(\gamma_i\right)_{i\geq 1}$ and $\left(\hat \gamma_i\right)_{i \geq 1}$ respectively. Let $s \leq \min \{\operatorname{rank}(A), \operatorname{rank}(\widehat{A})\}$ and assume $\gamma_s > \gamma_{s+1}$. Let furthermore $P$ and $\widehat{P}$ be the projections on the span of the top-$s$ left singular vectors for $A$ and $\widehat{A}$ respectively. Then we have,
$$
\|(I-\widehat{P}) P\| \leq \frac{2\|A - \hat A\|}{\gamma_s - \gamma_{s+1}},
$$ 
where the result also holds in Hilbert-Schmidt norm. Both bounds also hold when we replace the top-$s$ left singular vectors with the sets of top-$s$ right singular vectors.
\end{theorem}
\begin{proof}
In this proof, $\|\cdot\|$ denotes either the operator norm or the Hilbert-Schmidt norm. First note that $\|(I-\widehat{P}) P\| \leq 1$, therefore if $2\|A - \widehat{A}\| \geq \gamma_s - \gamma_{s+1}$, the bound is trivially obtained. Let us now consider $2\|A - \widehat{A}\| \leq \gamma_s - \gamma_{s+1}$. We start by assuming that $A$ and $\widehat{A}$ are rectangular $n \times m$ matrices. By Wedin's $\sin-\Theta$ Theorem, if $\gamma_s - \hat \gamma_{s+1} > 0$,
\begin{equation} \label{eq:wedin_original}
    \|(I-\widehat{P}) P\| \leq \frac{\|A - \widehat{A}\|}{\gamma_s - \hat \gamma_{s+1}}.
\end{equation}
By Weyl's inequality for singular values,
$$
\hat\gamma_{s+1} - \gamma_{s+1} \leq \|A - \widehat{A}\| \leq \frac{\gamma_s - \gamma_{s+1}}{2}.
$$
This implies, by the assumption $\gamma_s > \gamma_{s+1}$, that
$$
\gamma_s - \hat \gamma_{s+1} \geq \frac{\gamma_s - \gamma_{s+1}}{2} > 0.
$$
Therefore, combining Eq.~(\ref{eq:wedin_original}) and $2\|A - \widehat{A}\| \leq \gamma_s - \gamma_{s+1}$, we obtain 
\begin{equation}\label{eq:wedin_new}
\|(I-\widehat{P}) P\| \leq \frac{\|A - \widehat{A}\|}{\gamma_s - \hat \gamma_{s+1}} \leq \frac{2\|A - \hat A\|}{\gamma_s - \gamma_{s+1}}
\end{equation}
Let us now assume that $A$ and $\widehat{A}$ are compact operators.  Let $U$ and $\widehat{U}$ be the sets of first left $s+1$ eigenvectors of $A$ and $\widehat{A}$, respectively and let $\Pi_{U \cup \widehat{U}}$ be the projection on the union of their spans. Let $V$ and $\widehat{V}$ be the sets of first right $s+1$ eigenvectors of $A$ and $\widehat{A}$, respectively and let $\Pi_{V \cup \widehat{V}}$ be the projection on the union of their spans. We define the operators $A_0\doteq\Pi_{U \cup \widehat{U}} A \Pi_{V \cup \widehat{V}}$ and $\widehat{A}_0\doteq\Pi_{U \cup \widehat{U}} \widehat{A} \Pi_{V \cup \widehat{V}}$. By construction, the first $s+1$ singular values and left-right eigenvectors of $A_0$ and $\widehat{A}_0$ coincide with the first $s+1$ singular values and left-right eigenvectors of $A$ and $\widehat{A}$, respectively. By choosing some orthonormal basis of the finite-dimensional spaces $\operatorname{span}(U \cup \widehat{U})$ and $\operatorname{span}(V \cup \widehat{V})$ and expressing $A_0$ and $\widehat{A}_0$ in terms of matrices, we can apply the previous Eq.~(\ref{eq:wedin_new}) to conclude the proof.
\end{proof}

The extension of the original Wedin's $\sin-\Theta$ Theorem \cite{wedin1972perturbation} to Hilbert spaces is taken from the proof technique used in Theorem A.4.4 \cite{mollenhauer2021statistical}.

\begin{proof}[Proof of Proposition~\ref{prop:davis_kahan_meta}]
    We apply Theorem~\ref{th:wedin_generalized} to $C_N$ and $\hat{C}_{N,n,\la}$. As $C_N$ has rank $s$, $\gamma_{s+1}=0$.   
\end{proof}

\subsection{Proof of Theorem~\ref{th:pre_training}}\label{sec:prf_sin}
Before proving Theorem~\ref{th:pre_training}, we provide some intermediate results. 

\begin{lemma}\label{lma:lemma_1} For all $i \in [N]$, we have 
\begin{equation*} 
    \mathbb{E}[\hat f_{i,\lambda}] = \left(I - \lambda \mathbb{E}\left[\hat \Sigma_{i,\lambda}^{-1}\right]\right)f_i \qquad \text{ and } \qquad \|\mathbb{E}[\hat f_{i,\lambda}]\|_{\mathcal{H}} \leq \|f_i\|_{\mathcal{H}} 
\end{equation*}
\end{lemma}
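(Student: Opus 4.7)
}
The plan is to start from the closed-form expression for $\hat f_{i,\lambda}$ in Eq.~\eqref{eq:charac_f_la}, namely $\hat f_{i,\lambda} = \hat\Sigma_{i,\lambda}^{-1}\frac{1}{n}\Phi_i Y_i$, and take the expectation in two stages: first conditionally on the covariates $X_i \doteq (x_{i,1},\dots,x_{i,n})$, and then over the covariates. Since $\hat\Sigma_{i,\lambda}^{-1}$ and $\Phi_i$ are $X_i$-measurable, the conditional expectation reduces to $\mathbb{E}[\hat f_{i,\lambda} \mid X_i] = \hat\Sigma_{i,\lambda}^{-1}\frac{1}{n}\Phi_i \mathbb{E}[Y_i \mid X_i]$. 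By Assumption~\ref{asst:src}, $f_i \in \mathcal{H}$, and so by the reproducing property $(\mathbb{E}[Y_i \mid X_i])_j = f_i(x_{i,j}) = \langle f_i,\phi(x_{i,j})\rangle_{\mathcal{H}} = (\Phi_i^* f_i)_j$. Hence $\mathbb{E}[Y_i\mid X_i] = \Phi_i^* f_i$, and using $\frac{1}{n}\Phi_i\Phi_i^* = \hat\Sigma_i$ we obtain $\mathbb{E}[\hat f_{i,\lambda} \mid X_i] = \hat\Sigma_{i,\lambda}^{-1}\hat\Sigma_i f_i$.

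Next I would use the algebraic identity $\hat\Sigma_{i,\lambda}^{-1}\hat\Sigma_i = \hat\Sigma_{i,\lambda}^{-1}(\hat\Sigma_{i,\lambda} - \lambda I_{\mathcal{H}}) = I_{\mathcal{H}} - \lambda \hat\Sigma_{i,\lambda}^{-1}$, giving $\mathbb{E}[\hat f_{i,\lambda}\mid X_i] = (I_{\mathcal{H}} - \lambda \hat\Sigma_{i,\lambda}^{-1}) f_i$. Taking the outer expectation over $X_i$ and using linearity yields the first identity
\begin{equation*}
\mathbb{E}[\hat f_{i,\lambda}] = \bigl(I_{\mathcal{H}} - \lambda\, \mathbb{E}[\hat\Sigma_{i,\lambda}^{-1}]\bigr) f_i.
\end{equation*}

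For the norm bound, I would observe that $\hat\Sigma_i$ is almost surely positive semidefinite, so $\lambda\hat\Sigma_{i,\lambda}^{-1}$ is a self-adjoint operator whose spectrum lies in $(0,1]$ (its eigenvalues on $\operatorname{ran}\hat\Sigma_i$ are $\lambda/(\sigma+\lambda) \in (0,1]$ for $\sigma \geq 0$). Equivalently, $I_{\mathcal{H}} - \lambda\hat\Sigma_{i,\lambda}^{-1}$ is self-adjoint with spectrum in $[0,1)$, so $\|I_{\mathcal{H}}-\lambda\hat\Sigma_{i,\lambda}^{-1}\| \leq 1$ almost surely. Since the set of self-adjoint operators with spectrum in $[0,1]$ is closed under convex combinations (and closed in the weak operator topology), it is closed under Bochner integration, so $I_{\mathcal{H}} - \lambda\,\mathbb{E}[\hat\Sigma_{i,\lambda}^{-1}]$ also has operator norm at most $1$. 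Applying this to $f_i$ gives $\|\mathbb{E}[\hat f_{i,\lambda}]\|_{\mathcal{H}} \leq \|f_i\|_{\mathcal{H}}$.

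No step here presents a real obstacle; the only subtlety is the measurability/integrability justification for exchanging expectation with the operator inverse, but since $\|\hat\Sigma_{i,\lambda}^{-1}\| \leq \lambda^{-1}$ deterministically, the Bochner integral $\mathbb{E}[\hat\Sigma_{i,\lambda}^{-1}]$ is well defined and the preceding manipulations are valid.
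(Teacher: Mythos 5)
Your proposal is correct and follows essentially the same route as the paper: condition on the covariates so that the noise term vanishes, apply the identity $\hat\Sigma_{i,\lambda}^{-1}\hat\Sigma_i = I - \lambda\hat\Sigma_{i,\lambda}^{-1}$, and bound the norm using that $I - \lambda\hat\Sigma_{i,\lambda}^{-1}$ has spectrum in $[0,1]$. The only cosmetic difference is in the last step, where the paper simply invokes $\|\mathbb{E}[A]\| \leq \mathbb{E}\|A\|$ for the Bochner integral rather than your convexity argument; both are valid.
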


\begin{proof}
    For all $i\in [N]$, we define $Y_i \doteq (y_{i,1},\dots,y_{i,n})^{\top} \in \R^n$ and $\epsilon_i \doteq [\epsilon_{i,1},\dots,\epsilon_{i,n}]^{\top} \in \R^n$ where $\epsilon_{i,j} \doteq y_{i,j} - f_i(x_{i,j})$, $j \in [n]$. For all $i\in [N]$, using $\epsilon_i = Y_i - \Phi_i^*(f_i)$, we get from Eq.~\eqref{eq:charac_f_la} that $\hat f_{i,\lambda}$ can be decomposed as 
    \begin{equation*} 
    \hat f_{i,\lambda} = \hat \Sigma_{i,\lambda}^{-1}\frac{\Phi_iY_i}{n} = \hat \Sigma_{i,\lambda}^{-1}\hat \Sigma_{i}f_i+  \hat \Sigma_{i,\lambda}^{-1}\frac{\Phi_i\epsilon_i}{n} ,
    \end{equation*}
    Since $\E[\epsilon_i \mid x_{i,1}, \ldots, x_{i,n}] = 0$, it yields 
    \begin{equation*} 
    \mathbb{E}[\hat f_{i,\lambda}] = \mathbb{E}[ \hat \Sigma_{i,\lambda}^{-1}\hat \Sigma_{i}f_i + \hat \Sigma_{i,\lambda}^{-1}\frac{\Phi_i}{n}\mathbb{E}[\epsilon_i \mid x_{i,1}, \ldots, x_{i,n}]] = \mathbb{E}[\hat \Sigma_{i,\lambda}^{-1}\hat \Sigma_{i}f_i] = \left(I - \lambda \mathbb{E} \hat \Sigma_{i,\lambda}^{-1}\right)f_i.
    \end{equation*}
    It gives us the following bound on $\|\mathbb{E}[\hat f_{i,\lambda}]\|_{\mathcal{H}}$
    \begin{equation*}  
        \|\mathbb{E}[\hat f_{i,\lambda}]\|_{\mathcal{H}} \leq \left\|I - \lambda \mathbb{E}\left[\hat \Sigma_{i,\lambda}^{-1} \right]\right\|\|f_i\|_{\mathcal{H}} \leq \mathbb{E}\left[\left\|I - \lambda \hat \Sigma_{i,\lambda}^{-1}\right\|\right]\|f_i\|_{\mathcal{H}} \leq \|f_i\|_{\mathcal{H}},
    \end{equation*}
    where in the last inequality we used the fact that the eigenvalues of $I - \lambda \hat \Sigma_{i,\lambda}^{-1}$ are in the interval $[0,1]$, hence its operator norm is bounded by $1$.
\end{proof}
For each source task $i \in [N]$, we introduce the regularized population regression function
\begin{equation*}
    f_{i,\lambda} \doteq \argmin_{f \in \mathcal{H}} \mathbb{E}_{\mu_i}\left[\left(Y - f(X) \right)^2\right] + \lambda \|f\|^2_{\mathcal{H}}.
\end{equation*}
It admits the closed-form expression
\begin{equation}\label{eq:pop_estim}
    f_{i,\lambda} = \Sigma_{i,\la}^{-1}\Sigma_{i} f_i = \left(I - \lambda \Sigma_{i,\lambda}^{-1}\right)f_i.
\end{equation}
Therefore, we have the following bound for its $\cH-$norm 
\begin{equation} \label{eq:ridge_estimator_bound_1_r}
    \|f_{i,\lambda}\|_{\cH} = \left\|\left(I - \lambda \Sigma_{i,\lambda}^{-1}\right)f_i\right\|_{\cH} \leq \|I - \lambda \Sigma_{i,\lambda}^{-1}\|\|f_i\|_{\cH} \leq  \|f_i\|_{\cH}.
\end{equation}
Furthermore, we have
$$
    f_{i,\lambda} - \mathbb{E}[\hat f_{i,\lambda}] =  \lambda\left(\mathbb{E} \hat \Sigma_{i,\lambda}^{-1} - \Sigma_{i,\lambda}^{-1}\right)f_i.
$$
This quantity is the statistical bias of the estimator $\hat f_{i,\lambda}$. To prove Theorem~\ref{th:pre_training}, we use the following decomposition,
\begin{equation}
    \|\hat C_{N,n,\lambda} - C_N\|_{HS} \leq \underbrace{\|\hat C_{N,n,\lambda} -  \bar C_{N,n,\lambda}\|_{HS}}_{\doteq\text{Variance}} + \underbrace{\| \bar C_{N,n,\lambda} - C_N\|_{HS}}_{\doteq\text{Bias}},\nonumber
\end{equation}
where 
$$
\bar C_{N,n,\la} \doteq \frac{1}{N} \sum_{i=1}^N \E (\hat f_{i,\la}) \otimes \E (\hat f_{i,\la}),
$$
and $C_N$, $\hat C_{N,n,\lambda}$ are defined in Eqs.~\eqref{eq:task_cov} and \eqref{eq:task_cov_approx} respectively.

\begin{theorem}[Bounds on the variance term]\label{theo:var_pf} Suppose Assumption~\ref{asst:src} and Assumption~\ref{asst:mom} hold. Define $\operatorname{\mathbf{Var}}_{\lambda}^{(N,n)}\doteq \|\hat C_{N,n,\lambda} -  \bar C_{N,n,\lambda}\|_{HS} $. For $\la \in (0,1],\tau,\delta \geq \log(2)$ and $N,n \geq 1$, with probability greater than $1-2e^{-\tau}-4Ne^{-\delta}$,
\begin{equation}\label{eq:bound_variance_2}
   \operatorname{\mathbf{Var}}_{\lambda}^{(N,n)}  \leq c_1\left\{\left(\frac{\delta^2}{n \lambda^2} + \frac{\delta}{\sqrt{n} \lambda}  + \frac{e^{-\delta}}{\lambda}\right)\sqrt{\frac{\tau}{N}} + \frac{e^{-\delta}}{\lambda} \right\},
\end{equation}
with $c_1$ a constant depending on $Y_{\infty}$, $\max_{i \in [N]}\left\| f_i \right\|_{\cH}$, $\kappa$ and $R$.

Alternatively, suppose Assumptions~\ref{asst:evd}, \ref{asst:emb}, \ref{asst:src} and \ref{asst:mom} hold. For $0< \la < 1 \wedge \min_{i \in [N]}\left\|\Sigma_i\right\|$, $\delta \geq 1$, $\tau \geq \log(2)$, $N \geq 1$ and  $n \geq c_0\delta\left(1 + p\log (\la^{-1})\right) \lambda^{-\alpha}$, with probability greater than $1-2e^{-\tau}-8Ne^{-\delta}$,
\begin{equation}\label{eq:bound_variance_1}
   \operatorname{\mathbf{Var}}_{\lambda}^{(N,n)}  \leq c\left\{\left(\left(\frac{\delta}{\sqrt{n} \lambda^{\frac{1+p}{2}}}\sqrt{1+ \frac{1}{n \lambda^{\alpha-p}}}\right)^2 +  \frac{\delta}{\sqrt{n} \lambda^{\frac{1+p}{2}}}\sqrt{1+ \frac{1}{n \lambda^{\alpha-p}}}  + \frac{e^{-\delta}}{\lambda}\right)\sqrt{\frac{\tau}{N}} + \frac{e^{-\delta}}{\lambda}\right\}
\end{equation}
with $c_0$ a constant depending on $k_{\alpha,\infty}, D$ and $c$ a constant depending on $Y_{\infty}, k_{\alpha,\infty}, D$ and $R$. 
\end{theorem}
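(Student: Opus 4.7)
The plan is to exploit the structure $\xi_i = \eta_i - \E[\eta_i]$ with $\eta_i \doteq \hat f'_{i,\lambda}\otimes \hat f_{i,\lambda} - f_{i,\lambda}\otimes f_{i,\lambda}$, reducing the problem to controlling, per task, the RKHS estimation error $\|\hat f_{i,\lambda}-f_{i,\lambda}\|_{\cH}$ (and the analogous quantity for $\hat f'_{i,\lambda}$). Expanding via $a\otimes b - c\otimes c = (a-c)\otimes(b-c) + c\otimes(b-c) + (a-c)\otimes c$ with $a=\hat f'_{i,\lambda}$, $b=\hat f_{i,\lambda}$, $c=f_{i,\lambda}$, and using $\|f_{i,\lambda}\|_{\cH}\leq \|f_i\|_{\cH}$ from Eq.~\eqref{eq:ridge_estimator_bound_1_r}, I obtain $\|\eta_i\|_{HS}\leq \|\hat f'_{i,\lambda}-f_{i,\lambda}\|_{\cH}\|\hat f_{i,\lambda}-f_{i,\lambda}\|_{\cH} + \|f_i\|_{\cH}(\|\hat f_{i,\lambda}-f_{i,\lambda}\|_{\cH}+\|\hat f'_{i,\lambda}-f_{i,\lambda}\|_{\cH})$. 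For~\eqref{eq:bound_variance_2}, I plug in the naive high-probability KRR bound $\|\hat f_{i,\lambda}-f_{i,\lambda}\|_{\cH}\lesssim \delta/(\sqrt n\,\lambda)$ (which needs only Assumptions~\ref{asst:src}--\ref{asst:mom}); for the sharper~\eqref{eq:bound_variance_1}, I use the refined bound $\|\hat f_{i,\lambda}-f_{i,\lambda}\|_{\cH}\lesssim \delta/(\sqrt n\,\lambda^{(1+p)/2})\sqrt{1+1/(n\lambda^{\alpha-p})}$, obtained as in \citet{fischer2020sobolev} and which is precisely what forces the constraint $n\gtrsim \delta(1+p\log\lambda^{-1})\lambda^{-\alpha}$ (needed to certify $\|\Sigma_{i,\lambda}^{-1/2}(\hat\Sigma_i-\Sigma_i)\Sigma_{i,\lambda}^{-1/2}\|\leq 1/2$ by a Bernstein argument on the covariance). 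A union bound over the $2N$ independent regression estimates (two per task by the data-splitting scheme) produces a good event $E_{N,\delta,n,\lambda}=\bigcap_i E_{i,\delta,n,\lambda}$ of probability at least $1-4Ne^{-\delta}$ (respectively $1-8Ne^{-\delta}$ in the sharp regime, absorbing the extra event from the covariance concentration) on which $\|\xi_i\|_{HS}\leq V(\delta,n,\lambda)$ for every $i$.

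Next I apply the Hoeffding inequality in Hilbert space (Theorem~\ref{th:hoeffding_hilbert}) to $\frac{1}{N}\sum_i (\xi_i-\E[\xi_i\mid E_{N,\delta,n,\lambda}])$ conditionally on $E_{N,\delta,n,\lambda}$. The delicate point is that conditioning destroys the zero-mean property of $\xi_i$, so I must separately bound the conditional bias. By independence across tasks, $\E[\xi_i\mid E_{N,\delta,n,\lambda}]=\E[\xi_i\mid E_{i,\delta,n,\lambda}]$, and since $\E[\xi_i]=0$,
\[
\bigl\|\E[\xi_i\mid E_{i,\delta,n,\lambda}]\bigr\|_{HS}\;\leq\;\frac{\P(E_{i,\delta,n,\lambda}^c)}{\P(E_{i,\delta,n,\lambda})}\,\E\!\bigl[\|\xi_i\|_{HS}\mid E_{i,\delta,n,\lambda}^c\bigr]\;\leq\;\frac{4e^{-\delta}}{\lambda},
\]
using the deterministic bound $\|\xi_i\|_{HS}\leq \lambda^{-1}$ from Proposition~\ref{prop:xi_as} and $\P(E_{i,\delta,n,\lambda})\geq 1/2$ which holds for $\delta\geq\log 2$. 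This is exactly the $e^{-\delta}/\lambda$ additive residual appearing in the statements.

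Combining the two steps via a union bound yields, with probability at least $1-2e^{-\tau}-4Ne^{-\delta}$ (resp.\ $1-2e^{-\tau}-8Ne^{-\delta}$),
\[
\bigl\|\hat C_{N,n,\lambda}-\bar C_{N,n,\lambda}\bigr\|_{HS}\;\lesssim\; V(\delta,n,\lambda)\sqrt{\tau/N}\;+\;\frac{e^{-\delta}}{\lambda},
\]
where $V(\delta,n,\lambda)$ matches the parenthesized factor of $\sqrt{\tau/N}$ in~\eqref{eq:bound_variance_2} or~\eqref{eq:bound_variance_1} respectively; the free parameter $\delta$ is later calibrated (in Theorem~\ref{th:pre_training}) to $\delta\asymp\log(nN)$ so that the residual $e^{-\delta}/\lambda$ becomes negligible.

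The principal obstacles are the following. First, proving the tight $\lambda^{-(1+p)/2}$ scaling of the RKHS estimation error for~\eqref{eq:bound_variance_1} genuinely uses all three source regularity assumptions and a Bernstein bound on $\Sigma_{i,\lambda}^{-1/2}(\hat\Sigma_i-\Sigma_i)\Sigma_{i,\lambda}^{-1/2}$; this is where the sample-size threshold $n\gtrsim\lambda^{-\alpha}$ enters and is unavoidable. Second, the conditioning trick must be executed carefully because a naive conditional Hoeffding would inject a bias that scales like $V(\delta,n,\lambda)$ itself—only by centering with $\E[\xi_i\mid E_{i,\delta,n,\lambda}]$ and separately bounding this mean by $e^{-\delta}/\lambda$ (using the \emph{deterministic} $\lambda^{-1}$ bound, not $V$) do we obtain a term that can be made negligible through the later choice $\delta\sim\log(nN)$. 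The rest of the argument—a variance/bias decomposition, union bounds over the $2N$ task-specific events, and invocation of Hoeffding—is essentially bookkeeping once these two ingredients are in place.
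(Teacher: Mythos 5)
Your proposal is correct and follows essentially the same route as the paper's proof: the same three-term expansion of $\eta_i$, the same two per-task estimation-error bounds (Smale--Zhou for the crude rate, the Fischer--Steinwart refinement with the $n\gtrsim\delta\lambda^{-\alpha}$ threshold for the sharp one), the same union bound over the $2N$ split estimators, and the same Hoeffding-plus-recentering step with the residual mean controlled by the deterministic $\lambda^{-1}$ bound. The only cosmetic difference is that the paper truncates ($\xi_i\mathbbm{1}_{E_{i,n,\lambda,\delta}}$, so the mean shift is bounded by $\E[\|\xi_i\|\mathbbm{1}_{E^c}]$ directly) whereas you condition and therefore pick up a harmless $1/\P(E_{i,\delta,n,\lambda})$ factor.
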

We use the second variance bound to prove Theorem~\ref{th:pre_training}. The first bound is used to prove Remark~\ref{rem:rem_exp}.
\begin{proof}
For $i \in [N]$, we let $\xi_i \doteq \hat f_{i,\lambda}' \otimes \hat{f}_{i,\lambda} - \E (\hat f_{i,\lambda}) \otimes \E (\hat{f}_{i,\lambda})$ and $\eta_i \doteq \hat f'_{i,\lambda} \otimes \hat{f}_{i,\lambda} - f_{i,\la} \otimes f_{i, \la}$ such that $\xi_i = \eta_i - \E[\eta_i]$. We start with the following decomposition, for $i \in [N]$
\begin{align*}
\eta_i &= \hat f_{i,\lambda}' \otimes \hat{f}_{i,\lambda} - f_{i,\la} \otimes f_{i, \la}\\
&= \hat f_{i,\lambda}' \otimes (\hat f_{i,\lambda} - f_{i,\lambda}) + (\hat f_{i,\lambda}' - f_{i,\lambda}) \otimes f_{i,\lambda}\\
&= (\hat f_{i,\lambda}' + f_{i,\la} - f_{i,\la}) \otimes (\hat f_{i,\lambda} - f_{i,\lambda}) + (\hat f_{i,\lambda}' - f_{i,\lambda}) \otimes f_{i,\lambda}\\
&= (\hat f_{i,\lambda}'- f_{i,\lambda}) \otimes (\hat f_{i,\lambda} - f_{i,\lambda}) + f_{i,\lambda} \otimes (\hat f_{i,\lambda} - f_{i,\lambda}) + (\hat f_{i,\lambda}' - f_{i,\lambda}) \otimes f_{i,\lambda}.
\end{align*}
We now use Eq.~\eqref{eq:ridge_estimator_bound_1_r},
\begin{align*}
\left\|\eta_i \right\|_{HS} &\leq \left\|\hat f_{i,\lambda}'- f_{i,\lambda}\right\|_{\cH}\left\|\hat f_{i,\lambda}- f_{i,\lambda}\right\|_{\cH} + \left\|f_{i,\lambda}\right\|_{\cH}\left(\left\|\hat f_{i,\lambda}- f_{i,\lambda}\right\|_{\cH}+\left\|\hat f_{i,\lambda}'- f_{i,\lambda}\right\|_{\cH}\right) \\
&\leq \left\|\hat f_{i,\lambda}'- f_{i,\lambda}\right\|_{\cH}\left\|\hat f_{i,\lambda}- f_{i,\lambda}\right\|_{\cH} + \left\|f_{i}\right\|_{\cH}\left(\left\|\hat f_{i,\lambda}- f_{i,\lambda}\right\|_{\cH}+\left\|\hat f_{i,\lambda}'- f_{i,\lambda}\right\|_{\cH}\right).
\end{align*}
In the following we assume that we have access to a function $g(n,\la,\delta)$ such that for all $\delta \geq 0$ and $i \in [N]$, with probability at least $1-Je^{-\delta}$
\begin{equation} \label{eq:general_g}
    \left\|\hat f_{i,\lambda} - f_{i,\lambda}\right\|_{\cH} \leq g(n,\lambda,\delta),
\end{equation}
for some constant $J \geq 1$. We will use either Theorem~\ref{prop:variance_regression_smale}: for $\la>0,\delta\geq \log(2), n \geq 1$, with probability at least $1-2e^{-\delta}$
$$
    g(n,\la,\delta) = \frac{6 \kappa Y_{\infty}\delta}{\sqrt{n} \lambda},
$$
or Theorem~\ref{prop:variance_regression}: for $ \delta \geq 1$, $\lambda < 1 \wedge \min_{i \in [N]}\left\|\Sigma_i\right\|$, and $n \geq c_0\delta\left(1 + p\log (\la^{-1})\right) \lambda^{-\alpha}$, with probability not less than $1-4e^{-\delta}$
\begin{equation*}
    g(n,\la,\delta) = \frac{c\delta}{\sqrt{n} \lambda^{\frac{1+p}{2}}}\sqrt{1+ \frac{1}{n \lambda^{\alpha-p}}}, 
\end{equation*}
with $c_0$ a constant depending on $k_{\alpha,\infty}, D$ and $c$ a constant depending on $Y_{\infty}, k_{\alpha,\infty}, D$ and $R$. We fix $g$ a function satisfying Eq.~\eqref{eq:general_g} and define the events
$$
E_{i,n,\lambda,\delta} \doteq \left\{\left\|\hat f_{i,\lambda} - f_{i,\lambda}\right\|_{\cH} \vee \left\|\hat f_{i,\lambda}' - f_{i,\lambda}\right\|_{\cH} \leq g(n,\lambda,\delta) \right\}, i \in [N], \qquad E_{N,n,\lambda,\delta} \doteq \bigcap_{i=1}^N E_{i,n,\lambda,\delta}.
$$
By independence of the $\hat f_{i,\lambda}$ and $\hat f_{i,\lambda}'$, we have for all $i \in [N]$
$$
\begin{aligned}
    \P\left(E_{i,n,\lambda,\delta}\right) &\geq (1-Je^{-\delta})^{2} \geq 1-2Je^{-\delta} \\
    \P\left(E_{N,n,\lambda,\delta}\right) &\geq (1-Je^{-\delta})^{2N} \geq 1-2JNe^{-\delta}, 
\end{aligned}
$$
where we used Bernoulli's inequality. We then have 
\begin{align}
\mathbbm{1}_{E_{N,n,\lambda,\delta}}\|\eta_i\|_{HS} &\leq  g(n,\lambda,\delta)^2 + 2\|f_i\|_{\cH}g(n,\lambda,\delta) \nonumber
\end{align}
For any $\epsilon >0$,
\begin{align*}
\P \left( \left\| \frac{1}{N}\sum_{i=1}^N \xi_i\right\|_{HS} \geq \epsilon\right)
&= \P \left(\left\{ \left\| \frac{1}{N}\sum_{i=1}^N \xi_i\right\|_{HS} \geq \epsilon \right\} \cap  E_{N,n,\lambda,\delta}\right) \\ &+ \underbrace{\P \left( \left\| \frac{1}{N}\sum_{i=1}^N \xi_i\right\|_{HS} \geq \epsilon \mid E^c_{N,n,\lambda,\delta}\right)}_{\leq 1}\underbrace{\P(E^c_{N,n,\lambda,\delta})}_{\leq 2JNe^{-\delta}}\\
&\leq \P \left(\left\{ \left\| \frac{1}{N}\sum_{i=1}^N \xi_i\right\|_{HS} \geq \epsilon \right\} \cap  E_{N,n,\lambda,\delta}\right) + 2JNe^{-\delta}.
\end{align*}
For each $i \in [N]$, $\E[\xi_i]=0$, therefore by Proposition~\ref{prop:xi_as},
$$
\left\|\E\left[\xi_i\mathbbm{1}_{E_{i,n,\lambda,\delta}}\right]\right\| = \left\|\E\left[\xi_i\mathbbm{1}_{E_{i,n,\lambda,\delta}}\right] - \E[\xi_i]\right\| \leq \E\left[\|\xi_i\|\mathbbm{1}_{i_{N,n,\lambda,\delta}^c}\right] \leq \frac{c_1}{\la}\P(E_{i,n,\lambda,\delta}^c) \leq \frac{2c_1J}{\la}e^{-\delta},
$$
with $c_1 \doteq Y_{\infty}^2+\max_{i \in [N]}\left\| f_i \right\|_{\cH}^2$. For all $i \in [N]$, we define $\zeta_i \doteq \xi_i\mathbbm{1}_{E_{i,n,\lambda,\delta}} - \E\left[\xi_i\mathbbm{1}_{E_{i,n,\lambda,\delta}}\right]$, $i \in [N]$. We have
\begin{align}
\P \left(\left\{ \left\| \frac{1}{N}\sum_{i=1}^N \xi_i\right\|_{HS} \geq \epsilon \right\} \cap  E_{N,n,\lambda,\delta}\right) &\leq \P \left(\left\| \frac{1}{N}\sum_{i=1}^N \xi_i\mathbbm{1}_{E_{i,n,\lambda,\delta}}\right\|_{HS} \geq \epsilon \right) \nonumber \\ 
&= \P \left(\left\| \frac{1}{N}\sum_{i=1}^N \zeta_i + \E\left[\xi_i\mathbbm{1}_{E_{i,n,\lambda,\delta}}\right]\right\|_{HS} \geq \epsilon \right) \nonumber \\ & \leq \P \left(\left\| \frac{1}{N}\sum_{i=1}^N \zeta_i \right\|_{HS}+\frac{1}{N}\sum_{i=1}^N\left\| \E\left[\xi_i\mathbbm{1}_{E_{i,n,\lambda,\delta}}\right]\right\|_{HS} \geq \epsilon \right)\nonumber\\
&\leq \P \left(\left\| \frac{1}{N}\sum_{i=1}^N \zeta_i\right\|_{HS} + 2c_1 \la^{-1} Je^{-\delta} \geq \epsilon \right). \nonumber
\end{align}
By Proposition~\ref{prop:xi_as} again,
$$
\begin{aligned}
    \|\xi_i \|_{HS}\mathbbm{1}_{E_{i,n,\lambda,\delta}} &\leq \mathbbm{1}_{E_{i,n,\lambda,\delta}}\|\eta_i\| +  \|\E[\eta_i]\|_{HS} \\ &\leq  g(n,\lambda,\delta)^2 + 2\|f_i\|_{\cH}g(n,\lambda,\delta) + \E\left[\|\eta_i \|_{HS}\right] \\ &=  g(n,\lambda,\delta)^2 + 2\|f_i\|_{\cH}g(n,\lambda,\delta) + \E\left[\|\eta_i  \|_{HS}\mathbbm{1}_{E_{i,n,\lambda,\delta}}\right] + \E\left[\|\eta_i \|_{HS}\mathbbm{1}_{E_{i,n,\lambda,\delta}^c}\right] \\ &\leq 2\left( g(n,\lambda,\delta)^2 + 2\|f_i\|_{\cH}g(n,\lambda,\delta) \right) + \frac{c_1}{\lambda}\P\left(E_{i,n,\lambda,\delta}^c\right) \\ &\leq 2\left( g(n,\lambda,\delta)^2 + 2\kappa Rg(n,\lambda,\delta) \right) + \frac{2c_1Je^{-\delta}}{\lambda} \doteq V(n, \la, \delta),
\end{aligned}
$$
where we used that by Assumption~\ref{asst:src}: for $i \in [N]$
$$
\|f_i\|_{\cH} = \|\Sigma_i^{r}\Sigma_i^{-r}f_i\|_{\cH} \leq  \|\Sigma_i\|^{r}\|\Sigma_i^{-r}f_i\|_{\cH} \leq R\kappa.
$$
Hence $\|\zeta_i\|_{HS} \leq 2V(n, \la, \delta)$. We now use Hoeffding's inequality (Theorem~\ref{th:hoeffding_hilbert}) on $H=HS$ for the centered and bounded random variables $\{\zeta_i\}_{i \in [N]}$. As long as $\epsilon \geq 2c_1\la^{-1}Je^{-\delta}$,
\begin{align}
\P \left(\left\| \frac{1}{N}\sum_{i=1}^N \zeta_i\right\|_{HS} + 2c_1 \la^{-1} Je^{-\delta} \geq \epsilon \right)  
&\leq& 2\exp\left(-\frac{N(\epsilon- 2c_1 \la^{-1} Je^{-\delta})^2}{8V^2(n,\lambda,\delta)}\right). \nonumber
\end{align}
Therefore, combining the results, we obtain
\begin{align*}
\P \left( \left\| \frac{1}{N}\sum_{i=1}^N \xi_i\right\|_{HS} \geq \epsilon \right) &\leq 2\exp\left(-\frac{N(\epsilon-2c_1 \la^{-1} Je^{-\delta})^2}{8V^2(n,\lambda,\delta)}\right) + 2JNe^{-\delta} \doteq \tau.
\end{align*}
Solving for $\epsilon$, we have for all $\tau  \in (2JNe^{-\delta},1)$,
$$
\epsilon = V(n,\lambda,\delta)\sqrt{\frac{8}{N}\log\left(\frac{2}{\tau - 2JNe^{-\delta}}\right)} +2c_1 \la^{-1} Je^{-\delta}. 
$$ 
Finally, we obtain that for all $\tau \in (2JNe^{-\delta},1)$ with probability greater than $1-\tau$,
\[\left\| \frac{1}{N}\sum_{i=1}^N \xi_i\right\|_{HS}  \leq V(n,\lambda,\delta) \sqrt{\frac{8}{N}\log\left(\frac{2}{\tau - 2JNe^{-\delta}}\right)} + 2c_1 \la^{-1} Je^{-\delta}.\]
Alternatively we can write, for all $\tau \geq \log(2)$ with probability greater than $1-2e^{-\tau}-2JNe^{-\delta}$,
$$
\begin{aligned}
    \left\| \frac{1}{N}\sum_{i=1}^N \xi_i\right\|_{HS}  &\leq \left( 2g(n,\lambda,\delta)^2 + 4\kappa Rg(n,\lambda,\delta)  + \frac{2c_1Je^{-\delta}}{\lambda}\right)\sqrt{\frac{8\tau}{N}} + \frac{2c_1Je^{-\delta}}{\lambda} ,
\end{aligned}
$$
$\delta$ is a free parameter that we will adjust as a function of $N,n$ such that $Ne^{-\delta}$ and $e^{-\delta}\la^{-1}$ converge to $0$ with $N \to +\infty$ or $n \to +\infty$.
\end{proof}

\begin{theorem}[Bounds on the bias term]\label{theo:bias_pf}
Suppose Assumptions~\ref{asst:evd}, \ref{asst:emb} and \ref{asst:src} hold with $0 < p \leq \alpha \leq 1$ and $r \in [0,1]$. For any $\delta \geq 1$ and $\la \in (0,1]$ if $n \geq c_1\delta \lambda^{-p-1}$ then
\begin{equation}
    \|\bar C_{N,n,\lambda} - C_N\|_{HS} \leq c_2\lambda^{r}\left(1 + \frac{e^{-\delta}}{\lambda} \right). \label{eq:bias_1}
\end{equation}
where $c_1$ is a constant depending only on $D$, $\kappa$, and $k_{\alpha,\infty}$ and $c_2$ is a constant depending only on $R$, $\kappa$. 

Furthermore, for any $\delta \geq 1$ and $0<\lambda < 1 \wedge \min_{i \in [N]}\left\|\Sigma_i\right\|$, if $n \geq c_3\delta\left(1 + p\log (\la^{-1})\right) \lambda^{-\alpha}$ then  
\begin{equation} \label{eq:bias_bound_2}
    \|\bar C_{N,n,\lambda} - C_N\|_{HS} \leq c_4\left(e^{-\delta} + \la^{r \wedge 1/2} \right).
\end{equation}
where $c_3$ only depends on $k_{\al,\infty},D$ and $c_4$ only depends on $R, \kappa$.
\end{theorem}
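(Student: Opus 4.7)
The plan is to reduce the bound on $\|\bar C_{N,n,\lambda} - C_N\|_{HS}$ to a sum of per-task RKHS biases $\|\E[\hat f_{i,\lambda}] - f_i\|_{\cH}$, and then analyze this quantity through two different factorizations of $\hat \Sigma_{i,\lambda}^{-1}$, one for each of the two claims. Using the identity $a \otimes a - b \otimes b = (a-b) \otimes a + b \otimes (a-b)$, the Hilbert--Schmidt identity $\|u \otimes v\|_{HS} = \|u\|_{\cH}\|v\|_{\cH}$, the inequality $\|\E[\hat f_{i,\lambda}]\|_{\cH} \leq \|f_i\|_{\cH}$ from Lemma~\ref{lma:lemma_1}, and $\|f_i\|_{\cH} \leq R\kappa^{2r}$ (which follows from Assumption~\ref{asst:src} together with $\|\Sigma_i\| \leq \kappa^2$), I first obtain
\begin{equation*}
\|\bar C_{N,n,\lambda} - C_N\|_{HS} \;\leq\; \frac{2R\kappa^{2r}}{N} \sum_{i=1}^N \bigl\|\E[\hat f_{i,\lambda}] - f_i\bigr\|_{\cH}.
\end{equation*}
By Lemma~\ref{lma:lemma_1}, $\E[\hat f_{i,\lambda}] - f_i = -\lambda\, \E\bigl[\hat \Sigma_{i,\lambda}^{-1}\bigr] f_i$, so the task reduces to bounding $\lambda \|\E[\hat \Sigma_{i,\lambda}^{-1}] f_i\|_{\cH}$ uniformly in $i \in [N]$.

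For Eq.~\eqref{eq:bias_bound_2}, which achieves the degraded rate $\lambda^{r \wedge 1/2}$ but under the weaker requirement $n \gtrsim \lambda^{-\alpha}$, the plan is to use the \emph{symmetric} factorization $\hat \Sigma_{i,\lambda}^{-1} = \Sigma_{i,\lambda}^{-1/2} (I - T_i)^{-1} \Sigma_{i,\lambda}^{-1/2}$ with $T_i \doteq \Sigma_{i,\lambda}^{-1/2}(\hat \Sigma_i - \Sigma_i)\Sigma_{i,\lambda}^{-1/2}$. A Bernstein-type concentration of $T_i$ in operator norm, using the embedding Assumption~\ref{asst:emb} to control the effective variance, yields $\|T_i\| \leq 1/2$ with probability at least $1 - 2e^{-\delta}$ provided $n \geq c_3\delta(1 + p \log \lambda^{-1})\lambda^{-\alpha}$. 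On that event a Neumann series gives $\|(I - T_i)^{-1}\| \leq 2$; on its complement I fall back to the deterministic bound $\|\hat \Sigma_{i,\lambda}^{-1}\| \leq \lambda^{-1}$, producing an $e^{-\delta}$ contribution. Combining with $\|\Sigma_{i,\lambda}^{-1/2} f_i\|_{\cH} \leq R\lambda^{(r \wedge 1/2) - 1/2}$ --- obtained by writing $\Sigma_{i,\lambda}^{-1/2} = \Sigma_{i,\lambda}^{(r \wedge 1/2) - 1/2}\Sigma_{i,\lambda}^{-(r \wedge 1/2)}$ and absorbing the second factor against Assumption~\ref{asst:src} via operator monotonicity --- yields the desired bound $e^{-\delta} + \lambda^{r \wedge 1/2}$.

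For Eq.~\eqref{eq:bias_1}, which upgrades the rate to $\lambda^r$ for $r > 1/2$ at the cost of the stronger requirement $n \gtrsim \lambda^{-(p+1)}$, I would instead use the \emph{asymmetric} factorization $\hat \Sigma_{i,\lambda}^{-1} = \Sigma_{i,\lambda}^{-1}\bigl(I - (\Sigma_i - \hat \Sigma_i)\Sigma_{i,\lambda}^{-1}\bigr)^{-1}$ and pull out the smoothness of $f_i$ through $\|\Sigma_{i,\lambda}^{-1}f_i\|_{\cH} \leq R\lambda^{r-1}$, which is valid for all $r \in [0,1]$. A one-sided Bernstein concentration of $(\Sigma_i - \hat \Sigma_i)\Sigma_{i,\lambda}^{-1}$ (which only uses the polynomial eigendecay of Assumption~\ref{asst:evd}) requires $n \geq c_1 \delta \lambda^{-(p+1)}$ for the Neumann bound $\|\hat \Sigma_{i,\lambda}^{-1}\Sigma_{i,\lambda}\| \leq 2$ to hold on a good event; on the bad event, the deterministic bound $\|\hat \Sigma_{i,\lambda}^{-1}\Sigma_{i,\lambda}\| \leq 1 + \kappa^2/\lambda$ multiplied by $\P(E^c) \leq 2e^{-\delta}$ produces the extra term $\lambda^{r-1}e^{-\delta}$, giving exactly the target form $c_2\lambda^r(1 + e^{-\delta}/\lambda)$. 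The main obstacle is precisely this one-sided concentration: because only $\Sigma_{i,\lambda}^{-1}$ (rather than a half-power on each side) conjugates the perturbation, the relevant effective variance becomes $\operatorname{Tr}(\Sigma_i \Sigma_{i,\lambda}^{-2}) \asymp \lambda^{-(1+p)}$ under Assumption~\ref{asst:evd}, which forces the stronger sample requirement $B_\lambda$ as opposed to the $A_\lambda$ of the symmetric approach; a secondary subtlety is that the bad-event contribution must be computed after multiplication by the full $\Sigma_{i,\lambda}^{-1}f_i$ rather than by $f_i$ alone, so as to preserve the $\lambda^{r-1}$ smoothness factor and obtain the $\lambda^{r-1}e^{-\delta}$ scaling in Eq.~\eqref{eq:bias_1} rather than the coarser $e^{-\delta}$.
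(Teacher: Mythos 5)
Your proposal is correct and follows essentially the same route as the paper: the same reduction of $\|\bar C_{N,n,\lambda}-C_N\|_{HS}$ to the per-task RKHS biases $\|f_i-\E[\hat f_{i,\lambda}]\|_{\cH}$ via the rank-one difference identity and Lemma~\ref{lma:lemma_1}, and then the same two factorizations of $\hat\Sigma_{i,\lambda}^{-1}$ — the asymmetric one with a Neumann series under the $\lambda^{-(p+1)}$ sample requirement for Eq.~\eqref{eq:bias_1} (the paper's Proposition~\ref{prop:bias_exp}), and the symmetric whitened one under the $\lambda^{-\alpha}$ requirement for Eq.~\eqref{eq:bias_bound_2} (the paper's Proposition~\ref{prop:bias_exp_r_0_5}) — each with the identical good-event/bad-event split and deterministic fall-back bounds. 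The only differences are in unimportant constants.
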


\begin{proof}
\begin{equation*}
\begin{aligned}
    \|\bar C_{N,n,\lambda} - C_N\|_{HS} &=  \left\|\frac{1}{N}\sum_{i=1}^N\left(f_i \otimes f_i - \mathbb{E}[\hat f_{i,\lambda}] \otimes \mathbb{E}[\hat f_{i,\lambda}]\right) \right\|_{HS} \\
    &\leq \frac{1}{N}\sum_{i=1}^N\left\|f_i  \otimes (f_i - \mathbb{E}[\hat f_{i,\lambda}]) -  (\mathbb{E}[\hat f_{i,\lambda}]-f_i) \otimes \mathbb{E}[\hat f_{i,\lambda}]) \right\|_{HS} \\
    &\leq \frac{2 \max\{\max_{i=1}^N \|f_{i}\|_{\mathcal{H}}, \max_{i=1}^N \|\mathbb{E}[\hat f_{i,\lambda}]\|_{\mathcal{H}} \}}{N}\sum_{i=1}^N \left\|f_i - \mathbb{E}[\hat f_{i,{\lambda}}] \right\|_{\mathcal{H}},\\
    & \leq \frac{2 R\kappa}{N}\sum_{i=1}^N \left\|f_i - \mathbb{E}[\hat f_{i,{\lambda}}] \right\|_{\mathcal{H}},\\
\end{aligned}
\end{equation*}
where we used Lemma~\ref{lma:lemma_1} and Assumption~\ref{asst:src}: for $i \in [N]$
$$
\|f_i\|_{\cH} = \|\Sigma_i^{r}\Sigma_i^{-r}f_i\|_{\cH} \leq  \|\Sigma_i\|^{r}\|\Sigma_i^{-r}f_i\|_{\cH} \leq R\kappa.
$$
For the first bound, by Proposition~\ref{prop:bias_exp}, we have for $\la \in (0,1]$, $\delta \geq 1$ and $n \geq c_1\delta \lambda^{-1-p}$, 
\[
\|f_{i} -\E(\hat{f}_{i,\la})\|_{\cH} \leq c_5\lambda^{r}\left(1 + \frac{e^{-\delta}}{\lambda} \right),
\]
where $c_1$ is a constant depending only on $D$, $\kappa$, and $k_{\alpha,\infty}$ and $c_5$ is a constant depending only on $R$, $\kappa$, which proves the second bound. For the second bound, by Proposition~\ref{prop:bias_exp_r_0_5}, we have for $0<\lambda < 1 \wedge \min_{i \in [N]}\left\|\Sigma_i\right\|$, $\delta \geq 1$ and $n \geq c_6\delta\left(1 + p\log (\la^{-1})\right) \lambda^{-\alpha}$, 
\begin{equation*} 
    \|f_{i} -\E(\hat{f}_{i,\la})\|_{\cH} \leq c_7 \left(e^{-\delta} + \la^{r \wedge 1/2}\right),
\end{equation*}
where $c_6$ only depends on $k_{\al,\infty},D$ and $c_7$ only depends on $R, \kappa$.
\end{proof}

\begin{proof}[Proof of Theorem~\ref{th:pre_training}] \textbf{Bound in Eq.~(\ref{eq:sin_risk_sharp}).} We first notice that the bias bounds in Eq.~\eqref{eq:bias_1} and Eq.~\eqref{eq:bias_bound_2}  can be combined as follows: for any $\delta \geq 1$, $0<\lambda < 1 \wedge \min_{i \in [N]}\left\|\Sigma_i\right\|$ and $n \geq c_1\delta\left(1 + p\log (\la^{-1})\right) \lambda^{-\alpha}$ if $r \leq 1/2$ or $n \geq c_2\delta \lambda^{-p-1}$ if $r \in (1/2,1]$,
\begin{equation*}
    \|\bar C_{N,n,\lambda} - C_N\|_{HS} \leq C_0\left(\lambda^{r} + \frac{e^{-\delta}}{\lambda} \right),
\end{equation*}
where we used $\la \leq 1$. $c_1$ is a constant depending only on $D$, $k_{\alpha,\infty}$, $c_2$ is a constant depending only on $D$, $\kappa$, and $k_{\alpha,\infty}$ and $C_0$ is a constant depending only on $R$, $\kappa$. We now combine this bias bound with Eq.~\eqref{eq:bound_variance_1} for the variance. Note that both bounds have a free parameter $\delta$ that we take as the same value for each bound. Since $0<\la \leq 1$ by assuming $n$ large enough so that $\frac{\delta}{\sqrt{n} \lambda^{\frac{1+p}{2}}}\sqrt{1+ \frac{1}{n \lambda^{\alpha-p}}} \leq 1$, we obtain that for all $0< \la < 1 \wedge \min_{i \in [N]}\left\|\Sigma_i\right\|$, $\tau \geq \log(2)$, $\delta \geq 1$, $N \geq \tau$ and $n \geq c_1\delta\left(1 + p\log (\la^{-1})\right) \lambda^{-\alpha}$ if $r \leq 1/2$ or $n \geq \delta\max\{c_2\lambda^{-p-1},c_1\left(1 + p\log (\la^{-1})\right) \lambda^{-\alpha}\}$ if $r \in (1/2,1]$,  with probability greater than $1-2e^{-\tau}-8Ne^{-\delta}$,
$$
\|\hat C_{N,n,\lambda} - C_N\|_{HS} \leq C_1\left(\frac{\delta\sqrt{\tau}}{\sqrt{nN}\la^{\frac{1+p}{2}}}\sqrt{1 + \frac{1}{n\la^{\al-p}}} + \frac{e^{-\delta}}{\lambda} +  \lambda^{r} \right),
$$
with $C_1$ a constant depending on $Y_{\infty}, k_{\alpha,\infty}, D, \kappa$ and $R$. As $\delta$ is a free parameter, we pick $\delta = 12\log(Nn)$ with $N,n$ large enough such that $\delta \geq 1$. 
\\
We obtain that for $n \geq 12c_0\log(Nn)\left(1 + p\log (\la^{-1})\right) \lambda^{-\alpha}$ if $r \leq 1/2$ or $$
n \geq 12\log(Nn)\max\{c_2\lambda^{-p-1},c_1\left(1 + p\log (\la^{-1})\right) \lambda^{-\alpha}\},
$$ 
if $r \in (1/2,1]$, with probability greater than $1-2e^{-\tau}-o((nN)^{-10})$:
    \begin{equation} \label{eq:bound_theorem2_2}
    \|\hat{C}_{N,n,\la}-C_N\|_{HS}  \leq C_1\left(\frac{12\log (nN)\sqrt{\tau}}{\sqrt{nN} \lambda^{\frac{1}{2}+\frac{p}{2}}}\sqrt{1+ \frac{1}{n \lambda^{\alpha-p}}} + \frac{1}{\la (nN)^{12}} +   \lambda^{r}\right).
\end{equation}
When we optimise for $\lambda$ in Corollary~\ref{cor:learning_rate}, we notice that the term $\frac{1}{\la (nN)^{12}}$ is always of lower order, therefore we do not include it in the presentation of Theorem~\ref{th:pre_training}. Finally, since $\la \leq 1$ and we always have $p+1 \geq \alpha$, when $r \in (1/2,1]$, we can simplify the constraint on $n$ as $n \geq c_3\log(Nn)\left(1 + p\log (\la^{-1})\right) \lambda^{-p-1}$, with $c_3$ a constant depending on $D$, $\kappa$, and $k_{\alpha,\infty}$.
\end{proof}

\begin{rem}[Proof of Example~\ref{ex:finite}]\label{rem:rem_pf} 
In the finite dimensional case, we use the same steps as the previous proof with $g(n,\lambda, \delta) = c\delta\sqrt{\frac{k}{n}}$ from Eq.~\eqref{eq:fin_rank} in Theorem~\ref{prop:variance_regression}. Furthermore, for the bias, we let $r=1$ since Assumption~\ref{asst:src} is satisfied for any value of $r$ when the RKHS is finite dimensional. This leads to the following bound. For $n \gtrsim k\log(nN)$, with probability greater than $1-2e^{-\tau}-o((nN)^{-10})$:
\begin{equation} 
    \|\hat{C}_{N,n,\la}-C_N\|_{HS}  \lesssim \frac{\log (nN)\sqrt{k\tau}}{\sqrt{nN}} + \frac{1}{\la (nN)^{12}} +\lambda^{\frac{1}{2}} + \frac{1}{(nN)^{12}},
\end{equation}
We obtain Eq.~\eqref{eq:finite_dim}, by plugging $\lambda = \frac{\log^2(nN)}{nN}$.
\end{rem}

\subsection{Proof of Corollary~\ref{cor:learning_rate}}\label{sec:conv_rate}
In the following, we ignore constants as only the orders of $n$ and $N$ matter for the proof of the corollary. \\
By Theorem~\ref{th:pre_training}, under the assumption that $n \geq \log(Nn)\log (\la^{-p}) \lambda^{-\alpha}$ for $r \leq 1/2$ or $n \geq \log(Nn)\log (\la^{-p}) \lambda^{-p-1}$ for $r \in (1/2,1]$, with high probability, $\|\hat{C}_{N,n,\la}-C_N\|_{HS}$ is bounded by a term of the order (see Eq.~\eqref{eq:sin_risk_sharp} and Eq.~\eqref{eq:bound_theorem2_2}),
\begin{equation*}
    \frac{\log (nN)}{\sqrt{nN} \lambda^{\frac{1+p}{2}}} + \frac{\log (nN)}{n\sqrt{N} \lambda^{\frac{1+\al}{2}}} + \frac{1}{ \la N^{12}n^{12}} +   \lambda^{r}.
\end{equation*}
Let $a>0$ be defined as $a\doteq\log(N)/\log(n)$, then $N = n^a$. Plugging $N=n^a$, we get the following upper bounds on $\|\hat{C}_{N,n,\la}-C_N\|_{HS}$:
\begin{equation}\label{eq:corr_1_bound_2}
    r_1(\la, n) \doteq \lambda^{r} +  \underbrace{\frac{\log (n^{1+a})}{n^{\frac{1+a}{2}} \lambda^{\frac{1+p}{2}}}}_{\boxed{1}} + \underbrace{\frac{\log (n^{1+a})}{n^{1+\frac{a}{2}}\lambda^{\frac{1+\al}{2}}}}_{\boxed{2}} + \underbrace{  \frac{1}{\la n^{12(1+a)}}}_{\boxed{3}},
\end{equation}

\textbf{Case A:} To optimize this bound, we start by matching $\lambda^r$ with \boxed{1}. 
$$
\la^{r} =\frac{\log (n^{1+a})}{\lambda^{\frac{1+p}{2}} n^{(a+1)/2}} \iff \la = n^{-\frac{a+1}{2r+1+p}}\log (n^{1+a})^{\frac{2}{2r+1+p}}.
$$ 
We need to make sure that the matched term $\boxed{n^{-\frac{r(a+1)}{2r+1+p}}\log (n^{1+a})^{\frac{2r}{2r+1+p}}}$ is the slowest term.
    \begin{itemize}
        \item[a)] $\boxed{1} \geq \boxed{2} \iff \boxed{n^{-\frac{r(a+1)}{2r+1+p}}\log (n^{1+a})^{\frac{2r}{2r+1+p}}} \geq \log (n^{1+a})^{1-\frac{1+\al}{2r+1+p}}n^{-1-a/2+\frac{(a+1)(1+\al)}{2(2r+1+p)}}$ which is satisfied if $a \leq \frac{2r+p+1}{\al-p}-1$. 
        \item[b)] $\boxed{1} \geq \boxed{3}$ is always satisfied.
    \end{itemize}
    We then need to check the constraint on $n$ for both $r \leq 1/2$ ($n \geq A_{\la}$) and $r \in (1/2,1]$ ($n \geq B_{\la}$). Plugging back $a=\log(N)/\log(n)$, we get $\la = (nN)^{-\frac{1}{2r+1+p}}\log (nN)^{\frac{2}{2r+1+p}}$. Let us start with $r \leq 1/2$. Recall that $A_{\la} = \log(nN)\log (\la^{-p})\lambda^{-\alpha}$ (where we ignore constant as only the orders of $n$ and $N$ matter here). Hence, plugging the value of $\la$,
    $$
    n \geq A_{\la} \iff n \geq \log(nN)^{1-\frac{2\al}{2r+1+p}}\log \left((nN)^{\frac{p}{2r+1+p}}\log (nN)^{-\frac{2p}{2r+1+p}}\right)(nN)^{\frac{\al}{2r+1+p}}.
    $$
    To satisfy this condition, it is sufficient that $N \leq n^{\frac{2r+p+1}{\al}-1}$, i.e. $a \leq \frac{2r+p+1}{\al}-1$. Notice that $a \leq \frac{2r+p+1}{\al}-1 \leq \frac{2r+p+1}{\al-p}-1$, therefore if $a \leq \frac{2r+p+1}{\al}-1$ we have $\boxed{1} \geq \boxed{2}$ and under this condition, the obtained upper bound is of the order
    \begin{equation} \label{eq:bound_A1}
        (nN)^{-\frac{r}{2r+1+p}}\log (nN)^{\frac{2r}{2r+1+p}}
    \end{equation}
    Let us now move to the constraint $n \geq B_{\lambda}$ for $r \in (1/2,1]$. Recall that $$B_{\la} = \log(Nn)\log (\la^{-p}) \lambda^{-p-1}.$$ To satisfy $n \geq B_{\lambda}$, it is sufficient that $a \leq \frac{2r+p+1}{p+1}-1,$ i.e. $N \leq n^{\frac{2r+p+1}{p+1}-1}$. Notice that $\frac{2r+p+1}{p + 1} - 1 \leq \frac{2r+p+1}{\al-p}-1$, therefore if $a \leq \frac{2r+p+1}{p + 1} - 1$ we have $\boxed{1} \geq \boxed{2}$ and under this condition, the obtained upper bound is the same as in Eq.~(\ref{eq:bound_A1}) with $r \in (1/2,1]$. It concludes the proof of Eq.~\eqref{eq:corr_eq1}.

    \textbf{Case B:} In that regime, we further increase $N$ beyond the constraints in case A. As a result, the variance become negligible and we only need to minimize the bias. 
    
    $\bullet$ B.1. $r \in (0,1/2]$. We focus on bounding the risk with Eq.~\eqref{eq:sin_risk_sharp} under the constraint $n \geq A_{\lambda}$. We choose the minimum $\lambda$ such that $n \geq A_{\lambda}$ is satisfied. This gives us $\lambda = (\log^{\omega}(nN)/n)^{1/\alpha}$ for $\omega >2$. This choice of $\lambda$ leads to the final rate of 
    \begin{equation}
    \log(nN)^{\frac{\omega r}{\alpha}}\cdot n^{-\frac{r}{\alpha}}\nonumber
    \end{equation}
    
    $\bullet$ B.2. $r\in (1/2,1]$. We now choose the minimum $\lambda$ such that $n \geq B_{\lambda}$ is satisfied. This gives us $\lambda = \left(\frac{\log^{\omega}(nN)}{n}\right)^{\frac{1}{p+1}}$ for $\omega >2$. This choice of $\lambda$ leads to the final rate of 
    \begin{equation}
    \log(nN)^{\frac{\omega r}{p+1}}\cdot n^{-\frac{r}{p+1}}\nonumber
    \end{equation}

\subsection{Proofs of Section~\ref{sec:algo}}\label{sec:algo_proof}

\begin{definition}
Let $A,B \in \R^{n \times n}$ be two real symmetric matrices, the generalized eigenvalue problem solves for $(v,\gamma) \in \R^n \times \R$,
$$
(A - \gamma B)v = 0.
$$
A solution $(v,\gamma)$ is called a generalized eigenpair where $v$ is called a generalized eigenvector and $\gamma$ a generalized eigenvalue. Note that if $B=I_n$ we retrieve  the standard eigenvalue problem.
\end{definition}

For a comprehensive treatment of the generalized eigenvalue problem see \cite{parlett1998symmetric}. Before proving Proposition~\ref{prop:gen_eigen}, we need the following lemma.

\begin{lemma}
Let $(\hat U,\hat V)$ be the top$-s$ left and right singular vectors of $\hat C_{N,n,\la}$. $(\hat U,\hat V)$ is solution of 
\begin{equation} \label{eq:svd_prob}
\begin{aligned}
\max_{U,V:\R^s \to \cH} \quad & \operatorname{Tr}(U^*\hat C_{N,n,\la} V)\\
\textrm{s.t.} \quad & U^*U=I_s\\
  & V^*V=I_s    \\
\end{aligned}
\end{equation}
\end{lemma}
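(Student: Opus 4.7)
The plan is to establish this as a Ky Fan-type variational characterization for singular values, adapted to the Hilbert space setting. Since $\hat C_{N,n,\la}$ is a finite-rank (rank at most $N$) operator on $\cH$, its SVD $\hat C_{N,n,\la} = \sum_{k=1}^N \hat\gamma_k \hat u_k \otimes \hat v_k$ terminates, and we can extend $\{\hat u_k\}$ and $\{\hat v_k\}$ to orthonormal bases of $\cH$, setting $\hat\gamma_k = 0$ for $k > N$. This reduces the problem essentially to a finite-dimensional rearrangement argument.

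The first step is to expand the objective using the SVD: for any feasible $U = [u_1, \ldots, u_s]$ and $V = [v_1, \ldots, v_s]$ with $U^*U = V^*V = I_s$,
\begin{equation*}
\operatorname{Tr}(U^* \hat C_{N,n,\la} V) = \sum_{i=1}^s \langle u_i, \hat C_{N,n,\la} v_i\rangle_{\cH} = \sum_{k \geq 1} \hat\gamma_k \langle U^*\hat u_k, V^*\hat v_k\rangle_{\R^s}.
\end{equation*}
Then I apply Cauchy-Schwarz followed by AM-GM to each summand:
\begin{equation*}
\langle U^*\hat u_k, V^*\hat v_k\rangle_{\R^s} \leq \|U^*\hat u_k\|\,\|V^*\hat v_k\| \leq \tfrac{1}{2}\bigl(\|U^*\hat u_k\|^2 + \|V^*\hat v_k\|^2\bigr) \doteq \alpha_k.
\end{equation*}
Since the columns of $U$ are orthonormal in $\cH$, Bessel's inequality applied to the ONB $\{\hat u_k\}$ gives $\|U^*\hat u_k\|^2 \leq \|\hat u_k\|^2 = 1$, so $\alpha_k \in [0,1]$; and Parseval gives $\sum_k \|U^*\hat u_k\|^2 = \sum_i \|u_i\|_{\cH}^2 = s$, so $\sum_k \alpha_k = s$.

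The final step is a standard rearrangement inequality: since $\hat\gamma_1 \geq \hat\gamma_2 \geq \cdots \geq 0$ and the $\alpha_k$ are constrained to $[0,1]$ with total mass $s$, the linear functional $\sum_k \hat\gamma_k \alpha_k$ is maximized by concentrating the mass on the first $s$ indices, yielding the bound $\operatorname{Tr}(U^* \hat C_{N,n,\la} V) \leq \sum_{k=1}^s \hat\gamma_k$. To show this upper bound is attained, a direct computation with $U = \hat U_s \doteq [\hat u_1,\ldots,\hat u_s]$ and $V = \hat V_s \doteq [\hat v_1,\ldots,\hat v_s]$ gives $\operatorname{Tr}(\hat U_s^* \hat C_{N,n,\la} \hat V_s) = \sum_{i=1}^s \hat\gamma_i$, so $(\hat U_s, \hat V_s)$ is indeed a maximizer.

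The only real subtlety is bookkeeping in the infinite-dimensional setting, but since $\hat C_{N,n,\la}$ has finite rank, all series reduce to finite sums and no convergence issues arise. Nonuniqueness of the maximizer in the presence of repeated singular values is standard and does not affect the statement, which only asserts that the top-$s$ singular vectors form \emph{a} solution.
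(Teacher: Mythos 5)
Your proof is correct, and it takes a genuinely different route from the paper's. The paper expands $\operatorname{Tr}(U^*\hat C_{N,n,\la}V)=\sum_{i=1}^s\sum_{l}\hat\gamma_l\langle u_i,\hat u_l\rangle_{\cH}\langle v_i,\hat v_l\rangle_{\cH}$ and then argues \emph{greedily}: it bounds the $i=1$ summand by $\hat\gamma_1$ via Cauchy--Schwarz, substitutes $u_1=\hat u_1$, $v_1=\hat v_1$, uses the orthogonality constraints to kill the $l=1$ term in the $i=2$ summand, bounds that by $\hat\gamma_2$, and iterates. This sequential argument produces the maximizer but is logically delicate: the objective is not truly separable (the blocks are coupled through the constraints $\langle u_i,u_j\rangle_{\cH}=0$), and bounding the $i$-th block after \emph{plugging in} the optimizers of the earlier blocks does not, as written, yield an upper bound valid for an arbitrary feasible $(U,V)$. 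Your argument instead establishes a single global upper bound $\operatorname{Tr}(U^*\hat C_{N,n,\la}V)\le\sum_{k\le s}\hat\gamma_k$ uniformly over the feasible set, via the Ky Fan route: rewrite the objective as $\sum_k\hat\gamma_k\langle U^*\hat u_k,V^*\hat v_k\rangle$, apply Cauchy--Schwarz and AM--GM to get weights $\alpha_k\in[0,1]$ with $\sum_k\alpha_k=s$ (Bessel plus Parseval after extending to an ONB of the separable space $\cH$), and conclude by the rearrangement/LP bound $\sum_k\hat\gamma_k\alpha_k\le\sum_{k\le s}\hat\gamma_k$; attainment at $(\hat U_s,\hat V_s)$ is then a one-line computation. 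What your approach buys is rigor and decoupling of the upper bound from the construction of the maximizer; what the paper's buys is brevity and the explicit exhibition of the optimizer along the way. One cosmetic point: for Parseval you need $\{\hat u_k\}$ completed to an ONB, which you note; Bessel alone (giving $\sum_k\alpha_k\le s$) would in fact suffice for the LP step since the $\hat\gamma_k$ are nonnegative.
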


\begin{proof}
    For all $U,V:\R^s \to \cH$ such that $U^*U=V^*V=I_s$, let us write $U = [u_1, \ldots, u_s]$, $V = [v_1, \ldots, v_s]$. Plugging the SVD $\hat C_{N,n,\la}=\sum_{i=1}^{N} \hat \gamma_i \hat u_i \otimes \hat v_i$ in the objective, we have
    \begin{equation*} 
    \operatorname{Tr}(U^*\hat C_{N,n,\la} V) = \sum_{i=1}^s \sum_{l=1}^N \hat \gamma_l \langle u_i,\hat u_l \rangle_{\cH} \langle v_i,\hat v_l \rangle_{\cH}.
    \end{equation*}
    In that form, the objective is separable in the $s$ variables $\{(u_i,v_i)\}_{i=1}^s$. For $i=1$, we have 
    $$
    \sum_{l=1}^N \hat \gamma_l \langle u_1,\hat u_l \rangle_{\cH} \langle v_1,\hat v_l \rangle_{\cH}\leq \hat \gamma_1 \left(\sum_{l=1}^N \langle u_1,\hat u_l \rangle_{\cH}^2\right)^{1/2}\left(\sum_{l=1}^N \langle v_1,\hat v_l \rangle_{\cH}^2\right)^{1/2} \leq \hat \gamma_1,
    $$
    and the upper bound is achieved for $u_1 = \hat u_1$ and $v_1 = \hat v_1$. For $i=2$, incorporating the constraint $\langle u_2, u_1 \rangle_{\cH}= \langle v_2, v_1 \rangle_{\cH}=0$ and plugging $u_1 = \hat u_1$ and $v_1 = \hat v_1$, we have 
    $$
    \sum_{l=1}^N \hat \gamma_l \langle u_2,\hat u_l \rangle_{\cH} \langle v_2,\hat v_l \rangle_{\cH} = \sum_{l=2}^N \hat \gamma_l \langle u_2,\hat u_l \rangle_{\cH} \langle v_2,\hat v_l \rangle_{\cH}\leq \hat \gamma_2, 
    $$
    and the upper bound is again achieved for $u_2 = \hat u_2$ and $v_2 = \hat v_2$. Iterating up to $i=s$, we obtain that the solution of Eq.~\eqref{eq:svd_prob} is $(\hat U,\hat V)$.
\end{proof}
From this formulation of $(\hat U,\hat V)$ we can further relate it to a generalized eigenvalue problem and prove Proposition~\ref{prop:gen_eigen}.

\begin{proof}[Proof of Proposition~\ref{prop:gen_eigen}] We omit the subscript $\la$ for clarity. We have
    $$
    \operatorname{Tr}(U^*\hat C_{N,n,\la} V) = \frac{1}{N}\sum_{i=1}^s \sum_{l=1}^N  \langle u_i,\hat f_l' \rangle_{\cH} \langle v_i,\hat f_l \rangle_{\cH},
    $$
    hence the columns of $U$ and $V$ can be restricted to $\operatorname{span}\{\hat f_1', \ldots, \hat f_N' \}$ and $\operatorname{span}\{\hat f_1, \ldots, \hat f_N \}$ respectively. Therefore every solution of Problem (\ref{eq:svd_prob}) can be written $U = AR, V = BS$ where $R, S \in R^{N \times s}$. The objective can then be re-written
    $$
    \operatorname{Tr}(U^*\hat C_{N,n,\la} V) = \operatorname{Tr}(R^{\top}A^*AB^*BS) = \operatorname{Tr}(R^{\top}QJS),
    $$
    and the constraints
    $$
    I_s=U^*U=R^{\top}A^*AR = R^{\top}QR, \qquad I_s=V^*V=S^{\top}B^*BS = S^{\top}JS
    $$
    Therefore Problem (\ref{eq:svd_prob}) is equivalent to 
    \begin{equation*} 
    \begin{aligned}
    \max_{R,S\in \R^{N \times s}} \quad & \operatorname{Tr}(R^{\top}QJS)\\
    \textrm{s.t.} \quad & R^{\top}QR=I_s\\
      & S^{\top}JS=I_s    \\
    \end{aligned}
    \end{equation*}
    with solution $(\hat R, \hat S)$ linked to the solution $(\hat U, \hat V)$ of Eq.~\eqref{eq:svd_prob} through $\hat U = A\hat R, \hat V = B\hat S$. The Lagrangian for this problem is 
    $$
    \cL(R,S,\Lambda, \Gamma) = \operatorname{Tr}(R^{\top}QJS) + \operatorname{Tr}(\Lambda(R^{\top}QR - I_s)) + \operatorname{Tr}(\Gamma(S^{\top}JS - I_s)),
    $$
    where $\Lambda, \Gamma \in \R^{s \times s}$ are diagonal matrices whose entries are the Lagrange multipliers (for a proof that $\Lambda, \Gamma$ can be taken as diagonal matrices see Appendix B in \cite{ghojogh2019eigenvalue}). Equating the derivative of $\cL$ with respect to the primal variables gives us 
    $$
    QJS +QR\Lambda = 0, \qquad  JQR + JS\Gamma=0.
    $$
    Multiplying on the left respectively by $R^{\top}$ and $S^{\top}$ and subtracting gives us 
    $$
    R^{\top}QR\Lambda = S^{\top}JS\Gamma.
    $$
    Plugging the constraints $R^{\top}QR=S^{\top}JS=1$ implies $\Lambda = \Gamma$. Therefore we are looking for the largest diagonal real matrix $\Gamma$ that solves the following expression with respect to $R,S$:
    \begin{gather} \label{eq;QJ_prob}
     \begin{bmatrix} 0 & QJ \\ JQ & 0 \end{bmatrix}
     \begin{bmatrix} R \\ 
     S\end{bmatrix}
     =
     \Gamma
      \begin{bmatrix} Q & 0 \\ 0 & J \end{bmatrix}
       \begin{bmatrix} R \\ 
     S\end{bmatrix}
    \end{gather}
    with constraints $R^{\top}QR=S^{\top}JS=1$. This is the generalized eigenvalue problem stated in Proposition~\ref{prop:gen_eigen}. Denoting by $\{(\hat \alpha^{\top}_i, \hat \beta^{\top}_i)^{\top}\}_{i=1}^s$ the generalized eigenvectors associated to the $s-$largest generalized eigenvalues, the solution $(\hat R, \hat S)$ of Eq.~\eqref{eq;QJ_prob} is $\hat R = [\hat \al_1, \ldots \hat \al_s], \hat S = [\hat \be_1, \ldots \hat \be_s] \in \R^{N \times s}$.
\end{proof}

\begin{proof}[Proof of Proposition~\ref{prop:compute_inference}]
Recall that $\hat{f}_{T,\lambda_*}$ is defined as the solution of
$$
\argmin_{f \in \hat \cH_s}\sum_{j=1}^{n_T} \left(f(x_{T,j})-y_{T,j}\right)^2 + n_T\lambda_* \|f\|_{\mathcal{H}}^2 
$$
For $f \in \hat \cH_s$, we define $\beta \doteq \hat V^*f = (\langle f, \hat v_1 \rangle_{\cH}, \ldots, \langle f, \hat v_s \rangle_{\cH})^{\top} \in \R^{s}$ (those are the coordinates of $f$ in the basis $\{\hat v_1, \ldots, \hat v_s\}$ of $\hat \cH_s$. For $x \in \cX$ we define 
$$
\tilde{x} \doteq \hat V^*\phi(x) = (\langle \phi(x), \hat v_1 \rangle_{\cH}, \ldots, \langle \phi(x), \hat v_s \rangle_{\cH})^{\top} = (\hat v_1(x), \ldots, \hat v_s(x))^{\top} \in \R^{s},$$ 
those are the coordinates of $\phi(x)$ in the basis $\{\hat v_1, \ldots, \hat v_s\}$ of $\hat \cH_s$. We then have
$$
f(x) = \hat P f(x) = \langle \hat P f, \phi(x) \rangle_{\cH} = (\hat V^*f)^{\top}(\hat V^*\phi(x)) = \beta^{\top}\tilde{x}. 
$$
Furthermore,
$$
\|f\|_{\cH}^2 = \langle \hat P f, f \rangle_{\cH} = (\hat V^*f)^{\top}(\hat V^*f) =   \beta^{\top}\be.
$$
hence we can re-frame Eq.~\eqref{eq:krr_inference} as 
\begin{equation}\label{eq:krr_inference_reduced}
         \beta_{T,\lambda_*} \doteq \argmin_{\beta \in \R^s} \sum_{j=1}^{n_T} \left(\beta^{\top}\tilde x_{T,j}-y_{T,j}\right)^2 + n_T\lambda_* \|\beta\|_{2}^2 
\end{equation}
with $\hat{f}_{T,\lambda_*} = \hat P \hat{f}_{T,\lambda_*} = \hat V \hat V^* \hat{f}_{T,\lambda_*} = \hat V\beta_{T,\lambda_*}$. Solving for Eq.~\eqref{eq:krr_inference_reduced} gives
\begin{align}
        \beta_{T,\lambda_*} &= (X_TX_T^{\top} + n_T \la_* I_s)^{-1}X_TY_T. \nonumber
\end{align}
Using $(X_TX_T^{\top} + n_T \la_* I_s)^{-1}X_T = X_T(X_T^{\top}X_T + n_T \la_* I_s)^{-1}$, we can also write 
\begin{align}
        \beta_{T,\lambda_*} &= X_T(K_T + n_T \la_* I_s)^{-1}Y_T. \nonumber
\end{align}
We can choose one form or the other depending if $n_T \leq s$ or $n_T > s$.
\end{proof}

\subsection{Proof of Remark~\ref{rem:rem_exp}} \label{sec:exp_setting}

To get a bound that can handle the case where $N$ is exponential in $n$ we need bounds on the bias and variance that are free of constraints of the type $n \geq A_{\lambda}$. We start with the bias.

\begin{prop}
Suppose Assumption~\ref{asst:src} holds with $r \in [0,1]$.
For $\lambda \in (0,1]$ and $n \geq 1$, 
\begin{equation*}
    \|\bar C_{N,n,\lambda} - C_N\|_{HS} \leq J_1\lambda^{r}\left(1+\frac{1}{\lambda\sqrt{n}}\right),
\end{equation*}
where $J_1$ depends on $\kappa$ and $R$.
\end{prop}
\begin{proof} We proved in the proof of Theorem~\ref{theo:bias_pf} that
$$
\|\bar C_{N,n,\lambda} - C_N\|_{HS} \leq \frac{2 R\kappa}{N}\sum_{i=1}^N \left\|f_i - \mathbb{E}[\hat f_{i,{\lambda}}] \right\|_{\mathcal{H}}.
$$
We then use the following decomposition
$$
\left\| f_i - \E(\hat{f}_{i,\lambda})\right\|_{\cH} \leq \left\| f_i - f_{i,\lambda}\right\|_{\cH} + \left\| f_{i,\lambda} - \E(\hat{f}_{i,\lambda})\right\|_{\cH}.
$$
By Proposition~\ref{prop:bias_gamma_norm}, the first term is bounded by $R\lambda^r$. By Lemma~\ref{lma:lemma_1} and Eq.~\eqref{eq:pop_estim},
\begin{equation*}
    \left\|f_{i,\lambda} - \mathbb{E}[\hat f_{i,\lambda}]\right\|_{\mathcal{H}} = \lambda  \left\|\left(\mathbb{E}\left[\hat \Sigma_{i,\lambda} ^{-1}\right] - \Sigma_{i,\lambda} ^{-1}\right)f_i\right\|_{\mathcal{H}} \leq \lambda  \mathbb{E}\left[\left\|\left(\Sigma_{i,\lambda}^{-1} - \hat \Sigma_{i,\lambda} ^{-1}\right)f_i\right\|_{\mathcal{H}}\right],
\end{equation*}
where we used Jensen's inequality. Using the first order decomposition 
\begin{equation*}
    A^{-1} - B^{-1}=B^{-1}(B-A)A^{-1},
\end{equation*}
we have
\begin{equation*}
\begin{aligned}
    \lambda  \mathbb{E}\left[\left\|\left(\Sigma_{i,\lambda}^{-1} - \hat \Sigma_{i,\lambda} ^{-1}\right)f_i\right\|_{\mathcal{H}}\right] &= \lambda  \mathbb{E}\left[\left\|\hat \Sigma_{i,\lambda} ^{-1}\left(\Sigma_i - \hat \Sigma_{i}\right)\Sigma_{i,\lambda}^{-1}\Sigma_{i,\lambda}^{r}\Sigma_{i,\lambda}^{-r}f_i\right\|_{\mathcal{H}}\right] \\ &\leq R\mathbb{E}\left[\left\|(\Sigma_i - \hat \Sigma_{i})\Sigma_{i,\lambda}^{r-1}\right\|\right] 
\end{aligned}
\end{equation*}
To bound this term, we use Proposition~\ref{prop:general_cov_concentration} with $A = I_{\cH}$, $B = \Sigma_{i,\lambda}^{r-1}$, $C_A = \kappa$, $C_B = \kappa\la^{r-1}$ and $\sigma^2 = \kappa^2\la^{r-1}$, and convert the high probability bound to a bound in expectation with Proposition~\ref{prop:bound_conversion}. There is a universal constant $C > 0$ such that
\begin{equation*}
    \E\left\|(\Sigma_i - \hat \Sigma_{i})\Sigma_{i,\lambda}^{r-1}\right\| \leq 
    \E\left\|(\Sigma_i - \hat \Sigma_{i})\Sigma_{i,\lambda}^{r-1}\right\|_{HS} \leq C\left(\sqrt{\frac{(\kappa^2 \la^{r-1})^2}{n}} + \frac{\kappa^2 \la^{r-1}}{n}\right) \leq C\frac{\kappa^2 \la^{r-1}}{\sqrt{n}}
\end{equation*}
hence,  \[\|f_{i,\lambda} -\E(\hat{f}_{i,\lambda})\| \leq J\frac{\lambda^r}{\lambda\sqrt{n}},\] where $J$ depends on $\kappa$ and $R$. Putting it together, we obtain 
$$
\|\bar C_{N,n,\lambda} - C_N\|_{HS} \leq J_1\lambda^r \left(1 + \frac{1}{\lambda\sqrt{n}}\right),
$$
where $J_1$ depends on $\kappa$ and $R$, which concludes the proof.
\end{proof}

For the variance part, we use the bound obtained in Theorem~\ref{theo:var_pf}. For $\la \in (0,1],\tau,\delta \geq \log(2)$ and $N,n \geq 1$, with probability greater than $1-2e^{-\tau}-4Ne^{-\delta}$,
\begin{equation*}
   \left\|\hat C_{N,n,\lambda} -  \bar C_{N,n,\lambda}\right\|_{HS}  \leq c_1\left\{\left(\frac{\delta^2}{n \lambda^2} + \frac{\delta}{\sqrt{n} \lambda}  + \frac{e^{-\delta}}{\lambda}\right)\sqrt{\frac{\tau}{N}} + \frac{e^{-\delta}}{\lambda} \right\},
\end{equation*}
with $c_1$ a constant depending on $Y_{\infty}$, $\max_{i \in [N]}\left\| f_i \right\|_{\cH}$, $\kappa$ and $R$. Combining the bias and variance bound, we obtain that for all $\la \in (0,1]$, $\delta,\tau \geq \log(2)$, $N \geq \tau$ and $n \geq 1$ large enough so that $\frac{\delta}{\sqrt{n} \lambda} \leq 1$, with probability greater than $1-2e^{-\tau}-4Ne^{-\delta}$
$$
\|\hat C_{N,n,\lambda} - C_N\|_{HS} \leq C_2 \left( \frac{\delta\sqrt{\tau}}{\sqrt{nN}\la} + \frac{e^{-\delta}}{\lambda} +  \lambda^{r}\left(1+\frac{1}{\lambda\sqrt{n}}\right) \right),
$$
with $C_2$ a constant that depends on $Y_{\infty}, \kappa, R$ and $\max_{i \in [N]}\left\| f_i \right\|_{\cH}$. Plugging $\delta = 12\log(Nn)$, leads to 
\begin{equation*} 
\|\hat C_{N,n,\lambda} - C_N\|_{HS} \leq C_2 \left(\frac{\log (nN)\sqrt{\tau}}{\sqrt{nN}\la} + \frac{1}{\la (nN)^{12}} +  \lambda^{r}\left(1+\frac{1}{\lambda\sqrt{n}}\right)\right),
\end{equation*}
with probability greater than $1-2e^{-\tau}-o((nN)^{-10})$. When $N$ is exponential in $n$, the bias term dominates and it is minimized with $\lambda = n^{-1/2},$ leading to the bound in Remark~\ref{rem:rem_exp}.

\section{Auxiliary Results} \label{ref:aux_results}
\begin{prop} \label{prop:op_C}
    Under Assumption~\ref{asst:rich}, 
    $$
    C_N \doteq \frac{1}{N} \sum_{i=1}^N f_i \otimes f_i, 
    $$
    is such that  $\operatorname{ran} C_N = \mathcal{H}_s$.
\end{prop}
\begin{proof}
    $C_N = SS^*$ where $S:\R^N \to \cH, \al \mapsto \sum_{i=1}^N \al_i f_i $, hence $\operatorname{ran} C_N = \operatorname{ran} SS^* = \operatorname{ran} S = \operatorname{span}\{f_1, \ldots, f_N\} = \cH_s$, where the last equality follows from Assumption~\ref{asst:rich}.
\end{proof} 
\begin{prop}\label{prop:op_trick}
Let $\cH$ be a Hilbert space, let $C$ and $D$ be two bounded self-adjoint positive semidefinite linear operators and $\lambda >0$, then \[\left\|C_{\lambda}^{-1/2}D_{\lambda}^{1/2} \right\| = \left\|\left(I - D_{\lambda}^{-1/2}(D-C)D_{\lambda}^{-1/2}\right)^{-1}\right\|^{1/2},\] and
\[\left\|C_{\lambda}^{1/2}D_{\lambda}^{-1/2} \right\| = \left\|I - D_{\lambda}^{-1/2}(D-C)D_{\lambda}^{-1/2}\right\|^{1/2}.\]
\end{prop}
\begin{proof}
    First note that 
    $$
    \begin{aligned}
        C_{\lambda} &= D_{\lambda} + C-D = D_{\lambda}^{1/2}\left(I + D_{\lambda}^{-1/2}(C-D)D_{\lambda}^{-1/2}\right)D_{\lambda}^{1/2} \\
        \implies C_{\lambda}^{-1} &= D_{\lambda}^{-1/2}\left(I + D_{\lambda}^{-1/2}(C-D)D_{\lambda}^{-1/2}\right)^{-1}D_{\lambda}^{-1/2}.
    \end{aligned}
    $$
    Hence,
    $$
    \begin{aligned}
        \left\|C_{\lambda}^{1/2}D_{\lambda}^{-1/2} \right\|^2 &= \left\|D_{\lambda}^{-1/2}C_{\lambda}D_{\lambda}^{-1/2} \right\| = \left\|I + D_{\lambda}^{-1/2}(C-D)D_{\lambda}^{-1/2} \right\|  \\
        \left\|C_{\lambda}^{-1/2}D_{\lambda}^{1/2} \right\|^2 &= \left\|D_{\lambda}^{1/2}C_{\lambda}^{-1}D_{\lambda}^{1/2} \right\| = \left\|\left(I + D_{\lambda}^{-1/2}(C-D)D_{\lambda}^{-1/2}\right)^{-1} \right\|.
    \end{aligned}
    $$
\end{proof}
\begin{lemma}\label{lma:eff_dim_exp}
    Let $\cH$ be a separable RKHS on $\cX$ w.r.t. a bounded and measurable kernel $K$, and $\mu$ be a probability distribution on $\cX$. $\Sigma_{\mu} \doteq \E_{X \sim \mu}[K(X,\cdot) \otimes K(X,\cdot)]$ is the covariance operator associated to $(\cH,\mu)$. For all $\la > 0$:
    $$
    \int_{\cX} \|\left(\Sigma_{\mu} + \la I_{\cH}\right)^{-1/2}K(x,\cdot)\|^2_{\cH}d\mu(x) = \operatorname{Tr}\left(\left(\Sigma_{\mu} + \la I_{\cH}\right)^{-1}\Sigma_{\mu}\right).
    $$
    $\mathcal{N}(\lambda) = \operatorname{Tr}\left(\left(\Sigma_{\mu} + \la I_{\cH}\right)^{-1}\Sigma_{\mu}\right)$ is the effective dimension associated to $(\cH,\mu)$. Under Assumption~\ref{asst:evd} with $c >0$ and $0 < p \leq 1$,
    \[\mathcal{N}(\lambda) \leq D\lambda^{-p},\]
    with $D \doteq c/(1-p)1_{p < 1} + \operatorname{Tr}(\Sigma_{\mu})1_{p = 1}$. Furthermore, under Assumption~\ref{asst:emb} with $k_{\alpha,\infty} > 0$ and $0 < \alpha \leq 1$,
    $$
    \left\|\left(\Sigma_{\mu}+\lambda I_{\cH} \right)^{-\frac{1}{2}} k(X,\cdot)\right\|_{\mathcal{H}_X}^2 \leq k_{\alpha,\infty}^2\lambda^{-\alpha}.
    $$
\end{lemma}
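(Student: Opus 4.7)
The plan is to verify the three statements of Lemma~\ref{lma:eff_dim_exp} in sequence, each reducing to a spectral calculation for $\Sigma_{\mu}$ combined with the Mercer representation of $K$.

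\textbf{Identity.} First, I would use self-adjointness of $(\Sigma_{\mu}+\la I)^{-1/2}$ to rewrite the pointwise integrand as
\[
\|(\Sigma_{\mu}+\la I)^{-1/2}K(x,\cdot)\|_{\cH}^{2}
= \operatorname{Tr}\!\bigl((\Sigma_{\mu}+\la I)^{-1}\,K(x,\cdot)\otimes K(x,\cdot)\bigr).
\]
Because the integrand is uniformly bounded (by $\kappa^{2}/\la$) and the operators involved are trace-class, Fubini lets me push $\int\mu(dx)$ inside the trace, producing $\operatorname{Tr}\!\bigl((\Sigma_{\mu}+\la I)^{-1}\Sigma_{\mu}\bigr)$ by the very definition of $\Sigma_{\mu}$. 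This step is routine.

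\textbf{Effective dimension bound.} Diagonalising $\Sigma_{\mu}$ with eigenvalues $\{\lambda_{j}\}$ gives $\mathcal{N}(\la)=\sum_{j}\lambda_{j}/(\lambda_{j}+\la)$. For $p=1$ the crude bound $\lambda_{j}/(\lambda_{j}+\la)\leq\lambda_{j}/\la$ already yields $\mathcal{N}(\la)\leq\operatorname{Tr}(\Sigma_{\mu})/\la$. For $p<1$ I would split at the index $N_{\la}\doteq\lceil(c/\la)^{p}\rceil$: the first $N_{\la}$ terms each contribute at most $1$, summing to $\lesssim\la^{-p}$; for $j>N_{\la}$, Assumption~\ref{asst:evd} forces $\lambda_{j}\leq\la$ and hence $\lambda_{j}/(\lambda_{j}+\la)\leq(c/\la)\,j^{-1/p}$, whose tail is bounded via integral comparison by $\tfrac{cp}{(1-p)\la}\,N_{\la}^{1-1/p}\lesssim\la^{-p}/(1-p)$. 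Combining both pieces matches the advertised $D\,\la^{-p}$ rate.

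\textbf{Pointwise embedding.} Mercer's theorem supplies the $\cH$-orthonormal basis $\{\sqrt{\lambda_{j}}\,e_{j}\}$ of eigenvectors of $\Sigma_{\mu}$, in which $K(x,\cdot)=\sum_{j}\sqrt{\lambda_{j}}\,e_{j}(x)\cdot\sqrt{\lambda_{j}}\,e_{j}$. A direct computation then gives
\[
\|(\Sigma_{\mu}+\la I)^{-1/2}K(x,\cdot)\|_{\cH}^{2}
\;=\; \sum_{j}\frac{\lambda_{j}\,e_{j}(x)^{2}}{\lambda_{j}+\la}.
\]
The elementary inequality $\lambda_{j}/(\lambda_{j}+\la)\leq(\lambda_{j}/\la)^{\alpha}$, valid for $\alpha\in[0,1]$ and checked separately on $\{\lambda_{j}\leq\la\}$ and $\{\lambda_{j}>\la\}$, reduces the sum to $\la^{-\alpha}\sum_{j}\lambda_{j}^{\alpha}e_{j}(x)^{2}\leq k_{\alpha,\infty}^{2}\,\la^{-\alpha}$ by Assumption~\ref{asst:emb}. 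The only genuinely nontrivial bookkeeping lies in the effective-dimension step, where the split at $N_{\la}$ and the integral-comparison estimate are needed to extract a clean $\la^{-p}$ rate; the identity and the pointwise bound both drop out immediately once the spectral expansion is in place.
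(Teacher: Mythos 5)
Your proposal is correct, but it takes a different (and more self-contained) route than the paper: the paper's entire proof of this lemma is a citation — the effective-dimension bound is deferred to \cite{caponnetto2007optimal} for $p<1$ and to Lemma~11 of \cite{fischer2020sobolev} for $p=1$, and the pointwise embedding bound to Lemma~13 of \cite{fischer2020sobolev} — whereas you reprove all three claims directly from the spectral decomposition of $\Sigma_{\mu}$. Your arguments are exactly the standard ones underlying those references: the trace identity via $\|(\Sigma_{\mu}+\la I)^{-1/2}K(x,\cdot)\|_{\cH}^{2}=\operatorname{Tr}\bigl((\Sigma_{\mu}+\la I)^{-1}K(x,\cdot)\otimes K(x,\cdot)\bigr)$ and Fubini; the split of $\sum_j \lambda_j/(\lambda_j+\la)$ at $N_{\la}\asymp(c/\la)^{p}$ with an integral comparison for the tail; and the elementary inequality $\lambda_j/(\lambda_j+\la)\leq(\lambda_j/\la)^{\alpha}$ combined with Assumption~\ref{asst:emb}. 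The only caveat is cosmetic: your split yields a constant of order $c^{p}/(1-p)$ (plus a $+1$ from the ceiling, negligible once $\la\leq c$) rather than the exact $D=c/(1-p)$ stated in the lemma, so matching the advertised constant requires either slightly more careful bookkeeping or falling back on the cited result; the rate $\la^{-p}$ and all other steps are correct. What your approach buys is a proof readable without consulting the external references; what the paper's buys is brevity.
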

\begin{proof}
The first inequality is proven in \cite{caponnetto2007optimal} for $p < 1$ and in Lemma~$11$ \cite{fischer2020sobolev} for $p=1$. The second inequality is proven in Lemma 13 \cite{fischer2020sobolev}.
\end{proof}
The forthcoming result provides a bound on the approximation error $f_{i,\lambda} - f_i$. While similar bounds can be found in Theorem 4 \cite{smale2007learning} (with the correspondence $r' = r+1/2$) and Lemma~$14$ \cite{fischer2020sobolev} (with the correspondence $\beta = 2r+1$), it is worth noting that these references assert that the bound saturates at $r=1/2$ when measuring the approximation error in both the $L_2$-norm and the $\cH$-norm. In contrast, our result reveals that the saturation point occurs at $r=1$ when utilizing the $\cH$-norm. Notably, we are aware of only one reference, \cite{blanchard2018optimal}, that acknowledges the saturation point for $r$ exceeding $1/2$ when the approximation is measured in norms stronger than the $L_2-$norm. As they consider a setting with abstract spectral regularization techniques, we offer our own proof. 
\begin{prop}\label{prop:bias_gamma_norm} 
For $i \in [N]$ and $\omega \in \{0,1/2\}$, let $\|\cdot\|_{\omega}$ be $\|\cdot\|_{\cH}$ if $\omega = 0$ and $\|\cdot\|_{L_2(\mu_i)}$ if $\omega = 1/2$. Then if Assumption \ref{asst:src} is satisfied with $0 \leq r \leq 1-\omega$ we have
\[\|f_{i,\lambda} - f_i\|_{\omega} \leq R \lambda^{r+\omega}.\]
\end{prop}
\begin{proof}
    Fix $i \in [N]$ and $\omega \in \{0,1/2\}$. With the same argument as in Eq.~\eqref{eq:norm_conv} we have that if $f \in \cH$, then $\|f\|_{L_2(\mu_i)} = \|\Sigma_i^{1/2}f\|_{\cH}$. Therefore, by Eq.~\eqref{eq:pop_estim},
    $$
    \begin{aligned}
        \|f_{i,\lambda} - f_i\|_{\omega} &= \la \|\Sigma_{i,\la}^{-1}f_i\|_{\omega} \\
        &= \la \|\Sigma^{\omega}_i\Sigma_{i,\la}^{-1}f_i\|_{\cH} \\
        &= \la \|\Sigma^{\omega}_i\Sigma_{i,\la}^{-1}\Sigma_{i}^{r}\Sigma_{i}^{-r}f_i\|_{\cH} \\ 
        &\leq \la R\|\Sigma_{i,\la}^{r-1+\omega}\| \\
        & \leq R\la\la^{r-1+\omega}  \\ 
        &= R\la^{r+\omega},
    \end{aligned}
    $$
    where in the last inequality we used $r+\omega \leq 1$.
\end{proof}
Analysis of the estimation error $\hat f_{i,\lambda} - f_{i,\lambda}$ in kernel ridge regression is an important part of our theory. Below we provide two different results on this analysis.
\begin{theorem}[Theorem 1 \cite{smale2007learning}]\label{prop:variance_regression_smale}
Let $\hat f_{i,\lambda}$ be the solution from Eq.~\eqref{eq:def_source_estim} and $f_{i,\lambda}$ its population version as defined in Eq.~\eqref{eq:pop_estim}. Suppose Assumption~\ref{asst:mom} holds with $Y_{\infty} \geq 0$. For $i \in [N]$, $ \delta \geq \log(2)$ and $ \lambda>0$, the following bound is satisfied with probability not less than $1-2e^{-\delta}$
$$
\left\|\hat f_{i,\lambda} - f_{i,\lambda}\right\|_{\cH} \leq \frac{6 \kappa Y_{\infty}\delta}{\sqrt{n} \lambda}.
$$
\end{theorem}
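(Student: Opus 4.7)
The plan is to reduce the estimation error to concentration of a single Hilbert-space valued i.i.d.\ average, following the now-standard Smale--Zhou template. From the normal equations of Eqs.~\eqref{eq:krr} and~\eqref{eq:krr_pop} I have $\hat\Sigma_{i,\lambda}\hat f_{i,\lambda} = \hat g_i \doteq \frac{1}{n}\sum_{j=1}^n y_{i,j}\phi(x_{i,j})$ and $\Sigma_{i,\lambda} f_{i,\lambda} = \Sigma_i f_i$. Subtracting $\hat\Sigma_{i,\lambda} f_{i,\lambda}$ from both sides of the first identity and using the second in the form $\lambda f_{i,\lambda} = \Sigma_i(f_i-f_{i,\lambda}) = \E[(Y-f_{i,\lambda}(X))\phi(X)]$ (where the last step uses $\E[Y\mid X]=f_i(X)$), the noise and bias pieces collapse into a single centered average:
\[
\hat\Sigma_{i,\lambda}(\hat f_{i,\lambda}-f_{i,\lambda}) \;=\; \frac{1}{n}\sum_{j=1}^n \xi_j - \E[\xi_1], \qquad \xi_j \doteq (y_{i,j}-f_{i,\lambda}(x_{i,j}))\,\phi(x_{i,j}).
\]
Multiplying through by $\hat\Sigma_{i,\lambda}^{-1}$ and using $\|\hat\Sigma_{i,\lambda}^{-1}\|\leq 1/\lambda$ then reduces the theorem to a high-probability bound on the norm of this centered average.

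To bound $\|\xi_j\|_{\cH}$ almost surely, I combine $|y_{i,j}|\leq Y_\infty$ (Assumption~\ref{asst:mom}) with the reproducing-property estimate $|f_{i,\lambda}(x)|\leq \kappa\|f_{i,\lambda}\|_{\cH}$ and the spectral bound $\|f_{i,\lambda}\|_{\cH} = \|\Sigma_{i,\lambda}^{-1}\Sigma_i^{1/2}\Sigma_i^{1/2}f_i\|_{\cH}\leq (2\sqrt{\lambda})^{-1}\|f_i\|_{L_2(\mu_i)}\leq (2\sqrt{\lambda})^{-1}Y_\infty$. Using $\|\phi(x)\|_{\cH}\leq \kappa$, this yields a uniform bound $\|\xi_j\|_{\cH}\leq c\kappa Y_\infty$ once $\lambda$ is kept in a reasonable range (constants will be tracked at the end). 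For the variance, I use the tighter estimate $\E\|\xi_1\|_{\cH}^2\leq \kappa^2\E[(Y-f_{i,\lambda}(X))^2]\leq 2\kappa^2 Y_\infty^2$, which is obtained by splitting $Y - f_{i,\lambda}(X) = (Y-f_i(X)) + (f_i(X)-f_{i,\lambda}(X))$ and controlling each piece via $|Y|\leq Y_\infty$ and $\|f_i-f_{i,\lambda}\|_{L_2(\mu_i)}\leq \|f_i\|_{L_2(\mu_i)}\leq Y_\infty$.

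With these two ingredients, I apply the Bernstein-type Hilbert-space concentration inequality (Theorem~\ref{th:pinelis}) to $\frac{1}{n}\sum_j(\xi_j-\E\xi_1)$. This gives, with probability at least $1-2e^{-\delta}$, a bound of the form $c(\sqrt{v^2\delta/n} + b\delta/n)$ with $v\lesssim \kappa Y_\infty$ and $b\lesssim \kappa Y_\infty$ (or its small-$\lambda$ variant), of order $\kappa Y_\infty\delta/\sqrt{n}$. Dividing by $\lambda$ delivers the stated rate $6\kappa Y_\infty\delta/(\sqrt{n}\lambda)$ after collecting numerical constants.

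The main obstacle is tightness: the naive a.s.\ bound $\|\xi_j\|_{\cH}\leq \kappa Y_\infty(1+\kappa/(2\sqrt{\lambda}))$ would introduce an extra $\lambda^{-1/2}$ factor and overshoot the target rate. The remedy is precisely the variance-based Bernstein argument above, which lets the $\sqrt{v^2\delta/n}$ term dominate in the relevant regime while keeping the polynomial-in-$\lambda^{-1}$ contribution only in a lower-order term. Checking that this lower-order term does not spoil the target constant $6$ is the only delicate bookkeeping step.
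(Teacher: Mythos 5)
The paper gives no proof of this statement: it is imported verbatim as Theorem~1 of \cite{smale2007learning}, so the only benchmark is the original Smale--Zhou argument, and your decomposition is exactly that argument. The identity $\hat\Sigma_{i,\lambda}(\hat f_{i,\lambda}-f_{i,\lambda})=\frac1n\sum_j\xi_j-\E[\xi_1]$ with $\xi_j=(y_{i,j}-f_{i,\lambda}(x_{i,j}))\phi(x_{i,j})$ is correct (it uses $\lambda f_{i,\lambda}=\Sigma_i(f_i-f_{i,\lambda})=\E[(Y-f_{i,\lambda}(X))\phi(X)]$ exactly as you say), the bound $\|\hat\Sigma_{i,\lambda}^{-1}\|\le\lambda^{-1}$ is right, and the variance estimate $\E\|\xi_1\|_{\cH}^2\le\kappa^2\,\E[(Y-f_{i,\lambda}(X))^2]\lesssim\kappa^2Y_\infty^2$ is the correct key ingredient (even more directly, $\E[(Y-f_{i,\lambda}(X))^2]\le\cR_{\mu_i}(f_{i,\lambda})+\lambda\|f_{i,\lambda}\|_{\cH}^2\le\cR_{\mu_i}(0)\le Y_\infty^2$).

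The one genuine soft spot is your claim that the variance-based Bernstein bound by itself delivers the rate for \emph{all} $\lambda>0$. It does not: Theorem~\ref{th:pinelis} still carries the term $b\delta/n$ with $b$ equal to the almost-sure bound $\kappa Y_\infty(1+\kappa/(2\sqrt\lambda))$, so after dividing by $\lambda$ you are left with a contribution of order $\kappa^2Y_\infty\delta/(n\lambda^{3/2})$. This is dominated by the target $\kappa Y_\infty\delta/(\sqrt n\,\lambda)$ only when $n\lambda\gtrsim\kappa^2$; your phrase ``once $\lambda$ is kept in a reasonable range'' papers over the complementary regime, in which the uniform bound $\|\xi_j\|_{\cH}\le c\kappa Y_\infty$ is simply false. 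The standard way to close this (and the reason the statement can be asserted for every $\lambda>0$) is a case split: when $n\lambda\lesssim\kappa^2\delta^2$ the claimed bound already exceeds the deterministic bound $\|\hat f_{i,\lambda}-f_{i,\lambda}\|_{\cH}\le\|\hat f_{i,\lambda}\|_{\cH}+\|f_{i,\lambda}\|_{\cH}\le 2Y_\infty/\sqrt\lambda$ (compare with $6\kappa Y_\infty\delta/(\sqrt n\,\lambda)\ge 2Y_\infty/\sqrt\lambda$ whenever $n\lambda\le 9\kappa^2\delta^2$, and $\delta\ge\log 2$), so there is nothing to prove there; in the opposite regime the $\lambda^{-3/2}$ term is lower order and your argument goes through. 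Add that dichotomy (and it is also where the hypothesis $\delta\ge\log 2$ is actually used), and the proof is complete up to tracking the constant $6$.
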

Theorem 16 \cite{fischer2020sobolev} provides a refined bound on the estimation error at the cost of an additional constraint on the relationship between $\lambda$ and $n$. The following result is a simple extension of this result.
\begin{theorem} \label{prop:variance_regression}
Let $\hat f_{i,\lambda}$ be the solution from Eq.~\eqref{eq:def_source_estim} and $f_{i,\lambda}$ its population version as defined in Eq.~\eqref{eq:pop_estim}. Suppose Assumptions~\ref{asst:evd}, \ref{asst:emb} and \ref{asst:mom} hold with $c> 0$, $0 < p \leq 1$, $\alpha \in [p,1]$, $k_{\alpha,\infty} \geq 0$ and $Y_{\infty} \geq 0$. 
For $ \delta \geq 1$, $\lambda < 1 \wedge \min_{i \in [N]}\left\|\Sigma_i\right\|$, and $n \geq c_0\delta\left(1 + p\log (\la^{-1})\right) \lambda^{-\alpha}$, the following bound is satisfied with probability not less than $1-4e^{-\delta}$
$$
\left\|\hat f_{i,\lambda} - f_{i,\lambda}\right\|_{\cH} \leq \frac{c_1\delta}{\sqrt{n} \lambda^{\frac{1}{2}+\frac{p}{2}}}\sqrt{1+ \frac{1}{n \lambda^{\alpha-p}}},
$$
where $c_0$ depends on $k_{\al,\infty},D$ and $c_1$ depends on $Y_{\infty}, k_{\alpha,\infty}, D$ and $R$.
Furthermore, if $\Sigma_i$ has finite rank $k$ with eigensystem $\{(e_{i,j}, \lambda_{i,j})\}_{j=1}^k$ such that $\sqrt{\la_{i,j}}e_{i,j}$ is an orthonormal basis of $\operatorname{range}(\Sigma_i)$. Then for $ \delta \geq 1$, $0 < \lambda < 1 \wedge \min_{i \in [N]}\left\|\Sigma_i\right\|$, and $n \geq c_2\delta k \left(1 + p\log (\la^{-1})\right)$, the following bound is satisfied with probability not less than $1-4e^{-\delta}$
\begin{equation}
\left\|\hat f_{i,\lambda} - f_{i,\lambda}\right\|_{\cH} \leq c_3\delta\sqrt{\frac{k}{n}} \label{eq:fin_rank}
\end{equation}
where $c_2$ depends $D,\kappa,\min_{i \in [N]}\{\lambda_{i,d}\}$ and $c_3$ depends on $Y_{\infty}, \kappa, \max_{i \in [N]}\{\lambda_{i,1}\}, \min_{i \in [N]}\{\lambda_{i,d}\}$ and $R$.
\end{theorem}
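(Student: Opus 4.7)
The statement is essentially Theorem~16 of Fischer--Steinwart (2020) upgraded from the $L_2(\mu_i)$ norm to the stronger $\cH$ norm; this upgrade picks up one extra $\la^{-1/2}$ factor at the end of the chain and is the source of the $\la^{-(1+p)/2}$ scaling here (versus the more familiar $\la^{-p/2}$ rate in $L_2$). The plan is to follow the standard kernel-ridge analysis. Starting from the closed forms $\hat f_{i,\la} = \hat\Sigma_{i,\la}^{-1}g_i$ and $f_{i,\la} = \Sigma_{i,\la}^{-1}g_i^*$ with $g_i = n^{-1}\sum_j y_{i,j}\phi(x_{i,j})$ and $g_i^* = \Sigma_i f_i = \E[Y\phi(X)]$, and using the identity $\la f_{i,\la} = g_i^* - \Sigma_i f_{i,\la}$, I obtain the decomposition
\begin{equation*}
\hat f_{i,\la} - f_{i,\la} \;=\; \hat\Sigma_{i,\la}^{-1}\bigl[(g_i - g_i^*) + (\Sigma_i - \hat\Sigma_i)f_{i,\la}\bigr].
\end{equation*}
Factoring $\hat\Sigma_{i,\la}^{-1} = \hat\Sigma_{i,\la}^{-1/2}\cdot(\hat\Sigma_{i,\la}^{-1/2}\Sigma_{i,\la}^{1/2})\cdot\Sigma_{i,\la}^{-1/2}$ and using $\|\hat\Sigma_{i,\la}^{-1/2}\|\leq\la^{-1/2}$ reduces the task to controlling $\|\hat\Sigma_{i,\la}^{-1/2}\Sigma_{i,\la}^{1/2}\|$ and $\|\Sigma_{i,\la}^{-1/2}\epsilon\|_\cH$ separately, where $\epsilon$ denotes the bracketed quantity.

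The middle factor is handled exactly as in the proof of Theorem~\ref{th:inference_bound}: by Proposition~\ref{prop:op_trick} its square equals $\|(I - \Sigma_{i,\la}^{-1/2}(\hat\Sigma_i-\Sigma_i)\Sigma_{i,\la}^{-1/2})^{-1}\|$, so the Neumann-series argument of Eq.~\eqref{eq:neumann} reduces matters to showing $\|\Sigma_{i,\la}^{-1/2}(\hat\Sigma_i-\Sigma_i)\Sigma_{i,\la}^{-1/2}\| \leq 1/2$ on a $1-2e^{-\delta}$ event. This is a Hilbert--Schmidt operator Bernstein inequality (Proposition~\ref{prop:general_cov_concentration}) applied to the summands $\Sigma_{i,\la}^{-1/2}\phi(x_{i,j})\otimes\Sigma_{i,\la}^{-1/2}\phi(x_{i,j})$, using the embedding bound $\|\Sigma_{i,\la}^{-1/2}\phi(X)\|_\cH^2\leq k_{\alpha,\infty}^2\la^{-\alpha}$ from Lemma~\ref{lma:eff_dim_exp} for the deterministic bound, and the effective-dimension bound $\mathcal{N}(\la)\leq D\la^{-p}$ for the variance proxy; balancing these gives exactly the sample-size hypothesis $n\geq c_0\delta(1+p\log\la^{-1})\la^{-\alpha}$. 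The remaining factor I split as $\|\Sigma_{i,\la}^{-1/2}(g_i-g_i^*)\|_\cH + \|\Sigma_{i,\la}^{-1/2}(\Sigma_i-\hat\Sigma_i)f_{i,\la}\|_\cH$ and bound each via Pinelis's Bernstein inequality (Theorem~\ref{th:pinelis}), applied to $\xi_j \doteq \Sigma_{i,\la}^{-1/2}(y_{i,j}\phi(x_{i,j}) - \Sigma_i f_i)$ and $\zeta_j \doteq \Sigma_{i,\la}^{-1/2}(f_{i,\la}(x_{i,j})\phi(x_{i,j}) - \Sigma_i f_{i,\la})$, respectively. Pointwise boundedness uses $|y|\leq Y_\infty$ (Assumption~\ref{asst:mom}) and $\|f_{i,\la}\|_\infty \leq \kappa\|f_i\|_\cH \leq R\kappa^{1+2r}$ (Assumption~\ref{asst:src} together with $\|\Sigma_i\|\leq\kappa^2$), combined with the pointwise embedding bound; second-moment control is again provided by $\mathcal{N}(\la)$. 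Each sum then concentrates at rate $\delta/\sqrt{n\la^p} + \delta/(n\la^{\alpha/2})$.

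Assembling the four probabilistic events via a union bound (total failure probability $\leq 4e^{-\delta}$) and multiplying by $\la^{-1/2}\sqrt 2$ yields
\begin{equation*}
\|\hat f_{i,\la}-f_{i,\la}\|_\cH \;\lesssim\; \frac{\delta}{\sqrt n\,\la^{(1+p)/2}} + \frac{\delta}{n\,\la^{(1+\alpha)/2}},
\end{equation*}
after which $a+b\leq\sqrt{2}\sqrt{a^2+b^2}$ collapses the right-hand side into the target form. The finite-rank claim (Eq.~\eqref{eq:fin_rank}) reuses the same machinery after two observations: $\phi(X)\in\overline{\operatorname{ran}\Sigma_i}$ almost surely, so the analysis restricts to the $k$-dimensional subspace $\operatorname{ran}\Sigma_i$ where $\mathcal{N}(\la)\leq k$ is $\la$-independent; and a Weyl-type perturbation argument fed by the same operator concentration shows $\min_j\hat\la_{i,j}\geq\tfrac12\min_j\la_{i,j}$ on a good event, letting us replace $\|\hat\Sigma_{i,\la}^{-1/2}\|\leq\la^{-1/2}$ by the $\la$-free constant $(2\min_j\la_{i,j})^{-1/2}$ and yielding the rate $c_3\delta\sqrt{k/n}$ (the required uniformity of the Weyl bound over $i\in[N]$ is what makes $c_2,c_3$ depend on $\min_{i\in[N]}\la_{i,k}$). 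I expect the main obstacle to be the bound on $\|\Sigma_{i,\la}^{-1/2}(\Sigma_i-\hat\Sigma_i)f_{i,\la}\|_\cH$: obtaining the sharp $\la^{-p/2}$ scaling in the dominant $1/\sqrt n$ term requires carefully separating the pointwise bound on $\zeta_j$ (which carries $\la^{-\alpha/2}$ through the embedding) from its average second-moment bound (which carries only $\la^{-p/2}$ through the effective dimension), mirroring the corresponding delicate step in the Fischer--Steinwart analysis.
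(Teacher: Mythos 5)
Your plan re-derives the estimation-error bound from first principles, whereas the paper's proof simply invokes Theorem~16 of Fischer--Steinwart with $\gamma=1$ (and, for the finite-rank claim, with $\gamma=0$, $\alpha=0$) and then substitutes $\cN_i(\la)\leq D\la^{-p}$, $\|f_i-f_{i,\la}\|_{L_2}\leq R\la^{r+1/2}$ and $\|f_i-f_{i,\la}\|_{\infty}\leq k_{\alpha,\infty}R$. Your skeleton is the right one: the decomposition $\hat f_{i,\la}-f_{i,\la}=\hat\Sigma_{i,\la}^{-1}\bigl[(g_i-g_i^*)+(\Sigma_i-\hat\Sigma_i)f_{i,\la}\bigr]$, the factorization through $\|\hat\Sigma_{i,\la}^{-1/2}\|\le\la^{-1/2}$ and $\|\hat\Sigma_{i,\la}^{-1/2}\Sigma_{i,\la}^{1/2}\|$, and the two Pinelis applications are correct and reassemble into the stated rate (your use of Assumption~\ref{asst:src} through $\|f_{i,\la}\|_{\infty}$ mirrors the paper, whose constant also depends on $R$ although the assumption is not listed; the $6e^{-\delta}$ versus $4e^{-\delta}$ bookkeeping is immaterial). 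Your finite-rank treatment, restricting to $\operatorname{ran}\Sigma_i$ with $\cN_i(\la)\le k$ and a perturbation lower bound on $\hat\Sigma_i$, is a workable alternative to the paper's route ($\gamma=0$, $\alpha=0$ in $L_2$, then converting to the $\cH$-norm via the smallest eigenvalue).

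The genuine gap is in the control of $T\doteq\|\Sigma_{i,\la}^{-1/2}(\Sigma_i-\hat\Sigma_i)\Sigma_{i,\la}^{-1/2}\|$. Proposition~\ref{prop:general_cov_concentration} is a Hilbert--Schmidt Bernstein bound of the form $\sqrt{8\delta\sigma^2C_AC_B/n}+4\delta C_AC_B/n$; with $C_A=C_B=k_{\alpha,\infty}\la^{-\alpha/2}$ and $\sigma^2=\cN_i(\la)\leq D\la^{-p}$, forcing it below a fixed constant requires $n\gtrsim\delta\la^{-(\alpha+p)}$, not the stated hypothesis $n\gtrsim\delta(1+p\log\la^{-1})\la^{-\alpha}$ in which the effective dimension enters only logarithmically. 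Moreover this is not an artifact of that proposition: the Hilbert--Schmidt norm of the whitened deviation genuinely scales with $\cN_i(\la)$ rather than $\log\cN_i(\la)$, so no HS-based argument can certify $T\le 1/2$ under the stated $n$; one needs an operator-norm Bernstein inequality with an intrinsic-dimension (logarithmic) factor, precisely Lemma~17 of Fischer--Steinwart, which the paper uses verbatim in Proposition~\ref{prop:bias_exp_r_0_5} and which is built into their Theorem~16. The difference matters downstream: in regime B.1 of Corollary~\ref{cor:learning_rate} the theorem is applied with $\la\asymp(\log^{\omega}(nN)/n)^{1/\alpha}$, i.e.\ with $n$ only barely exceeding $\la^{-\alpha}$, where $n\gtrsim\delta\la^{-(\alpha+p)}$ fails. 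Replace Proposition~\ref{prop:general_cov_concentration} by such an operator-norm inequality at this single step and the rest of your argument goes through.
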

\begin{proof} Define the following two terms for $i \in [N]$:
\begin{align*}
q_{i,\la} &\doteq \log\left(2e\cN_i(\la)\frac{\|\Sigma_i\| + \la}{\|\Sigma_i\|}\right) \\
A_{i,\la,\delta} &\doteq 8k_{\al,\infty}^2 \delta q_{i,\la} \la^{-\alpha}
\end{align*}
Apply Theorem~$16$ in \cite{fischer2020sobolev} (by letting $\gamma = 1$ in their result), for $\tau \geq 1$, with probability over $1-4e^{-\delta}$ for $n \geq A_{i,\lambda,\delta}$, 
\[\left\|\hat f_{i,\lambda} - f_{i,\lambda}\right\|_{\cH} \leq \frac{24 \delta}{\sqrt{n} \lambda^{1/2}}\sqrt{Y_{\infty}^2\mathcal{N}_{i}(\lambda)+k_{\al,\infty}^2 \frac{\left\|f_i-f_{i, \lambda}\right\|_{L_2}^2}{\lambda^{\alpha}}+2k_{\al,\infty}^2 \frac{\max\{Y_{\infty},\|f_i-f_{i,\lambda}\|_{\infty}\}^2}{n \lambda^\alpha}}.\] 
By Lemma~\ref{lma:eff_dim_exp}, under Assumption~\ref{asst:evd}, we have $\mathcal{N}_{i}(\lambda) \leq D\lambda^{-p}$; and by Proposition~\ref{prop:bias_gamma_norm}, under Assumption~\ref{asst:src}, we have $\left\|f_i-f_{i, \lambda}\right\|_{L_2}^2 \leq R^2\lambda^{2r+1}$ (plug $\omega=1/2$ in Proposition~\ref{prop:bias_gamma_norm}). In addition, by Assumption~\ref{asst:emb}, we have
\begin{align*}
\left\|f_i - f_{i,\lambda}\right\|_{\infty} &\leq k_{\alpha,\infty} \left\|f_i - f_{i,\lambda}\right\|_{\cH} \\
&\leq k_{\alpha,\infty}R\lambda^{r} \leq  k_{\alpha,\infty}R,
\end{align*}
where the second last step follows from Proposition~\ref{prop:bias_gamma_norm} with $\omega=0$ and the last step uses $\la \leq 1$. As such, denote $Y_R \doteq \max\{Y_{\infty},  k_{\alpha,\infty}R\}$ we have 
\begin{align*}
\left\|\hat f_{i,\lambda} - f_{i,\lambda}\right\|_{\cH} &\leq \frac{24 \delta}{\sqrt{n} \lambda^{1/2}}\sqrt{Y_{\infty}^2D \lambda^{-p}+k_{\al,\infty}^2 R^2\lambda^{2r+1-\alpha}+\frac{2k_{\al,\infty}^2Y_R^2}{n \lambda^\alpha}} \\ &\leq \frac{b_0\delta}{\sqrt{n} \lambda^{\frac{1}{2}+\frac{p}{2}}}\sqrt{1+\lambda^{2r+p+1-\alpha}+\frac{1}{n \lambda^{\alpha-p}}},
\end{align*}
where $b_0$ is a constant that depends on $Y_{\infty}, k_{\alpha,\infty}, D$ and $R$. We obtain the desired bound by noticing that $\alpha \leq 1 \leq 2r+p+1$ and $\lambda \leq 1$.\\
Next, let us simplify the constraint $n \geq A_{i,\lambda,\delta}$. Let us fix some lower bound $0< c \leq 1$ with $c \leq \min_{i \in [N]}\left\|\Sigma_i\right\|$. $\la$ will be chosen as a function of $(n,N)$ or $n$ only with the property that $\la \to 0$ when $n \to \infty$. We choose an index bound $n_0 \geq 1$ such that $\lambda \leq c \leq \min \left\{1,\min_{i \in [N]}\left\|\Sigma_i\right\|\right\}$ for all $n \geq n_0$. Using the definition $q_{i,\lambda}$, $\lambda \leq c \leq \min_{i \in N}\left\|\Sigma_i\right\|$, $\mathcal{N}_i\left(\lambda\right) \leq D \lambda^{-p}$ from Lemma~\ref{lma:eff_dim_exp}, we get, for $n \geq n_0$,
$$
\begin{aligned}
8k_{\al,\infty}^2 \delta \frac{q_{i,\lambda}}{\lambda^\alpha} 
& =8k_{\al,\infty}^2 \delta \frac{\log \left(2 e \mathcal{N}_i\left(\lambda\right)\left(1+\lambda /\left\|\Sigma_i\right\|\right)\right)}{\lambda^\alpha} \\
& \leq 8k_{\al,\infty}^2 \delta \frac{\log \left(4 e D \lambda^{-p}\right)}{\lambda^\alpha} \\
& =8k_{\al,\infty}^2 \delta\left(\log (4 e D) + p\log (\la^{-1})\right) \lambda^{-\alpha}.
\end{aligned}
$$
For the second bound, since $\Sigma_i$ has finite rank $k$, let us show that Assumption~\ref{asst:emb} holds with $\alpha=0$. We have, 
\begin{align*}
\sum_{j=1}^d \lambda_{i,j}^{\alpha} e^2_{i,j}(x) = \sum_{j=1}^d e^2_{i,j}(x) = \sum_{j=1}^d \langle e_{i,j}, \phi(x) \rangle_{\cH}^2 \leq  \sum_{j=1}^d \lambda_{i,j}^{-1}\|\sqrt{\lambda_{i,j}}e_{i,j}\|_{\cH}^2 \|\phi(x) \|_{\cH}^2 \leq \frac{\kappa^2}{\lambda_{i,k}} k,
\end{align*}
where we used $\|\sqrt{\lambda_{i,j}}e_{i,j}\|_{\cH}=1$, and $\|\phi(x) \|_{\cH}^2 = K(x,x) \leq \kappa^2$. As such, the constant for Assumption~\ref{asst:emb} can be taken as $k_{0,\infty}^2 = k\kappa^2/\lambda_{i,k}$. 
Apply Theorem~$16$ in \cite{fischer2020sobolev} (by letting $\gamma = 0$ this time) and let $\alpha =0$, with probability over $1-4e^{-\delta}$ for $n \geq A_{i,\lambda,\delta}$,
\begin{align*}
\left\|\hat f_{i,\lambda} - f_{i,\lambda}\right\|_{L_2}  \leq \frac{24 \delta}{\sqrt{n}}\sqrt{Y_{\infty}^2\mathcal{N}_{i}(\lambda)+k_{0,\infty}^2 \left\|f_i-f_{i, \lambda}\right\|_{L_2}^2 +2k_{0,\infty}^2 \frac{\max\{Y_{\infty},\|f_i-f_{i,\lambda}\|_{\infty}\}^2}{n}}.
\end{align*}
We have $\cN_i(\lambda) =\text{Tr}\left( \Sigma_i \Sigma_{i,\lambda}^{-1}\right) = \sum_{j=1}^k \frac{\lambda_{i,j}}{\lambda_{i,j}+\lambda} \leq k$. Furthermore, since $K$ is bounded by $\kappa^2$, we have 
\begin{align*}
\left\|f_i - f_{i,\lambda}\right\|_{\infty} &\leq& \kappa \left\|f_i - f_{i,\lambda}\right\|_{\cH} = \kappa\left\| \Sigma_i^{1/2} (f_{i} - f_{i,\lambda})\right\|_{L_2} \leq \kappa\sqrt{\lambda_{i,1}}\left\|f_{i} - f_{i,\lambda}\right\|_{L_2}.
\end{align*}
Therefore,
\begin{align*}
\left\|\hat f_{i,\lambda} - f_{i,\lambda}\right\|_{L_2}  \leq \frac{24 \delta\sqrt{k}}{\sqrt{n}}\sqrt{Y_{\infty}^2+\frac{\kappa^2}{\lambda_{i,k}}\left\|f_i-f_{i, \lambda}\right\|_{L_2}^2 + 2 \frac{\kappa^2}{\lambda_{i,k}} \frac{\max\{Y_{\infty},\kappa\sqrt{\lambda_{i,1}}\left\|f_{i} - f_{i,\lambda}\right\|_{L_2}\}^2}{n}}.
\end{align*}
Finally, we obtain the result using $\lambda_{i,k}^{-1}\leq (\min_{i \in [N]}\{\lambda_{i,k}\})^{-1}$, $\lambda_{i,1} \leq\max_{i \in [N]}\{\lambda_{i,1}\}$ and (since $\la \leq 1$) $\left\|f_i-f_{i, \lambda}\right\|_{L_2} \leq R\lambda^{r+1/2} \leq R$. \\
Note that plugging $\alpha=0$ and $k_{0,\infty}^2 = k\kappa^2/\lambda_{i,k}$ in $A_{i,\la,\delta}$ gives the constraint $n \geq 8\delta k\kappa^2q_{i,\la}/\lambda_{i,k}$. We can similarly simplify it to $n \geq 8\delta k\kappa^2 \left(\log (4 e D) + p\log (\la^{-1})\right)(\min_{i \in [N]}\{\lambda_{i,k}\})^{-1}$ for $0 < \lambda <\min \left\{1,\min_{i \in [N]}\left\|\Sigma_i\right\|\right\}$.
\end{proof}
The next results provide bounds on the bias $f_{i} -\E(\hat{f}_{i,\la})$ in kernel ridge regression.
\begin{prop} \label{prop:bias_exp}
Let $\hat f_{i,\lambda}$ be the solution from Eq.~\eqref{eq:def_source_estim}. Suppose Assumptions~\ref{asst:evd} and \ref{asst:src} hold with $0 < p \leq 1$ and $r \in [0,1]$. We have, for any $\la \in (0,1]$, $\delta \geq 1$, and $n \geq c_1\delta \lambda^{-p-1},$ 
\[
\|f_{i} -\E(\hat{f}_{i,\la})\|_{\cH} \leq c_2\lambda^{r}\left(1 + \frac{e^{-\delta}}{\lambda} \right)
\]
where $c_1$ is a constant depending only on $D$, $\kappa$, and $c_2$ is a constant depending only on $R$, $\kappa$.
\end{prop}
\begin{proof}
By Lemma~\ref{lma:lemma_1} we have 
\begin{align*}
\left\|f_i -\E(\hat{f}_{i,\lambda})\right\|_{\cH} &= \lambda \|\E(\hat \Sigma_{i,\lambda}^{-1})f_i\|_{\cH}\\
& \leq \lambda \mathbb{E}\left\|\hat{\Sigma}_{i,\lambda}^{-1} \Sigma_{i,\lambda} \Sigma_{i,\lambda}^{-1}f_i \right\|_{\mathcal{H}}\\
& \leq  \lambda \mathbb{E}\left\|\hat{\Sigma}_{i,\lambda}^{-1} \Sigma_{i,\lambda} \right\| \left\|\Sigma_{i,\lambda}^{-1}f_i \right\|_{\mathcal{H}}\\
& = \lambda \mathbb{E}\left\|\hat{\Sigma}_{i,\lambda}^{-1} \Sigma_{i,\lambda} \right\| \left\|\Sigma_{i,\lambda}^{r-1} \Sigma_{i,\lambda}^{-r} \Sigma_i^r \Sigma_i^{-r}f_i \right\|_{\mathcal{H}},\\
& \leq  R\lambda^r \mathbb{E}\left\|\Sigma_{i,\lambda}\hat{\Sigma}_{i,\lambda}^{-1} \right\|, 
\end{align*}
To bound $\mathbb{E}\left\|\Sigma_{i,\lambda}\hat{\Sigma}_{i,\lambda}^{-1} \right\|$, we notice that 
\[
\hat \Sigma_{i,\lambda} = (\Sigma_i + \lambda I - (\Sigma_i -\hat \Sigma_i)) = \left( I - (\Sigma_i -\hat \Sigma_i)\Sigma_{i,\lambda}^{-1}\right)\Sigma_{i,\lambda}.
\]
As a result, we have, 
$$
\mathbb{E}\left\|\Sigma_{i,\lambda}\hat{\Sigma}_{i,\lambda}^{-1} \right\| = \E \left\|\left( I - (\Sigma_i -\hat \Sigma_i)\Sigma_{i,\lambda}^{-1}\right)^{-1} \right\| \leq \E \sum_{k \geq 0} \|(\Sigma_i -\hat \Sigma_i)\Sigma_{i,\lambda}^{-1}\|^k,
$$
where in the second step we used Neumann series, provided $\|(\Sigma_i -\hat \Sigma_i)\Sigma_{i,\lambda}^{-1}\|  < 1$. We are left to bound $\|(\Sigma_i -\hat \Sigma_i)\Sigma_{i,\lambda}^{-1}\|$. By Proposition 5.3 \cite{blanchard2018optimal}, for any $n \geq 1$, $\lambda \in (0,1]$ and $\delta \geq \log(2)$, it holds with probability at least $1 - 2e^{-\delta}$,
$$ 
\left\| (\Sigma_i -\hat \Sigma_i)\Sigma_{i,\lambda}^{-1}\right\| \leq \left\| (\Sigma_i -\hat \Sigma_i)\Sigma_{i,\lambda}^{-1}\right\|_{HS} \leq \sqrt{\frac{8\delta \kappa^2 \cN(\lambda)}{n\lambda}} + \frac{4\delta \kappa}{n\lambda} \leq c_0\left(\sqrt{\frac{\delta }{n\lambda^{1+p}}} + \frac{\delta}{n\lambda}\right), 
$$
where we used Lemma~\ref{lma:eff_dim_exp} and $c_0$ depends on $\kappa, D$.
Note that on one hand, 
$$
c_0\sqrt{\frac{\delta }{n\lambda^{1+p}}} \leq 1/3 \iff n \geq  \frac{9c_0^2 \delta}{\lambda^{p+1}}
$$
and on the other hand,
$$
\frac{\delta c_0}{n\lambda} \leq 1/3 \iff n \geq \frac{3\delta c_0}{\lambda}.
$$
Note that $p+1 \geq 1$, therefore, since  $\lambda \leq 1$, for $
n \geq \delta \max(9c_0^2, 3c_0)\lambda^{-p-1},$ we have with probability over $1-2e^{-\delta}$ 
$$
\left\| (\Sigma_i -\hat \Sigma_i)\Sigma_{i,\lambda}^{-1}\right\| \leq 1/3 + 1/3 = 2/3.
$$
This further implies that with probability over $1-2e^{-\delta}$, 
$$
\left\|\Sigma_{i,\lambda} \hat \Sigma_{i,\lambda}^{-1} \right\| \leq \sum_k \|(\Sigma_i -\hat \Sigma_i)\Sigma_{i,\lambda}^{-1}\|^k \leq \sum_k \left(\frac{2}{3}\right)^k = 3. 
$$
Moreover, we have
\[
    \left\|\Sigma_{i,\lambda} \hat \Sigma_{i,\lambda}^{-1} \right\| \leq \left\|\Sigma_{i,\lambda}\right\| \left\| \hat \Sigma_{i,\lambda}^{-1} \right\|\leq \frac{\kappa^2+1}{\lambda}.
\]
As such, we have 
\[
\begin{aligned}
    \E\left\|\Sigma_{i,\lambda} \hat \Sigma_{i,\lambda}^{-1} \right\| &= \E \left(\left\|\Sigma_{i,\lambda} \hat \Sigma_{i,\lambda}^{-1} \right\| \mid \left\|\Sigma_{i,\lambda} \hat \Sigma_{i,\lambda}^{-1} \right\| \leq 3\right) \P(\left\|\Sigma_{i,\lambda} \hat \Sigma_{i,\lambda}^{-1} \right\| \leq 3) \\ &+  \E \left(\left\|\Sigma_{i,\lambda} \hat \Sigma_{i,\lambda}^{-1} \right\| \mid \left\|\Sigma_{i,\lambda} \hat \Sigma_{i,\lambda}^{-1} \right\| \geq  3\right)\P(\left\|\Sigma_{i,\lambda} \hat \Sigma_{i,\lambda}^{-1} \right\| \geq 3)\\
    &\leq  3 + 2\frac{\kappa^2+1}{\lambda} e^{-\delta}.
\end{aligned}
\]
\end{proof}
\begin{prop} \label{prop:bias_exp_r_0_5}
Let $\hat f_{i,\lambda}$ be the solution from Eq.~\eqref{eq:def_source_estim}. Suppose Assumptions~\ref{asst:evd}, \ref{asst:emb}, \ref{asst:src} hold with $0 < p \leq \alpha \leq 1$ and $r \in (0,1]$. We have, for any 
$ \delta \geq 1$, $\lambda < 1 \wedge \min_{i \in [N]}\left\|\Sigma_i\right\|$, and $n \geq c_0\delta\left(1 + p\log (\la^{-1})\right) \lambda^{-\alpha}$,
\[
\|f_{i} -\E(\hat{f}_{i,\la})\|_{\cH} \leq c_1 \left(e^{-\delta} + \la^{r \wedge 1/2}\right),
\]
where $c_0$ depends on $k_{\al,\infty},D$ and $c_1$ depends on $R, \kappa$.
\end{prop}
\begin{proof}
By Lemma~\ref{lma:lemma_1}, for all $i \in [N]$,
$$
f_i - \mathbb{E}[\hat f_{i,\lambda}] = \la \E\left[\hat \Sigma_{i,\la}^{-1}\right]f_i.
$$
Notice that we have 
$$
\|\E\left[\hat \Sigma_{i,\la}^{-1}\right]\| \leq \la^{-1}.
$$
Furthermore by Proposition~\ref{prop:op_trick}, we have 
$$
\hat \Sigma_{i,\la}^{-1} = \Sigma_{i,\la}^{-1/2}\hat \beta_{i,\la} \Sigma_{i,\la}^{-1/2},
$$
where $\hat \beta_{i,\la} \doteq \left(I - \Sigma_{i,\la}^{-1/2}(\Sigma_i -\hat \Sigma_i)\Sigma_{i,\la}^{-1/2}  \right)^{-1}$. By Lemma 17 \cite{fischer2020sobolev}, for all $i \in [N]$, $\delta \geq 1$, $\la > 0$ and $n\geq 1$ with probability at least $1-2e^{-\delta}$
$$
\left\|\Sigma_{i,\la}^{-1/2}(\Sigma_i -\hat \Sigma_i)\Sigma_{i,\la}^{-1/2}\right\| \leq \frac{4}{3} \cdot \frac{\delta k^2_{\al, \infty} q_{i,\la}}{n\la^{\al}}+\sqrt{2 \cdot \frac{\delta k^2_{\al, \infty} q_{i,,\la}}{n\la^{\al}}},
$$
with 
$$
q_{i,\la} \doteq \log\left(2e\cN_i(\la)\frac{\|\Sigma_i\| + \la}{\|\Sigma_i\|}\right).
$$
Therefore, if 
$n \geq \max_{i \in [N]}8k_{\al,\infty}^2 \delta q_{i,\la} \la^{-\alpha}$, with probability at least $1-2e^{-\delta}$
$$
\left\|\Sigma_{i,\la}^{-1/2}(\Sigma_i -\hat \Sigma_i)\Sigma_{i,\la}^{-1/2}\right\| \leq \frac{4}{3} \cdot \frac{1}{8}+\sqrt{2 \cdot \frac{1}{8}}=\frac{2}{3}.
$$
Consequently, $\hat \beta_{i,\la}$ can be represented by the Neumann series. In particular, the Neumann series gives us the following bound
\begin{equation*} 
\left\|\hat \beta_{i,\la}\right\|
\leq \sum_{k=0}^{\infty}\left\|\Sigma_{i,\la}^{-1/2}(\Sigma_i -\hat \Sigma_i)\Sigma_{i,\la}^{-1/2} \right\|^k
\leq \sum_{k=0}^{\infty}\left(2/3\right)^k = 3,
\end{equation*}
with probability not less than $1 - 2e^{-\delta}$. Let us define the event $E_{i,\la} \doteq \{ \|\hat \beta_{i,\la}\| \leq 3 \}$ and assume $n \geq 8k_{\al,\infty}^2 \delta q_{i,\la} \la^{-\alpha}$, we have 
$$
\| f_i - \mathbb{E}[\hat f_{i,\lambda}] \|_{\cH} \leq \la \left(\left\|\E\left[\hat \Sigma_{i,\la}^{-1}\mathbbm{1}_{E_{i,\la}}\right]f_i\right\|_{\cH} + \left\|\E\left[\hat \Sigma_{i,\la}^{-1}\mathbbm{1}_{E_{i,\la}^c}\right]f_i\right\|_{\cH} \right).
$$
On one hand, 
$$
\la\left\|\E\left[\hat \Sigma_{i,\la}^{-1}\mathbbm{1}_{E_{i,\la}^c}\right]f_i\right\|_{\cH} \leq \la\E\left[\left\|\hat \Sigma_{i,\la}^{-1}\right\|\mathbbm{1}_{E_{i,\la}^c}\right]\left\|f_i\right\|_{\cH} \leq  \kappa R \P(E_{i,\la}^c) \leq 2\kappa Re^{-\delta}.
$$
On the other hand,
$$
\begin{aligned}
\la\left\|\E\left[\hat \Sigma_{i,\la}^{-1}\mathbbm{1}_{E_{i,\la}}\right]f_i\right\|_{\cH} &=  \la\left\|\Sigma_{i,\la}^{-1/2}\E\left[\hat \beta_{i,\la}\mathbbm{1}_{E_{i,\la}}\right] \Sigma_{i,\la}^{-1/2}f_i\right\|_{\cH} \\ &\leq \sqrt{\la}\left\|\E\left[\hat \beta_{i,\la}\mathbbm{1}_{E_{i,\la}}\right] \Sigma_{i,\la}^{-1/2}\Sigma_{i,\la}^{r}\Sigma_{i,\la}^{-r}f_i\right\|_{\cH} \\ &\leq \sqrt{\la}R\left\|\E\left[\hat \beta_{i,\la}\mathbbm{1}_{E_{i,\la}}\right] \Sigma_{i,\la}^{r-1/2}\right\| \\ &\leq R\sqrt{\la}\left( (\kappa+\la)^{1/2}\mathbbm{1}_{r \geq 1/2} + \la^{r - 1/2}\mathbbm{1}_{r < 1/2}\right)\E[\|\hat \beta_{i,\la}\|\mathbbm{1}_{E_{i,\la}}] \\ &\leq 3R(\kappa + 1)^{1/2}\la^{r \wedge 1/2},
\end{aligned}
$$
where we used $\la \leq 1$. Finally the constraint $n \geq 8k_{\al,\infty}^2 \delta q_{i,\la} \la^{-\alpha}$, can be simplified to $n \geq c_0\delta\left(1 + p\log (\la^{-1})\right) \lambda^{-\alpha}$, where $c_0$ depends on $k_{\al,\infty},D$ as in the proof of Theorem~\ref{prop:variance_regression}.
\end{proof}
\begin{prop}\label{prop:xi_as} For all $i \in [N]$ and $0<\la \leq 1$ it holds almost surely that
$$
\begin{aligned}
    \left\| \hat f'_{i,\lambda} \otimes \hat f_{i,\lambda} - \E (\hat f_{i,\lambda}) \otimes \E (\hat f_{i,\lambda}) \right\|_{HS}  &\leq c/\la  \\ 
    \left\| \hat f'_{i,\lambda} \otimes \hat f_{i,\lambda} - f_{i,\lambda} \otimes f_{i,\lambda} \right\|_{HS} &\leq c /\la,  
\end{aligned}
$$
with $c \doteq Y_{\infty}^2+\max_{i \in [N]}\left\| f_i \right\|_{\cH}^2$.
\end{prop}
\begin{proof}
By definition of $\hat{f}_{i,\la}$ in Eq.~\eqref{eq:def_source_estim}, we have
$$
\begin{aligned}
    \la \|\hat{f}_{i,\la}\|_{\cH}^2 &\leq  \frac{1}{n}\sum_{j=1}^n \left(y_{i,j} - \hat{f}_{i,\la}(x_{i,j}) \right)^2 + \lambda\|\hat{f}_{i,\la}\|^2_{\mathcal{H}} \\ &=\min_{f \in \cH} \frac{1}{n}\sum_{j=1}^n \left(y_{i,j} - f(x_{i,j}) \right)^2 + \lambda\|f\|^2_{\mathcal{H}} \\ &\leq \frac{1}{n}\sum_{j=1}^n y_{i,j}^2 \\ &\leq Y_{\infty}^2,
\end{aligned}
$$
where in the second inequality we plugged $f=0$ and in the last inequality we used the bound $|y_{i,j}| \leq Y_{\infty}$ for all $i \in [N]$ and $j \in [n]$. We therefore have $\|\hat{f}_{i,\la}\|_{\cH} \leq Y_{\infty}/\sqrt{\la}$ and the same bound holds for $\|\hat{f}_{i,\la}'\|_{\cH}$. Using Lemma~\ref{lma:lemma_1}, we then have 
\begin{align*}
\left\|\hat f'_{i,\lambda} \otimes \hat f_{i,\lambda} - \E (\hat f_{i,\lambda}) \otimes \E (\hat f_{i,\lambda}) \right\|_{HS} &\leq \left\|\hat f'_{i,\lambda} \right\|_{\cH}\left\|\hat f_{i,\lambda} \right\|_{\cH} + \left\| \E (\hat f_{i,\lambda}) \right\|_{\cH}^2  \\
& \leq \frac{Y_{\infty}^2}{\la} +  \E \left\|\hat f_{i,\lambda}\right\|_{\cH}^2  \\
&\leq \frac{Y_{\infty}^2}{\la} + \left\| f_i \right\|_{\cH}^2 \\
&\leq \frac{Y_{\infty}^2 + \left\| f_i \right\|_{\cH}^2}{\la},
\end{align*}
where the last inequality follows from $\lambda \in (0,1]$. Similarly, using Eq.~\eqref{eq:ridge_estimator_bound_1_r} we have
\[
\begin{aligned}
\left\|\hat f'_{i,\lambda} \otimes \hat f_{i,\lambda} - f_{i,\lambda} \otimes f_{i,\lambda} \right\|_{HS} &\leq  \left\|\hat f'_{i,\lambda} \right\|_{\cH}\left\|\hat f_{i,\lambda} \right\|_{\cH} + \left\|f_{i,\lambda} \right\|_{\cH}^2 \\
&\leq \frac{Y_{\infty}^2}{\la} + \left\| f_i \right\|_{\cH}^2 \\
&\leq \frac{Y_{\infty}^2 + \left\| f_i \right\|_{\cH}^2}{\la}.
\end{aligned}
\]
\end{proof}

\section{Concentration inequalities} \label{sec:concentration}

\begin{prop}\label{prop:bound_conversion}\label{prop:exp_bound_conversion}
Let $X$ be a random variable taking values in $\R_+$ such that
\begin{equation*}
    \P\left(X \geq t \right) \leq c\exp \left(-\frac{1}{2}\frac{t^2}{v^2+bt}\right)
\end{equation*}
$v,b>0$, $c \geq 1$, then for all $\tau \geq 0$
$$
X \leq v\sqrt{2\tau} + b2\tau,
$$
with probability at least $(1-ce^{-\tau})_+$, where $x_+ = \max(0,x)$, $x \in \R$. 
\end{prop}

\begin{proof}
Solving for $\tau = \frac{1}{2}\frac{t^2}{v^2+bt}$ we get as positive solution $t = \sqrt{2 \tau v^2 + \tau^2 b^2} + \tau b$. Observing that 
$$
t \leq v\sqrt{2\tau} + b2\tau,
$$
gives the  bound.
\end{proof}
The next proposition provides a high probability bound on the ``whitened'' difference between the population and empirical covariance on $\hat \cH_s$ in operator norm.

\begin{prop} \label{prop:hsu_bernstein_finite_dim}
For $\la_*>0$, $\tau \geq 2.6$ and $n_T \geq 1$, the following operator norm bound is satisfied with $\mu_T^{n_T}$-probability not less than $1-e^{-\tau}$
\begin{align}
\left\|\Sigma_{\hat P, \la_*}^{-1/2}\hat P(\Sigma_T - \hat{\Sigma}_T)\hat P\Sigma_{\hat P, \la_*}^{-1/2}\right\| &\leq \frac{2 \kappa^2(\tau+ \log(s))}{3 \la_*n_T}+\sqrt{\frac{4 \kappa^2(\tau+ \log(s))}{\la_* n_T}} \nonumber
\end{align}
conditionally on $\mathcal{D}_i = \{(x_{i,j}, y_{i,j})_{j=1}^{2n}\}, i \in [N]$.
\end{prop}

\begin{proof}
    We apply Lemma 24 from \cite{hsu2012random} to $\hat \cH_s$ and $\mu_T$. $\Delta_{\la}$, $n$ and $x$ in their setting corresponds to $\Sigma_{\hat P, \la_*}^{-1/2}\hat P(\Sigma_T - \hat{\Sigma}_T)\hat P\Sigma_{\hat P, \la_*}^{-1/2}$, $n_T$ and $\hat P \phi(X)$ in our setting. As $\hat \cH_s$ is finite-dimensional, their ``Condition 1'' is automatically satisfied and we have  
    $$
        \tilde{d}_{\la_*} \doteq \operatorname{Tr}(\Sigma_{\hat P,\la_*}^{-1}\Sigma_{\hat P}) = \sum_{j=1}^s \frac{\hat \la_j}{\hat \la_j + \la_*} \leq s,
    $$
    where $\{\hat \la_1, \ldots, \hat \la_s \}$ are the eigenvalues of $\Sigma_{\hat P}$. 
    For their ``Condition 2'', we have for all $x \in \cX$, since $\|\hat P\|\leq 1$ and the kernel is bounded,
    \begin{align}
    \left\|\Sigma_{\hat P, \la_*}^{-1/2}\hat{P}\phi(x)\right\|_{\mathcal{H}} \leq \left\|\Sigma_{\hat P, \la_*}^{-1/2}\right\|\|\hat{P}\|\|\phi(x)\|_{\mathcal{H}} \leq \frac{\kappa}{\sqrt{\la_*}}.   \nonumber
    \end{align}
    Hence we take $\rho_{\la_*} = \kappa(\tilde{d}_{\la_*}\la_*)^{-1/2}$ and we have $\rho_{\la_*}^2\tilde{d}_{\la_*} = \kappa^2/\la_*.$ We conclude by Lemma 24 \cite{hsu2012random}. 
\end{proof}
In the next two propositions we omit the index $i \in [N]$ as it applies for any task source task. Let $\cF, \cG$ be separable Hilbert spaces. Let $A: \cH \to \cF$, $B: \cH \to \cG$ be bounded operator. The next result provides a high probability bound on \[\|A\left(\Sigma - \hat \Sigma\right)B^*\|_{HS}.\]

\begin{prop}
    \label{prop:general_cov_concentration} Let $C_A, C_B, \sigma > 0$ be constants such that $\E\|A\phi(X)\|_{\cF}\|B\phi(X)\|_{\cG}\leq \sigma^2$, $\|A\phi(X)\|_{\cF} \leq C_A$ and $\|B\phi(X)\|_{\cG} \leq C_B$ almost surely, then 
    \begin{equation*}
        \P\left(\|A\left(\Sigma - \hat \Sigma\right)B^*\|_{HS} \geq t \right) \leq 2 \exp \left(-\frac{t^2n}{4C_AC_B(2\sigma^2 + t)}\right).
    \end{equation*}
    Alternatively, we get for all $\tau \geq 0$, with probability at least $(1-2e^{-\tau})_+$,
    \begin{equation*}
        \|A\left(\Sigma - \hat \Sigma\right)B^*\|_{HS} \leq \sqrt{\frac{8\tau \sigma^2 C_AC_B}{n}} + \frac{4\tau C_AC_B}{n},
    \end{equation*}
    where $C$ is a universal constant.
\end{prop} 
\begin{proof}
$$
A\left(\Sigma - \hat \Sigma\right)B^* = \frac{1}{n} \sum_{j=1}^n h_{x_j} \otimes l_{x_j}  - \E[h_{X} \otimes l_{X}],
$$
where $h_{x} \doteq A\phi(x)$ and $l_{x} \doteq B\phi(x)$. One one hand, almost surely
$$
\|h_{X} \otimes l_{X} \| = \|h_{X}\|_{\cF}\|l_{X}\|_{\cG} \leq C_AC_B.
$$
On the other hand,
$$
\E\|h_{X} \otimes l_{X} \|^2 \leq C_AC_B\E\|h_{X}\|_{\cF}\|l_{X}\|_{\cG} \leq \sigma^2 C_AC_B.
$$
Hence exploiting Corollary~\ref{corr:pinelis_3} in the Hilbert space $HS$ with $b = C_AC_B$ and $v^2= \sigma^2 C_AC_B$, we get 
\begin{equation}
\P\left(\|A\left(\Sigma - \hat \Sigma\right)B^*\|_{HS} \geq t \right) \leq 2 \exp \left(-\frac{t^2n}{4C_AC_B(2\sigma^2 + t)}\right).\nonumber
\end{equation}
Then by Proposition \ref{prop:bound_conversion}, for all $\tau \geq 0$, with probability at least $(1-2e^{-\tau})_+$,
\[
\left\|A\left(\Sigma - \hat \Sigma\right)B^*\right\|_{HS} \leq \sqrt{\frac{8\tau C_AC_B\sigma^2}{n}} + \frac{4\tau C_AC_B}{n}.
\]
\end{proof}
As a special case we obtain a high probability bound on the ``whitened'' difference between the population and empirical covariance on $\cH$ in Hilbert-Schmidt norm.

The following bound is a Bernstein-like concentration inequality for Hilbert space-valued random variables. It can be deduced from Corollary 1 \cite{pinelis1986remarks}.

\begin{theorem}\label{th:pinelis}
Let $H$ be a separable Hilbert space and $X_1, \ldots, X_n$ be independent random variables with values in $H$. If for some constants $v,b > 0$,
for all $j \in [n]$
\begin{equation*}
\E\left\|X_j - \E[X_j]\right\|^m_H \leq \frac{1}{2}m! v^2 b^{m-2}, \quad m=2,3, \cdots
\end{equation*}
Then 
\begin{equation*}
    \P\left(\left\|\frac{1}{n} \sum_{j=1}^n X_j - \E[X_j]\right\| \geq t \right) \leq 2 \exp \left(-\frac{t^2n}{2v^2+2bt}\right)
\end{equation*}
\end{theorem}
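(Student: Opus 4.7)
My plan is to derive this Hilbert-space Bernstein inequality by reducing it to Pinelis' classical exponential bound for sums of independent random variables in $2$-smooth separable Banach spaces. Setting $Y_j \doteq X_j - \E[X_j]$, the $Y_j$ are independent, zero-mean, $H$-valued variables satisfying $\E\|Y_j\|^m_H \leq \tfrac{1}{2} m!\, v^2 b^{m-2}$ for every integer $m \geq 2$. A separable Hilbert space is $2$-smooth with constant $1$: the parallelogram identity $\|x+y\|^2 + \|x-y\|^2 = 2\|x\|^2 + 2\|y\|^2$ (equivalently, $\E\|S_n\|^2 = \sum_j \E\|Y_j\|^2$ for independent centered summands) is the structural ingredient enabling exponential moment control of $S_k \doteq \sum_{j=1}^k Y_j$.

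Next, I would invoke Corollary 1 of \cite{pinelis1986remarks}, which for sums of independent zero-mean random variables in a separable $2$-smooth Banach space yields, under the displayed Bernstein moment condition, the tail bound
\[
\P\bigl(\|S_n\| \geq s\bigr) \leq 2 \exp\!\left(-\frac{s^2}{2V^2 + 2bs}\right),
\]
where $V^2 \doteq \sum_{j=1}^n \E\|Y_j\|_H^2$ acts as a collective variance proxy. The proof in Pinelis proceeds by bounding the conditional Laplace transform $\E[\exp(\lambda \|S_k\|)\mid \mathcal{F}_{k-1}]$ via the $2$-smoothness inequality, iterating to control $\E\exp(\lambda\|S_n\|)$, and optimizing over $\lambda > 0$ using the usual Bennett/Bernstein manipulation familiar from the scalar case. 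The moment-growth hypothesis $\E\|Y_j\|^m \leq \tfrac{1}{2} m!\, v^2 b^{m-2}$ ensures that the Taylor series defining the exponential moment converges term-by-term and yields precisely the stated denominator structure.

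The final step is a direct rescaling. The $m=2$ moment bound gives $V^2 \leq n v^2$, so applying Pinelis' bound with $s = nt$ produces
\[
\P\!\left(\|S_n\| \geq nt\right) \leq 2 \exp\!\left(-\frac{n^2 t^2}{2 n v^2 + 2 n b t}\right) = 2 \exp\!\left(-\frac{n t^2}{2 v^2 + 2 b t}\right),
\]
which is the desired inequality after noting that $\|S_n\| \geq nt$ is the same event as $\|S_n/n\| \geq t$. The only delicate point I foresee is matching numerical constants: different variants of Pinelis' result appear in the literature with denominators such as $v^2 + bt/3$ versus $2v^2 + 2bt$, and the main bookkeeping obstacle is selecting the precise form of Corollary 1 that yields the factor of $2$ in both terms, which in turn stems from how the moment series is summed after the $2$-smoothness step. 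Beyond this constant-tracking, no genuinely nontrivial step is required, since the classical scalar Bernstein argument transfers verbatim once the Hilbert-space $2$-smoothness replaces the trivial scalar identity.
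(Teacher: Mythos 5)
Your proposal is correct and follows essentially the same route as the paper, which proves this theorem simply by citing Corollary 1 of \cite{pinelis1986remarks}; your reduction (centering, noting that a separable Hilbert space is $2$-smooth with constant $1$, bounding $V^2 \leq nv^2$ via the $m=2$ moment condition, and rescaling $s = nt$) is exactly the deduction the paper leaves implicit. The constant-tracking caveat you raise is reasonable but resolves in your favor: the $2v^2 + 2bt$ denominator is the form that Pinelis' corollary delivers under the stated factorial moment condition.
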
 
\begin{corollary}\label{corr:pinelis_3}
Let $H$ be a separable Hilbert space and $X_1, \ldots, X_n$ be independent random variables with values in $H$. If for some constants $v, b > 0$, for all $j \in [n]$, $\|X_j\|_{H} \leq b$ almost surely and $\E\|X_j\|_{H}^2 \leq v^2$, 
\begin{equation*}
    \P\left(\left\|\frac{1}{n} \sum_{j=1}^n X_j - \E[X_j]\right\| \geq t \right) \leq 2 \exp \left(-\frac{t^2n}{8v^2+4bt}\right)
\end{equation*}
\end{corollary}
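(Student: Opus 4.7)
The plan is to deduce Corollary~\ref{corr:pinelis_3} directly from Theorem~\ref{th:pinelis} by centering the $X_j$ and verifying the Bernstein-type moment condition. Set $Y_j \doteq X_j - \E[X_j]$ so that $(Y_j)_{j\in[n]}$ are independent, centered, and $H$-valued, and note that $\frac{1}{n}\sum_j X_j - \E[X_j] = \frac{1}{n}\sum_j Y_j$.

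First I would establish the two ingredients needed to feed Theorem~\ref{th:pinelis}. By the triangle inequality and Jensen's inequality, $\|Y_j\|_H \leq \|X_j\|_H + \|\E[X_j]\|_H \leq 2b$ almost surely. For the second moment, using $\E\|Y_j\|_H^2 = \E\|X_j\|_H^2 - \|\E[X_j]\|_H^2 \leq v^2$, I get that the variance proxy is controlled by $v^2$. Then, for $m \geq 3$, the almost-sure bound on $\|Y_j\|_H$ gives
\begin{equation*}
\E\|Y_j\|_H^m \leq (2b)^{m-2}\,\E\|Y_j\|_H^2 \leq v^2 (2b)^{m-2}.
\end{equation*}

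Next, I would pick constants $\tilde v^2 \doteq 4v^2$ and $\tilde b \doteq 2b$ and check that the moment hypothesis of Theorem~\ref{th:pinelis} is satisfied with these constants. For $m=2$ one needs $\E\|Y_j\|_H^2 \leq \tfrac{1}{2}\cdot 2 \cdot \tilde v^2 = 4v^2$, which holds since $\E\|Y_j\|_H^2 \leq v^2$. For $m \geq 3$, one needs $v^2 (2b)^{m-2} \leq \tfrac{1}{2}m!\,\tilde v^2 \tilde b^{m-2} = 2 m!\,v^2 (2b)^{m-2}$, which reduces to $1 \leq 2m!$ and is trivially true. The slight slack (using $\tilde v^2 = 4v^2$ instead of $2v^2$) is what produces the $8v^2$ in the denominator of the corollary's exponent.

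Finally I would apply Theorem~\ref{th:pinelis} with these $\tilde v, \tilde b$, which gives
\begin{equation*}
\P\!\left(\Big\|\tfrac{1}{n}\sum_{j=1}^n Y_j\Big\|_H \geq t\right) \leq 2\exp\!\left(-\frac{t^2 n}{2\tilde v^2 + 2\tilde b t}\right) = 2\exp\!\left(-\frac{t^2 n}{8v^2 + 4bt}\right),
\end{equation*}
which is exactly the stated bound. There is no real obstacle here — the only subtlety is bookkeeping of constants to recover the precise form $8v^2 + 4bt$ in the denominator, which is why one uses the (slightly loose) choice $\tilde v^2 = 4v^2$ rather than the tighter $\tilde v^2 = 2v^2$ that is also admissible.
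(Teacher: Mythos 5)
Your proof is correct and takes essentially the same route as the paper: both verify the Bernstein moment hypothesis of Theorem~\ref{th:pinelis} for the centered variables with the rescaled constants $\tilde v^2 = 4v^2$, $\tilde b = 2b$, and then read off the bound. The only cosmetic difference is how the centered moments are controlled (you use $\|Y_j\|_H \le 2b$ a.s.\ together with the Hilbert-space identity $\E\|Y_j\|_H^2 = \E\|X_j\|_H^2 - \|\E X_j\|_H^2$, whereas the paper uses the convexity bound $\E\|X_j-\E X_j\|_H^m \le 2^m\,\E\|X_j\|_H^m$), and both yield the same constants.
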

\begin{proof}
For all $j \in [n]$
$$
\E\left\|X_j - \E[X_j]\right\|^m_H \leq 2^{m-1} \left(\E\left\|X_j\right\|^m_H + \left\|\E[X_j]\right\|^m_H  \right) \leq 2^{m} \E\left\|X_j\right\|^m_H \leq \frac{1}{2}m! (2v)^2 (2b)^{m-2},
$$
hence using Theorem~\ref{th:pinelis} with $v^2 = 4v^2$ and $b = 2b$ gives the result.
\end{proof}

\begin{theorem}[Bounded concentration in Hilbert spaces] \label{th:hoeffding_hilbert}
    Suppose that $(X_i)_{i=1}^n$ are zero-mean independent random variables with values in a Hilbert space $(H,\langle \cdot,\cdot \rangle)$ and such that $\max_{i=1,\ldots,n}\|X_i \| \leq C < \infty$. Then for all $t \geq 0$,
    \begin{equation*} 
    \P\left(\left\|\frac{1}{n} \sum_{i=1}^n X_i\right\| \geq t \right) \leq 2e^{-\frac{nt^2}{2C^2}}.
    \end{equation*}
\end{theorem}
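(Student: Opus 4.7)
The plan is to recognize the stated inequality as the Hilbert-space Azuma--Hoeffding bound of Pinelis, a sharpening of the Bernstein-type inequality of Pinelis (1986) already invoked as Theorem~\ref{th:pinelis} earlier in the appendix. No task-specific structure from the rest of the paper is needed; the argument is a purely abstract martingale exponential-moment calculation.

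First, I will turn the sum into a martingale. Setting $\mathcal{F}_0 = \{\emptyset,\Omega\}$ and $\mathcal{F}_k = \sigma(X_1,\dots,X_k)$, the partial sums $M_k \doteq \sum_{i=1}^k X_i$ form an $H$-valued $(\mathcal{F}_k)$-martingale, because the $X_i$ are independent and zero-mean. The increments $\Delta_k = M_k - M_{k-1} = X_k$ satisfy $\|\Delta_k\| \leq C$ almost surely by hypothesis.

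Second, I will apply Pinelis' maximal deviation inequality for martingales in $2$-smooth Banach spaces; every Hilbert space is $2$-smooth with constant $1$, so his bound specializes to: for any Hilbert-space-valued martingale with increments bounded by $\|\Delta_k\| \leq b_k$ almost surely,
\begin{equation*}
\P\!\left(\max_{k \leq n} \|M_k\| \geq u \right) \;\leq\; 2\exp\!\left(-\frac{u^2}{2\sum_{k=1}^n b_k^2}\right).
\end{equation*}
Choosing $b_k = C$ for every $k$ and substituting $u = nt$ yields $\P(\|M_n\| \geq nt) \leq 2\exp(-n t^2/(2C^2))$, which is exactly the claimed bound after dividing inside the norm by $n$.

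The only subtlety worth flagging, and the reason one cannot just re-use Theorem~\ref{th:pinelis}, is that the Bernstein version there carries an extra $bt$ term in the denominator of the exponent, whereas the statement asks for the cleaner Hoeffding-type exponent $nt^2/(2C^2)$ without any linear correction. Pinelis' $2$-smooth refinement avoids this correction by controlling the conditional exponential moments $\E[\cosh(\lambda \|M_k\|) \mid \mathcal{F}_{k-1}]$ directly via the parallelogram identity (equivalently, the $2$-smoothness of $\|\cdot\|^2$ in a Hilbert space), rather than through a generic Bernstein truncation. Once that tool is cited, no genuine difficulty remains; matching constants is routine.
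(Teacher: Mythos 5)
Your proposal is correct and matches the paper's own proof: both reduce the statement to Pinelis' maximal inequality for martingales in $(2,D)$-smooth Banach spaces (Theorem 3.5 of Pinelis, 1994), using that a Hilbert space is $(2,1)$-smooth by the parallelogram identity, forming the partial-sum martingale with increments bounded by $C$, and substituting $u=nt$. Your remark on why the Bernstein-type Theorem~\ref{th:pinelis} does not suffice is also exactly the right observation.
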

\begin{proof} 
    The inequality can be deduced from Theorem 3.5 \cite{pinelis1994optimum}. Their result applies to martingales $(Z_j)_{j \geq 0}$ of Bochner-integrable random vectors in a $(2,D)-$smooth separable Banach space $(\cX,\|\cdot\|)$. A Banach space $(\cX,\|\cdot\|)$ is $(2,D)-$smooth if for all $(x,y) \in \cX^2$,
    $$
        \|x+y\|^2 + \|x-y\|^2 \leq 2\|x\|^2 + 2D^2\|y\|^2
    $$
    In particular, any Hilbert space is $(2,1)-$smooth by the parallelogram identity. Theorem 3.5 \cite{pinelis1994optimum} states that if the increments of the martingale $(Z_j)_{j \geq 0}$ are such that $\sum_{j=1}^{\infty}\|Z_{j} - Z_{j-1}\|_{\infty}^2 \leq b_{*}^2 $ for some $b_* > 0$. Then for all $r \geq 0$,
    \begin{equation} \label{eq:D2_concentration}
        \P\left(\sup_{j \geq 0}\|Z_j\| \geq t \right) \leq 2\exp\left\{-\frac{t^2}{2D^2b_*^2}\right\}.
    \end{equation}
    Let us fix $n \geq 1$, and consider a sequence $(X_i)_{i=1}^n$ of zero-mean independent random variables with values in a Hilbert space $(H,\langle \cdot,\cdot \rangle)$ such that $\|X_i\|_{\infty} \leq C < \infty$  for all $i=1,\ldots,n$. Then $(Z_j)_{j \geq 0}$ such that 
    $$
    Z_0 = 0, \quad Z_j = \sum_{i=1}^j X_i \quad j=1,\ldots,n, \quad Z_j = Z_n \quad j>n,
    $$
    is a martingale on $H$ and its increments $d_j\doteq Z_{j} - Z_{j-1}$ satisfies 
    $$
    d_j = X_j \quad j=1,\ldots,n, \quad d_j = 0 \quad j>n,
    $$
    hence,
    $$
        \sum_{j=1}^{\infty}\|d_j\|_{\infty}^2 = \sum_{j=1}^{n}\|X_j\|_{\infty}^2 \leq nC^2.
    $$
    Therefore, applying Eq.~\eqref{eq:D2_concentration} to $(Z_j)_{j \geq 0}$ with $\cX = H$, $D=1$ and $b_*^2 = nC^2$ leads to, for all $t \geq 0$,
    \begin{equation*} 
        \P\left(\left\|\sum_{i=1}^n X_i\right\| \geq t \right) = \P\left(\left\|Z_n\right\| \geq t \right) \leq \P\left(\sup_{j \geq 0}\|Z_j\| \geq t \right) \leq 2\exp\left\{-\frac{t^2}{2nC^2}\right\}.
    \end{equation*}
    Rescaling by $1/n$ gives the final result. 
\end{proof}

\section{Additional Experimental Results} \label{sec:appendix_exp} ~

\begin{figure}[h]
    \centering
    \begin{minipage}[b]{0.45\textwidth}
        \centering
        \includegraphics[width=\textwidth]{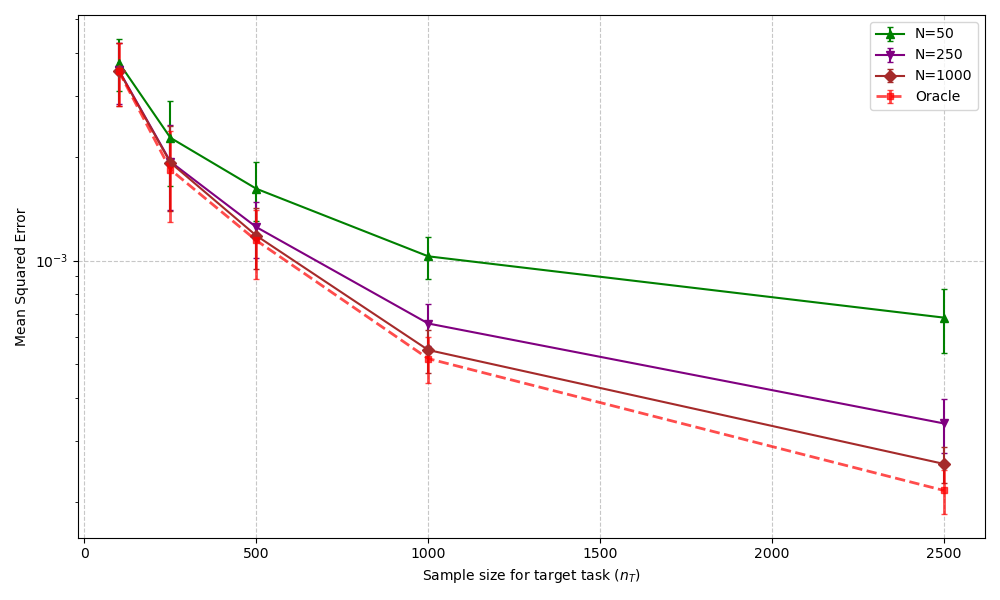}
    \end{minipage}
    \begin{minipage}[b]{0.45\textwidth}
        \centering
     \includegraphics[width=\textwidth]{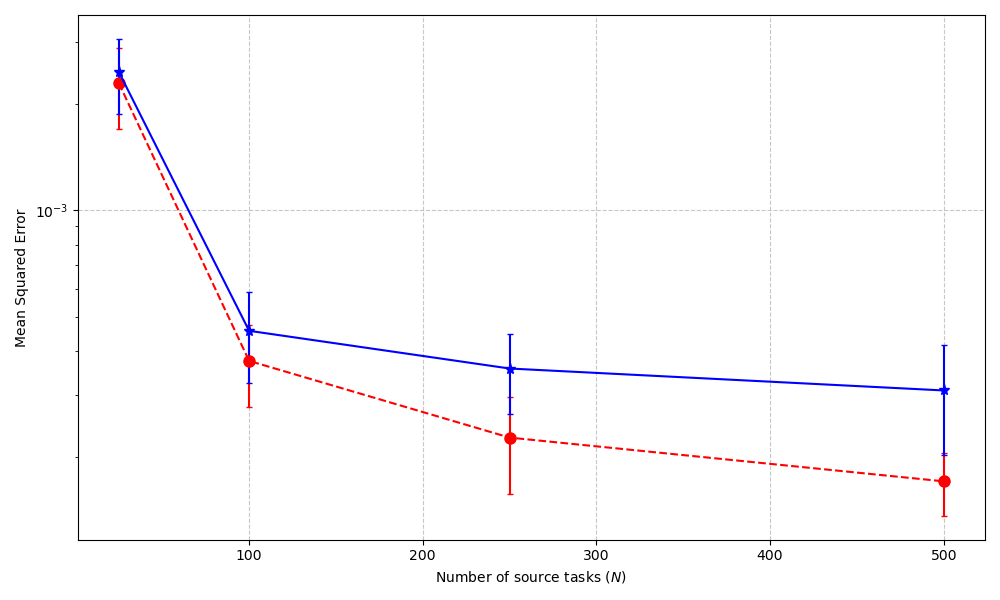}
    \end{minipage}
    \caption{\textbf{(Left)} Meta Learning versus Oracle: Comparison of the squared excess risk on the target task for the oracle estimator $\hat{f}_{\text{oracle}}$ (dotted red line) and the meta learning estimator $\hat{f}_{T,\lambda_*}$ trained with different number of tasks $N$ (solid lines). $x-$axis represents the size of the dataset for the target task $(n_T)$. \textbf{(Right)} Effect of under-regularization: Comparison of the squared excess risk of the meta learning estimator trained with $\lambda = (nN)^{-\frac{2}{5}}$ (red dotted line) and $\lambda = n^{-\frac{2}{5}}$ (blue solid line). $x-$axis represents the number of source tasks $(N)$. For both figures $n=500$, $s=50$ and results are averaged over $20$ generations of the source and target tasks.}
    \label{fig:experiment_additional}
\end{figure}

\end{document}